\theoremstyle{plain}
\newtheorem{theorem}{Theorem} %[section]
\newtheorem{lemma}[theorem]{Lemma}
\theoremstyle{definition}
\newtheorem{definition}[theorem]{Definition}
\newtheorem{assumption}[theorem]{Assumption}
\theoremstyle{remark}
\newtheorem{remark}[theorem]{Remark}
\newtheorem*{theorem*}{Theorem}
\def\Secref#1{Section~\ref{#1}}
\def\eqref#1{(\ref{#1})}
\def\1{\bm{1}}
\def\vzero{{\bm{0}}}
\def\vmu{{\bm{\mu}}}
\def\vtheta{{\bm{\theta}}}
\def\va{{\bm{a}}}
\def\vb{{\bm{b}}}
\def\vc{{\bm{c}}}
\def\vd{{\bm{d}}}
\def\ve{{\bm{e}}}
\def\vs{{\bm{s}}}
\def\vv{{\bm{v}}}
\def\vw{{\bm{w}}}
\def\vx{{\bm{x}}}
\def\vy{{\bm{y}}}
\def\vz{{\bm{z}}}
\def\vphi{{\bm{\phi}}}
\def\vepsilon{{\bm{\epsilon}}}
\def\vmu{{\bm{\mu}}}
\def\vsigma{{\bm{\sigma}}}
\def\mA{{\bm{A}}}
\def\mI{{\bm{I}}}
\def\mM{{\bm{M}}}
\def\mT{{\bm{T}}}
\DeclareMathAlphabet{\mathsfit}{\encodingdefault}{\sfdefault}{m}{sl}
\SetMathAlphabet{\mathsfit}{bold}{\encodingdefault}{\sfdefault}{bx}{n}
\def\gB{{\mathcal{B}}}
\def\gL{{\mathcal{L}}}
\def\gN{{\mathcal{N}}}
\def\gP{{\mathcal{P}}}
\def\sA{{\mathbb{A}}}
\def\sD{{\mathbb{D}}}
\def\sS{{\mathbb{S}}}
\def\cM{{\mathcal{M}}}
\def\cP{{\mathcal{P}}}
\newcommand{\E}{\mathbb{E}}
\newcommand{\R}{\mathbb{R}}
\newcommand{\abs}[1]{\left\vert#1\right\rvert}
\newcommand{\norm}[1]{\left\Vert#1\right\Vert}
\newcommand{\cbr}[1]{\left\{#1\right\}}
\newcommand{\br}[1]{\left(#1\right)}
\newcommand{\sbr}[1]{\left[#1\right]}
\newcommand{\given}{\,|\,}
\newcommand{\eg}{\textit{e.g.}}
\newcommand{\wrt}{\textit{w.r.t.}}
\title{
\bfseries A Behavior Regularized Implicit Policy for Offline \\ Reinforcement Learning
}
\author{
Shentao Yang$^{1,*,\dagger}$, Zhendong Wang$^{1,*}$, Huangjie Zheng$^1$, Yihao Feng$^2$, and Mingyuan Zhou$^{1,\dagger}$\\
\vspace{0mm}\\
$^1$The University of Texas at Austin, %The University of Texas at Austin \\
$^2$Salesforce Research %The University of Texas at Austin
}
\date{\vspace{-5ex}}
\begin{document}

\maketitle
\def\thefootnote{*}\footnotetext{Equal contribution.}
\def\thefootnote{$\dagger$}
\footnotetext{Correspondence: Shentao Yang \texttt{shentao.yang@mccombs.utexas.edu}, Mingyuan Zhou \texttt{mingyuan.zhou@mccombs.utexas.edu}.}
\def\thefootnote{\arabic{footnote}}

%%%%%%%%%%%%%%%%%%%%%%%%%%%%%%%%%%%%%%%%%%%%%%%%%%%%%%%%%%%%%%%
\begin{abstract}
Offline reinforcement learning enables learning from a fixed dataset, without further interactions with the environment. The lack of environmental interactions makes the policy training vulnerable to state-action pairs far from the training dataset and prone to missing rewarding actions. For training more effective agents, we propose a framework that supports learning a flexible yet well-regularized fully-implicit policy. We further propose a simple modification to the classical policy-matching methods for regularizing with respect to the dual form of the Jensen--Shannon divergence and the integral probability metrics. We theoretically show the correctness of the policy-matching approach, and the correctness and a good finite-sample property of our modification. An effective instantiation of our framework through the GAN structure is provided, together with techniques to explicitly smooth the state-action mapping for robust generalization beyond the static dataset. Extensive experiments and ablation study on the D4RL benchmark validate our framework and the effectiveness of our algorithmic designs.
\end{abstract}

\section{Introduction} \label{sec:intro}

% offline rl and why need policy constraint
Offline reinforcement learning (offline RL), also known as batch RL, aims at training agents from
% previously-collected 
fixed datasets that are typically large and heterogeneous, with a special emphasis on no environmental interactions during training \citep{treerl2005, batchrl2012, bcq2019, bear2019, brac2019, rem2020, abm2020, crr2020}.
This paradigm extends the applicability of RL to  where the environmental interactions are costly or even potentially dangerous, such as healthcare \citep{rllungcancer2017, rlhealth2018, whentotreat2019}, autonomous driving \citep{autodrivesurvey2020}, and recommendation systems \citep{operecom2017, offlineabtest2018}. 
While (online) off-policy RL algorithms \citep{ddpg2016,td32018,sac2018}
% , such as DDPG \citep{ddpg2016}, TD3 \citep{}, and SAC \citep{} 
could be directly adopted into offline settings, their application can be unsuccessful \citep{bcq2019, bear2019}, especially on high-dimensional continuous control tasks, where function approximations are inevitable and data samples are non-exhaustive. 
Such failures may be attributed to the discrepancy between the state-action visitation frequency induced by the current policy and that by the data-collecting behavior policy,
% , where unfamiliar state-action pairs are presented to the action-value estimator, 
which results in possibly uncontrollable extrapolation errors \citep{bcq2019, bear2019}.
In this regard, one approach to offline RL is to control the difference between the observed and policy-induced visitations, so that the current policy mostly generates state-action pairs that are close to the offline dataset.
% have reliable action-value estimate.

% previous policy constraint methods and their limitation
Previous work in this line of research typically 
\textbf{(1)} regularizes the current policy to be close to behavior policy during training, \textit{i.e.}, policy (state-conditional action distribution) matching;
\textbf{(2)} uses a Gaussian policy class with a learnable mean and diagonal covariance matrix \citep{bear2019,brac2019}. See Appendix~\ref{sec:related_work} for a detailed review.
However, at any given state $\vs$, the underlying action-value function may possess multiple local maxima over the action space.  
A deterministic or uni-modal stochastic policy may only capture one of the local optima and neglect lots of rewarding actions.
An even worse situation occurs when such stochastic policy exhibits a strong mode-covering behavior, artificially inflating the probability density around the average of multiple rewarding actions that itself may be inferior. 

%controls the learning of the policy.
Previous work under the policy-matching theme mainly takes two approaches.
The first approach, \textit{e.g.}, \citet{bear2019}, resorts to a two-step strategy: 
First, fit a generative model $\widehat \pi_b(\va \given \vs)$ to clone the behavior policy; 
Second, estimate the distance between the fitted behavior policy $\widehat \pi_b(\va \given \vs)$ and the current policy, and minimize that distance as a way to regularize.
% While this approach, which can acquire multiple samples from the cloned behavior policy, is able to...
While this approach is able to accurately estimate the distance between the current policy and the cloned behavior, its success relies heavily on how well the inferred behavior-cloning generative model mimics the true behavior policy.
On tasks with large or continuous state space or on datasets collected by a mixture of policies, however, accurately estimating the behavior policy is known to be hard \citep{cql2020}. % the same problem arises that each state-conditional action distribution is fitted on only one data point.
In particular, some prior work uses conditional VAE (CVAE, \citet{cvae2015}) 
% as the generative model 
to clone the possibly-multimodal behavior policy, which further suffers to the problem that CVAE may exhibit a strong mode-covering behavior. % that allocates large probability density to low data-density regions.
% Inasmuch as these weaknesses, one may naturally question on how good the samples from such a cloned policy resemble the truth, and further, on the quality of the constraint imposed by sample-based calculation of the distance between such a cloned behavior policy and the current policy.
% Inasmuch as these weaknesses, one may naturally question on the quality of the constraint imposed by sample-based calculation of the distance between such a cloned behavior and the current policy.
The second approach in the policy-matching theme directly estimates the divergence between the state-conditional actions distributions \citep{brac2019}.
However, on tasks with continuous state space, with probability one,
% no state can appear in the dataset more than once. In other words, 
for each observed state $\vs_i$, the offline dataset has only one corresponding action $\va_i$ from the behavior policy. 
Thus, unlike the first approach, at each state one is only able to use a single data-point to assess whether the current policy is close to the behavior policy, which may not well reflect the true divergence between the two conditional distributions.

To address these concerns, we are motivated to develop a framework that not only supports an flexible policy,
% expressive policy, which can be as flexible as needed, 
but also well regularizes this expressive policy towards the data-collecting behavior policy.
Specifically, \textbf{(1)} instead of using the classical deterministic or uni-modal Gaussian policy, we train a fully implicit policy for its flexibility to capture multiple modes in the action-value function;
\textbf{(2)} to avoid the additional difficulty and complexity in modeling the behavior policy, we base our framework on the second approach in the policy-matching theme.
% to avoid the aforementioned potential problems in the policy-matching regularization, we directly control the distance between the state-action visitation frequency induced by the current policy and that induced by the behavior policy, as an auxiliary training target.
% Hence, our approach does not need to build a generative model to clone the behavior policy.
On top of that, we propose a \emph{simple modification} to the estimate of the regularization term for improved matching \textit{w.r.t.} the dual form of the Jensen--Shannon divergence (JSD, \citet{jsd1991}) and the integral probability metrics (IPM, \citet{ipm1997}). 
On the theoretical side, we show in \Secref{sec:theory} the correctness of the policy-matching approach that it matches the undiscounted state-action visitations, from which the offline dataset is sampled.
We also show the correctness and a good finite-sample property of our proposed modification.
% that the approach of matching the behavior and current policies is equivalent to matching their corresponding state-action visitation frequencies, which reveals the compatibility of many prior work with our framework. 
Similar notion in offline RL of matching the state-action visitations is taken by the DICE family \citep{algaedice2019, optidice2021}, but they either use a Gaussian policy or a mixture of Gaussian policies with a per-dataset tuned number of mixtures.
Besides, these algorithms have high computational complexity, which, together with inflexible policies and intensive hyperparameter tuning, limit their practical applicability. 

We instantiate our framework with a  generative adversarial network (GAN) \cite{gan2014} based structure that approximately minimizes the JSD between the current and the behavior policies.
Furthermore, we design techniques to explicitly encourage robust behavior of our policy at states not included in the static dataset.
We conduct ablation study on several components of our algorithm and analyze their contributions.
With these considerations, our full algorithm achieves competitive performance on various tasks from the D4RL benchmark \citep{fu2021d4rl}.
% , validated the effectiveness of our framework and implementations.

\section{Background and Motivation} \label{sec:prelim}

We first present background information and then introduce a toy example to illustrate the motivations of the proposed framework for offline RL.

\subsection{Offline RL} \label{sec:offlinerl_prelim}
Following the classic RL setting \citep{rlintro2018}, the interaction between the agent and environment is modeled as a Markov decision process (MDP), specified by the tuple $\cM = \br{\sS, \sA, \cP, r, \gamma}$, where $\sS$ denotes the state space, $\sA$ the action space,  $\cP(\vs' \given \vs, \va): \sS \times  \sS \times \sA \rightarrow [0,1]$ the environmental dynamics, $r(\vs, \va):\sS \times \sA \rightarrow \sbr{R_{\min}, R_{\max}}$ the reward function, and $\gamma \in (0,1]$ the discount factor. 
The goal of RL is to learn a policy $\pi_\vphi(\va_t \given \vs_t)$, parametrized by $\vphi$, that maximizes the expected cumulative
discounted reward $\E\sbr{\sum_{t=0}^{\infty}\gamma^t r(\vs_t, \va_t)}.$ 

In offline RL \citep{offlinetutorial2020}, the agent only has access to a fixed dataset $\sD \triangleq \cbr{(\vs, \va, r, \vs')}$, consisting of transition tuples from rollouts of some behavior policies $\pi_{b}(\va \given \vs)$ on $\cP$. We denote the undiscounted state-action visitation frequency induced by the behavior policy $\pi_b$ as $d_{b}(\vs,\va)$ and its state-marginal
% , the undiscounted state visitation frequency, 
as $d_{b}(\vs)$. 
The counterparts for the current policy $\pi_\vphi$ are $d_{\vphi}(\vs, \va)$ and $d_{\vphi}(\vs)$.
Here, $d_{b}(\vs,\va) = d_{b}(\vs)\pi_{b}(\va \given \vs)$ and following the literature, \textit{e.g.}, \citet{breakingcurse2018}, we have $\sD \sim d_{b}(\vs,\va)$ (discussed further in Appendix~\ref{sec:related_work}). 
The visitation frequencies in the dataset are denoted as $d_{\sD}(\vs, \va)$ and $d_{\sD}(\vs)$, which are discrete approximations to $d_{b}(\vs,\va)$ and $d_{b}(\vs)$, respectively.

\subsection{Actor-Critic Algorithm} \label{sec:ac_prelim}
Denote the action-value function as $Q^{\pi}(\vs, \va) = \E_{\pi, \gP} [ \sum_{t=0}^{\infty}\gamma^t r(\vs_t, \va_t) \given \vs_0=\vs, \va_0=\va]$.
In the actor-critic scheme \citep{rlintro2018}, the critic $Q^{\pi}(\vs, \va)$ is often approximated by a neural network $Q_{\vtheta}\br{\vs, \va}$, parametrized by $\vtheta$  and trained by the Bellman operator \citep{ddpg2016, sac2018, bcq2019}. 
% by minimizing \textit{w.r.t.} $\vtheta$
% \begin{equation*}\textstyle
%     \sbr{Q_{\vtheta}(\vs, \va) - \br{ r(\vs, \va) + \gamma \E_{\vs' \sim \cP (\boldsymbol{\cdot}\given\vs, \va), \,\va' \sim \pi(\boldsymbol{\cdot}\given \vs')} \sbr{Q_{\vtheta}(\vs', \va')}}}^2.
% \end{equation*}

The actor $\pi_{\vphi}$ aims at maximizing the expected value of $Q_{\vtheta}$, and in offline RL its learning objective is commonly expressed as maximizing \textit{w.r.t.} $\vphi$
\begin{equation}\textstyle \label{eq:ac_actor_approx}
     J\br{\pi_{\vphi}} \approx \E_{\vs \sim d_b(\vs),\, \va \sim \pi_{\vphi}(\cdot \given \vs)}\sbr{Q_{\vtheta}\br{\vs, \va}},
\end{equation}
where sampling from $d_b(\vs)$ can be implemented as sampling from the offline dataset $\sD$ \citep{diagbottleneck2019, offlinetutorial2020}.

\iffalse
The actor $\pi_{\vphi}$ aims at maximizing the expected value of $Q_{\vtheta}$, with  the training objective expressed as maximizing \textit{w.r.t.} $\vphi$
\begin{equation}\textstyle \label{eq:ac_actor}
    \cbr{ J\br{\pi_{\vphi}} = \E_{\vs \sim d_{\vphi}(\vs),\, \va \sim \pi_{\vphi}(\va \given \vs)}\sbr{Q_{\vtheta}\br{\vs, \va}} },
\end{equation}
where $d_{\vphi}(\vs)$ is the state visitation frequency under policy $\pi_\vphi$. 
In offline RL, sampling from $d_{\vphi}(\vs)$ is infeasible as no
interactions with the environment are allowed. 
A common and practically effective approximation   to Eq.~\ref{eq:ac_actor} is
\begin{equation}\textstyle \label{eq:ac_actor_approx}
     J\br{\pi_{\vphi}} \approx \E_{\vs \sim d_b(\vs),\, \va \sim \pi_{\vphi}(\va \given \vs)}\sbr{Q_{\vtheta}\br{\vs, \va}},
\end{equation}
where sampling from $d_b(\vs)$ can be implemented easily as sampling from the offline dataset $\sD$.
\fi

\subsection
{Generative Adversarial Nets} \label{sec:gan_prelim}
% Structure
GAN \citep{gan2014} provides a framework to train deep generative models, with two neural networks  trained jointly in an adversarial manner:
a generator $G_\vphi$, parametrized by $\vphi$, that fits the data distribution and a discriminator $D_\vw$, parametrized by $\vw$, that outputs the probability of a sample coming from the training data rather than $G_\vphi$.
Sampling $\vx$ from the generator's distribution $d_\vphi\br{\vx}$ can be realized with  $\vz \sim p_\vz(\vz), \vx = G_\vphi\br{\vz}$, where $p_\vz(\vz)$ is some noise distribution.
Denote $d_\sD\br{\cdot}$ as the data distribution, % of the observed $\vx$.
both $G_\vphi$ and $D_\vw$ are trained via a two-player min-max game as 
\begin{equation}\label{eq:gan_obj}\textstyle 
\begin{split}
    \min_\vphi \max_\vw \mbox{ } V\br{D_\vw,G_\vphi} = \E_{\vy \sim d_\sD\br{\cdot}}\sbr{\log D_\vw \br{\vy}} + \E_{\vz\sim p_\vz \br{\cdot}} \sbr{\log\br{1-D_\vw(G_\vphi(\vz))}}.
\end{split}
\end{equation}
% Approximately minimize JSD
Given the optimal discriminator $D_G^*$ at $G_\vphi$, the training objective of $G_\vphi$
is determined by  the JSD between $d_\sD$ and $d_\vphi$ as
$
    V\br{D_G^*, G_\vphi}
    = -\log 4 + 2 \cdot \mathrm{JSD} \br{d_\sD \Vert d_\vphi}
$,
with the global minimum achieved if and only if $d_\vphi = d_\sD$.
Therefore, one may view GAN as a distributional matching framework that approximately minimizes the JSD between the generator distribution and data distribution.

\subsection{Motivations} \label{sec:motivation}
To illustrate our motivations of training an expressive policy under an appropriate regularization,
we conduct a toy experiment of behavior cloning, as shown in Figure~\ref{fig:toy}, where
we use the $x$- and $y$-axis values to represent the state and action, respectively. 
Figure~\ref{fig:toy_truth} shows the state-action joint distribution of the behavior policy that we try to mimic. 
For Figures~\ref{fig:toy_cvae}-\ref{fig:toy_gan}, we use the same test-time state distribution, consisting of an equal mixture of the behavior policy's state distribution and a uniform state distribution between $-1.5$ and $1.5$. 
If the inferred policy well approaches the behavior policy, we expect \textbf{(1)} clear concentration on the eight centers and \textbf{(2)} smooth interpolation between centers, which implies a good and smooth fit to the behavior policy.
We start with fitting a CVAE model, a representative behavior-cloning method, to the dataset. 
As shown in Figure~\ref{fig:toy_cvae}, CVAE exhibits a mode-covering behavior that covers the data density modes at the expense of overestimating unwanted low data-density regions.
Hence, the regularization ability is questionable of using CVAE as a proxy for the behavior policy in some prior work.
Replacing CVAE with the conditional GAN (CGAN, \citet{cgan2014}), \textit{i.e.}, replacing the KL loss with the JSD loss, but adopting the Gaussian policy popular in prior offline RL work partially alleviates the mode-covering issues but drops necessary modes, as shown in Figure~\ref{fig:toy_gcgan}.
This shows the inflexibility of Gaussian policies.
Replacing the Gaussian policy in CGAN with an implicit policy, and training CGAN via the classical policy-matching approach, improves the capability of capturing multiple modes, as shown in Figure~\ref{fig:toy_cgan}.
% Nevertheless, it is still prone to mode collapse and interpolates less smoothly between the seen states.
% Finally, training the implicit-policy CGAN via direct state-action-visitation joint matching leads to the best performance. 
Finally, training the implicit-policy CGAN via our proposed modification (Section~\ref{sec:policy_matching_reg}) also leads to good capture of the behavior policy.
As shown in Figure~\ref{fig:toy_gan}, it concentrates clearly on the eight centers and interpolates smoothly between the seen states. 
Based on this toy example, training a fully-implicit policy with the policy-matching strategy and \textit{w.r.t.} the GAN-style JSD minimization can be an effective way to learn a flexible yet well-regularized policy in offline reinforcement learning.
% These observations motivate us to train implicit policies in offline RL, with sample-based regularization on the state-action visitations.

\begin{figure*}[tb]
     %\vspace{-4mm}
     \centering
     \begin{subfigure}[b]{0.19\textwidth}
         \centering
         \includegraphics[width=\textwidth]{./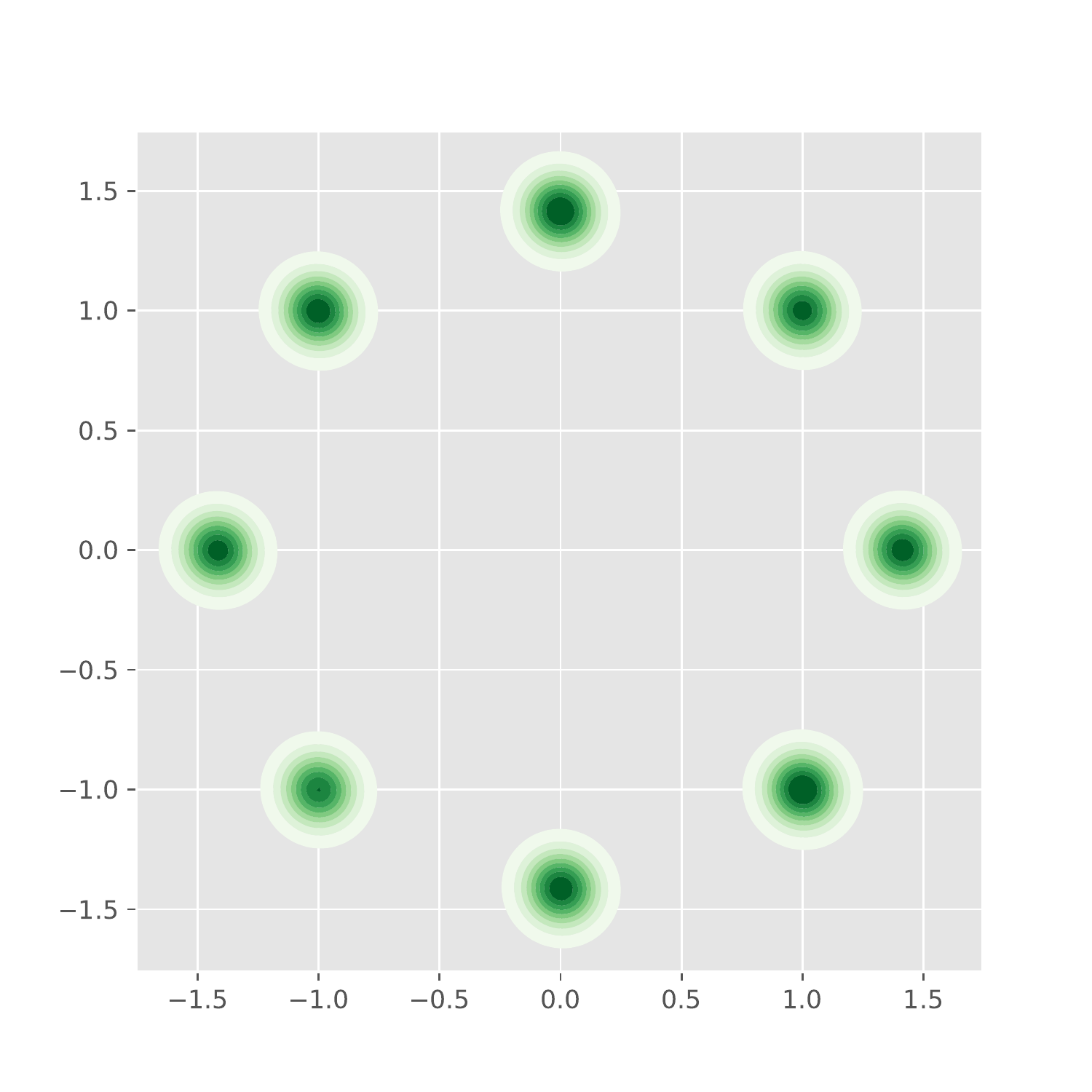}
         \caption{Truth}
         \label{fig:toy_truth}
     \end{subfigure}
     \hfill
     \begin{subfigure}[b]{0.19\textwidth}
         \centering
         \includegraphics[width=\textwidth]{./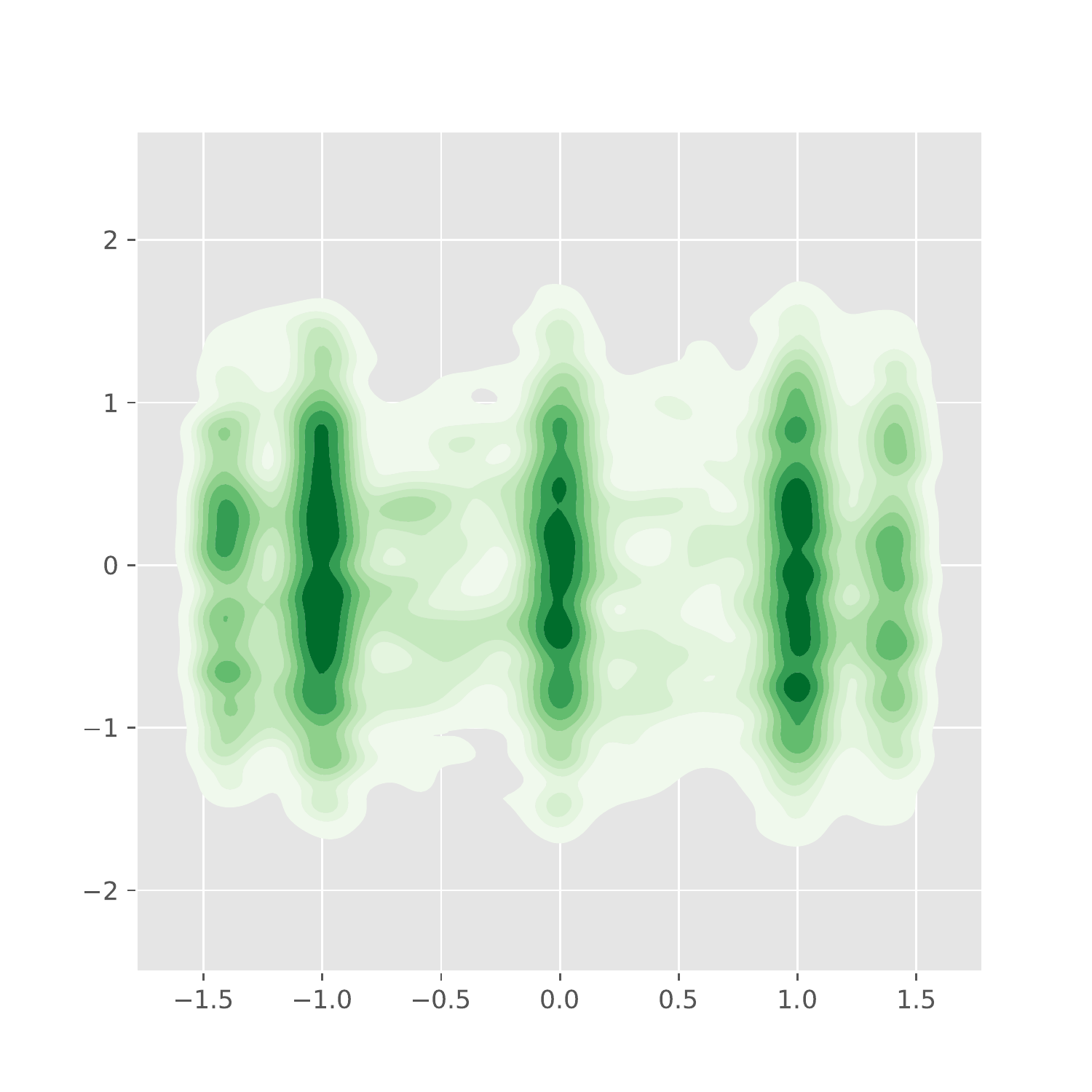}
         \caption{CVAE}
         \label{fig:toy_cvae}
     \end{subfigure}
     \hfill
     \begin{subfigure}[b]{0.19\textwidth}
         \centering
         \includegraphics[width=\textwidth]{./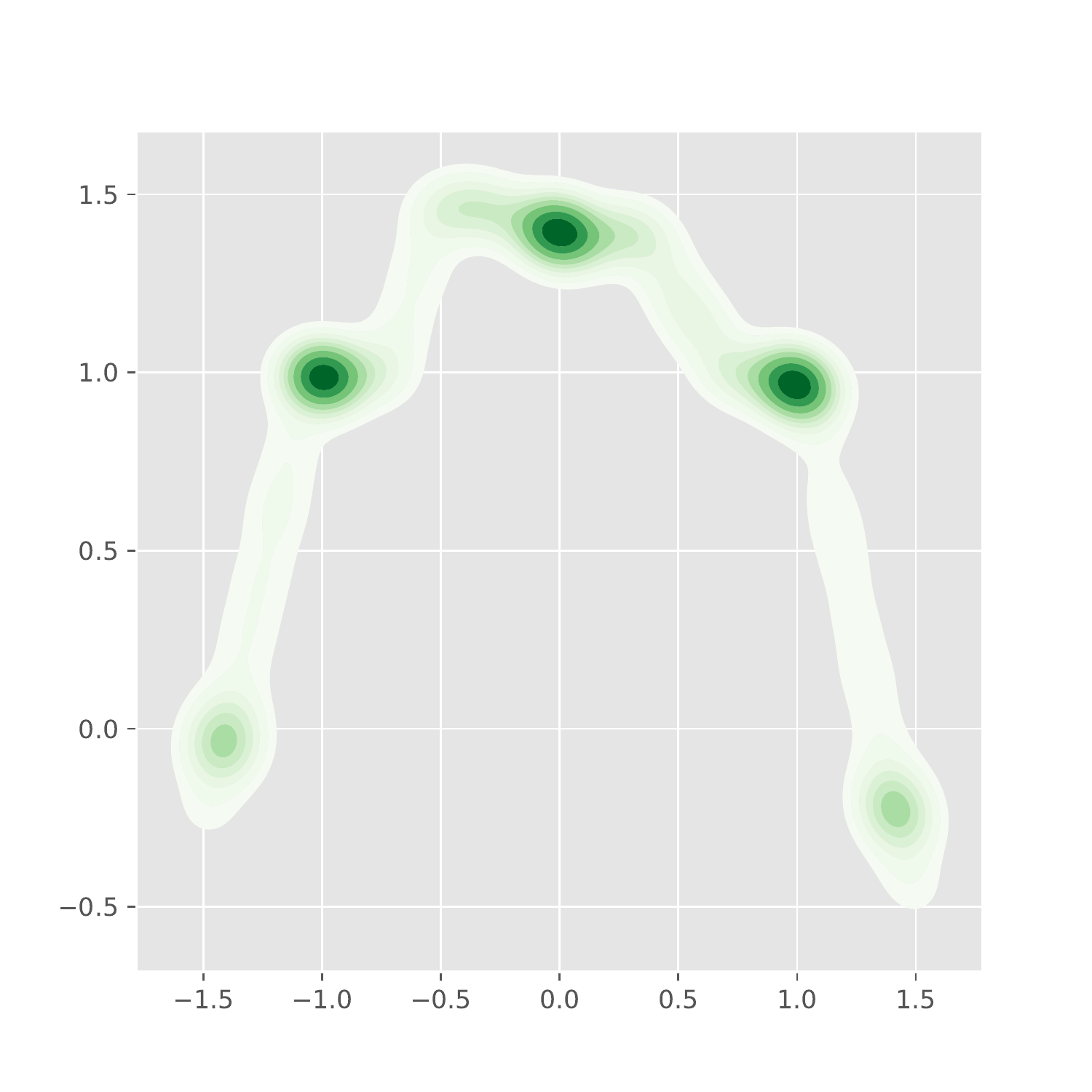}
         \caption{G-CGAN}
         \label{fig:toy_gcgan}
     \end{subfigure}
     \hfill
     \begin{subfigure}[b]{0.19\textwidth}
         \centering
         \includegraphics[width=\textwidth]{./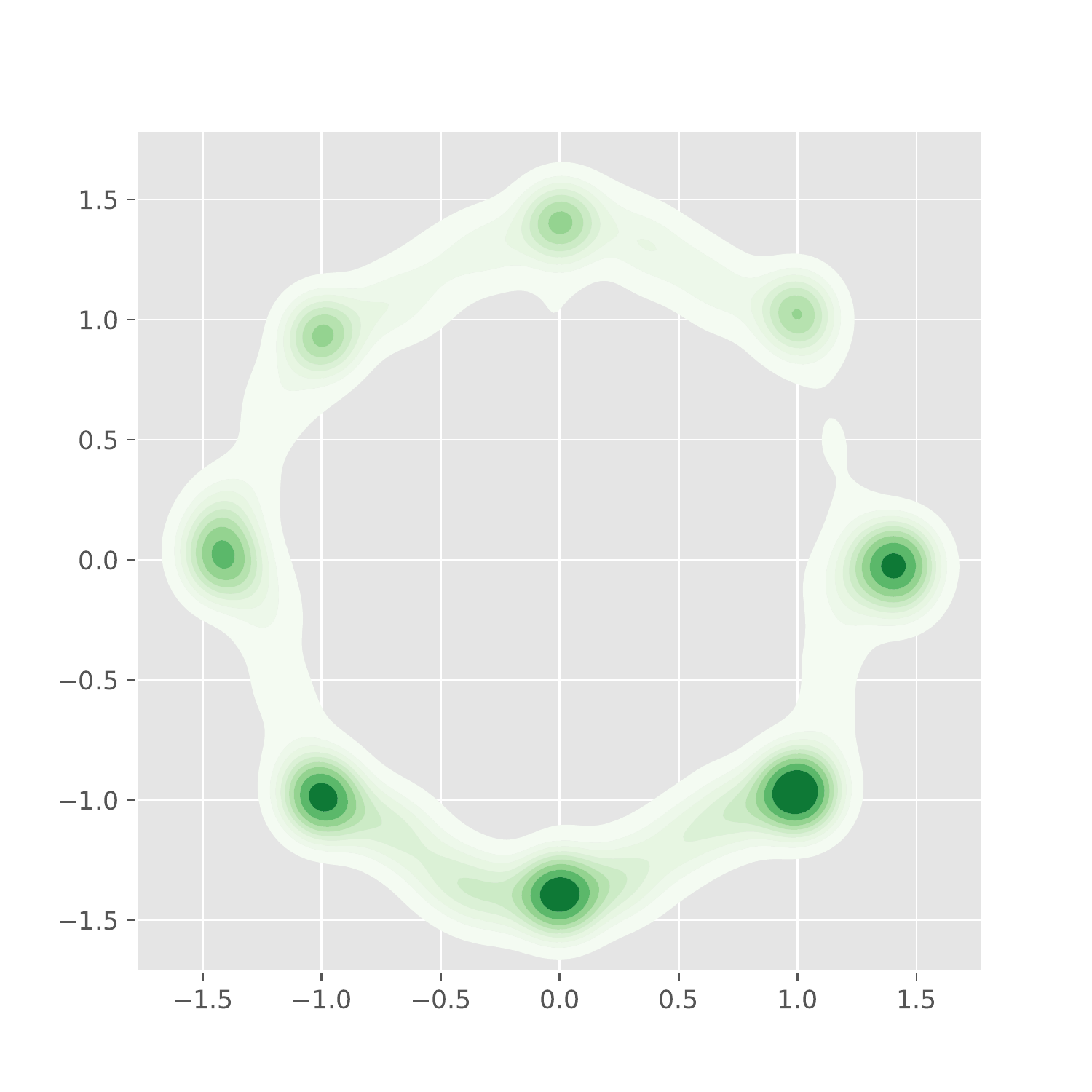}
         \caption{CGAN}
         \label{fig:toy_cgan}
     \end{subfigure}
     \hfill
     \begin{subfigure}[b]{0.19\textwidth}
         \centering
         \includegraphics[width=\textwidth]{./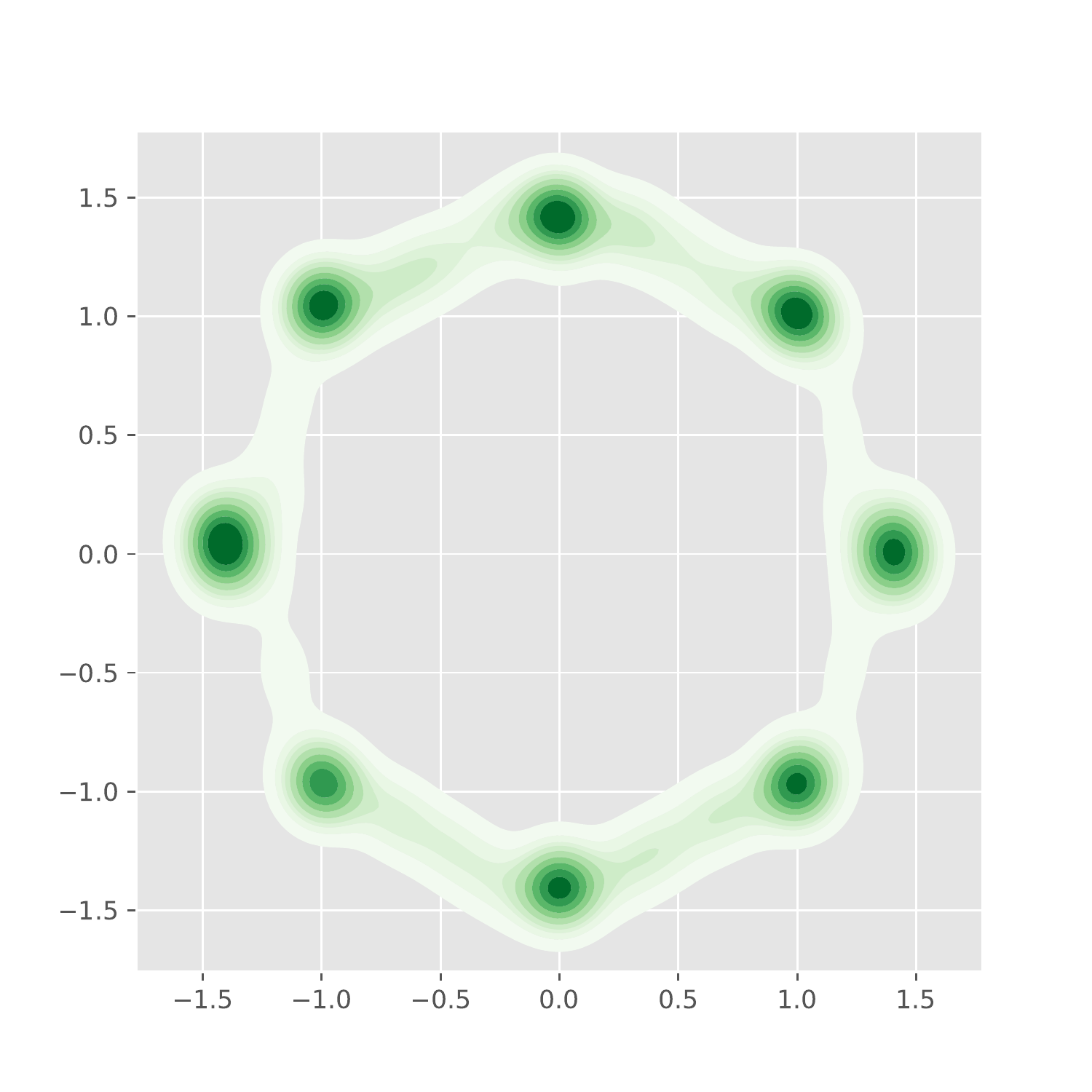}
         \caption{GAN}
         \label{fig:toy_gan}
     \end{subfigure}
    % \vspace{-2mm}
    \captionsetup{font=small}
        \caption{ \small 
        Performance of approximating the behavior policy on the eight-Gaussian dataset. 
        A conditional VAE (``CVAE''), a conditional GAN (``CGAN''), and a Gaussian-generator conditional GAN (“G-CGAN”) are fitted using the classical policy matching approach.
        A conditional GAN (``GAN'') is fitted using the basic state-action joint-matching strategy (\Secref{sec:policy_matching_reg}).
        Details are in Appendix \ref{sec:toy_detail}.
        }
        \label{fig:toy}
        % \vspace{-4mm}
\end{figure*}

\section{State-Action Joint Regularized Implicit Policy}\label{sec:method}

In this section we discuss an instance of our framework that will be used in our empirical study in \Secref{sec:experiment}.
Concretely, for sample-based policy-matching, we train a fully implicit policy via a GAN structure to approximately minimize the JSD. % between the state-action visitation frequency induced by the current policy and that induced by the behavior policy.
Our basic algorithm is discussed in \Secref{sec:basic}, followed by two enhancing components presented in \Secref{sec:enhance_comp} to build up our full algorithm.
This instantiation manifests three facets we consider important in offline RL:
\textbf{(1)} the flexibility of the policy class,
\textbf{(2)} an effective sample-based regularization without explicitly modelling the behavior policy,
and \textbf{(3)} the smoothness of the learned policy.

\subsection{Basic Algorithm} \label{sec:basic}
Motivated by the standard actor-critic and GAN frameworks,
% described in \Secref{sec:prelim}, 
our basic algorithm consists of a critic $Q_\vtheta$, an actor $\pi_\vphi$, and a discriminator $D_\vw$.
For training stability, we follow the double Q-learning \citep{doubleq2010} 
% and target network formulation 
to train a pair of critics $Q_{\vtheta_1}, Q_{\vtheta_2}$ and maintain the target networks $Q_{\vtheta'_1}, Q_{\vtheta'_2}, \pi_{\vphi'}$.

% To penalize uncertainty in the action-value estimates at future states, while controlling conservatism. 
We follow prior work \citep[\eg,][]{bcq2019,bear2019} to use the critic-training target
\begin{equation} \label{eq:critic_target}
         \widetilde{Q}\br{\vs, \va} \triangleq  r(\vs, \va) + \gamma \E_{\va'\sim \pi_{\vphi'}\br{\boldsymbol{\cdot}\given \vs'}}\sbr{\lambda \min_{j=1,2}Q_{\vtheta'_j}\br{\vs', \va'} + (1-\lambda) \max_{j=1,2}Q_{\vtheta'_j}\br{\vs', \va'}}, %
\end{equation}
with hyperparameter $\lambda \in \sbr{0,1}$. 
% where one $\va'$ is sampled at each $\vs'$ in the mini-batch.
%% opt objective
Both critic networks are trained to minimize the mean-squared-error between their respective action-value estimates $Q_{\vtheta_j}\br{\vs, \va}$ and $\widetilde{Q}\br{\vs, \va}$.
% over state-action pairs in the mini-batch.

Actor training has three parts: implicit policy, policy-matching regularization, and conservative target.

%% Implicit Policy
\subsubsection{Implicit Policy}
As discussed in Sections \ref{sec:intro} and \ref{sec:motivation}, a deterministic or Gaussian policy may miss important rewarding actions, or even concentrate on inferior ``average actions.'' 
For online off-policy RL, \citet{idac2020} shows the benefit of an implicit distribution mixed Gaussian policy.  
Generalizing this idea to offline RL, we train a fully implicit policy, which transforms a given noise distribution into the state-conditional action distribution via a neural network, in reminiscent of the generator in CGAN. 
Specifically, with a deterministic function $\pi_\vphi$ and some noise distribution $p_\vz(\vz)$, given state~$\vs$, 
\begin{equation} \label{eq:implicit_policy} \textstyle 
    \va \sim \pi_\vphi(\boldsymbol{\cdot}\given \vs) = \pi_\vphi(\vs, \vz), \quad \vz \overset{iid}{\sim} p_\vz(\vz).
\end{equation}
As shown in Figures~\ref{fig:toy_cgan} and \ref{fig:toy_gan}, an implicit policy can be stronger to learn multi-modality, if needed.

%% resampling
\subsubsection{Policy-Matching Regularization} \label{sec:policy_matching_reg}
% As discussed in \Secref{sec:prelim}, directly sampling from $d_\vphi(\vs)$ is infeasible in offline RL. 
% Motivated by Eq.~\ref{eq:ac_actor_approx}, we approximate $d_\vphi(\vs)$ by $d_b(\vs)$ in our implementation, which allows an easy sampling scheme that simply samples $\vs$ from~$\sD$, without needing an importance sampling correction that may possess high variance \citep{breakingcurse2018}.
% Such an approximation is classical in the off-policy and offline RL literature \citep{offpolicyac2012,offlinetutorial2020}, with demonstrated empirical effectiveness in offline RL \citep{diagbottleneck2019}, besides its usage in off-policy RL \citep{dpg2014,trpo2015,ddpg2016}. 
Our goal is to efficiently match the current policy with the behavior policy \textit{w.r.t.} sample-based estimate of some statistical divergence, such as the JSD or IPM.
For the JSD, empirically studied in Section~\ref{sec:experiment}, as in \citet{brac2019}, the classical policy-matching objective is to minimize 
\begin{equation} \label{eq:cond_gan_objective} \textstyle
    \E_{\vs \sim d_b\br{\vs}}\left[\mathrm{JSD}\br{\pi_b(\va \given \vs}, \pi_\vphi(\va \given \vs)) \right].
\end{equation}
Using the notations in GAN, the generator sample $\vx$ and the data sample $\vy$ for policy-matching are 
% defined as
\begin{equation} \label{eq:cond_true_fake_sample} \textstyle
    \vy \triangleq (\vs, \va) \sim \sD, \; \vx \triangleq (\vs, \Tilde{\va}), \; \Tilde{\va} \sim \pi_\vphi(\boldsymbol{\cdot}\given \vs),
\end{equation}
where \emph{the same} $\vs$ is used in both $\vx$ and $\vy$.

In this paper, we propose to minimize an equivalent form of Eq.~\eqref{eq:cond_gan_objective} as 
\begin{equation} \label{eq:joint_gan_objective} \textstyle
    \mathrm{JSD}\sbr{\pi_b\br{\va \given \vs} d_b\br{\vs}, \pi_\vphi\br{\va\given \vs} d_b\br{\vs}},
\end{equation}
which we dub as ``state-action joint-matching.''
The intuition for a benefit of this objective is discussed below, and the equivalence between Eqs.~\eqref{eq:joint_gan_objective} and \eqref{eq:cond_gan_objective} together with a theoretical benefit of the objective Eq.~\eqref{eq:joint_gan_objective} is discussed in Theorem~\ref{thm:jsd_lower_bound}.
The generator sample $\vx$ and the data sample $\vy$ are now % defined~as
\begin{equation} \label{eq:true_fake_sample} \textstyle
    \vy \triangleq (\vs, \va) \sim \sD; \; \vx \triangleq (\Tilde{\vs}, \Tilde{\va}), \; \Tilde{\vs} \sim \sD, \; \Tilde{\va} \sim \pi_\vphi(\boldsymbol{\cdot}\given \Tilde{\vs}),
\end{equation}
where $\Tilde{\vs}$ \emph{is resampled} and thus \emph{is independent of} $\vs$.
%% generator loss

For both policy-matching objectives Eq.~\eqref{eq:cond_gan_objective} and Eq.~\eqref{eq:joint_gan_objective}, we constrain the statistical divergence, named the generator loss $\gL_g(\vphi)$, in the training of actor.
In this instantiation of approximately minimizing JSD via GAN, with the discriminator $D_\vw$, we have $\gL_g(\vphi) \triangleq \E_{\vx} \sbr{\log\br{1-D_\vw(\vx)}}$.

% Our choice of directly matching state-action joint visitations mitigates the issue of uncontrollable extrapolation errors in the action-value function estimate, since proximity of visitation frequencies leads to reduced chances of estimating the action-values of state-action pairs far from the offline dataset.
Intuitively, our proposal of minimizing the policy-matching objective Eq.~\eqref{eq:joint_gan_objective}, instead of the classical one Eq.~\eqref{eq:cond_gan_objective}, circumvents the problem of matching each state-conditional action distribution on only one data point. 
The state-action pairs $\br{\vs_i, \va_i}$ in the offline dataset are all viewed as samples from $\pi_b\br{\va \given \vs} d_b\br{\vs}$, instead of each pair being separately viewed as one sample from the state-conditional distribution, \textit{i.e.}, $\va_i \sim \pi_b\br{\cdot \given \vs_i}$.
Besides, the state-action joint-matching objective implicitly encourages the smoothness of the state-action mapping, namely, similar states should have similar actions.
This is because, for example, the discriminator in GAN can easily decide as ``fake'' a generator sample $\vx$ should it has state similar to a data sample but action very differently from.
This smoothness feature helps a reliable generalization of our policy to unseen states.

%% Q_min & training target
\subsubsection{ %\bfseries 
Actor-Training Target} 
% To prevent the accumulation of accidental errors in the training process and in the approximation of action-value function, 
We follow \citet{bear2019} to train the policy \wrt\mbox{} a conservative estimate of the action-values.
% Specifically, we exploit the double-Q structure and use the minimum of the two action-value estimates for policy improvement.
For the ease of optimization, we use the Lagrange form of the constrained optimization problem and penalize the generator loss $\gL_g(\vphi)$ while improving the policy.
Our policy-training target is
\begin{equation} \label{eq:policy_target} \textstyle
    \min_{\vphi} - \E_{\vs \sim \sD} \E_{\va \sim \pi_\vphi(\boldsymbol{\cdot}\given \vs)}\sbr{\min_{j=1,2} Q_{\vtheta_j}(\vs, \va)} + \alpha \cdot \gL_g(\vphi),
\end{equation}
where $\alpha$ is a fixed Lagrange multiplier.
%% select action, maximize Q_1
At test time, we follow prior work \citep[\eg,][]{bcq2019,bear2019} to first sample $10$ actions from $\pi_\vphi$ and then execute the action that maximizes $Q_{\vtheta_1}$.

%% structure
The discriminator is trained to better distinguish generator and data samples. 
It aids the policy-matching through outputting $\gL_g(\vphi)$.
As an example, for approximately minimizing JSD via Eq.~\eqref{eq:joint_gan_objective}, the discriminator outputs the probability that the input, either the $\vx$ or $\vy$ in Eq.~\eqref{eq:true_fake_sample}, comes from $d_b(\vs, \va)$. 
In this case, the discriminator is trained to minimize the error in assigning $\vx$ as ``fake'' and $\vy$ as ``true,''  which is the inner maximization of Eq.~\eqref{eq:gan_obj}.

\subsection{Enhancing Components} \label{sec:enhance_comp}
In this section we present two components to further improve the basic algorithm in Section~\ref{sec:basic}.

% \textbf{Multiple Action-samples at Bellman Backup.} 
% The expectation part of the critic learning target (Eq.~\ref{eq:critic_target}) can be better estimated, in terms of a smaller sample variance, by averaging over $N_a$ actions $\va'$ at each $\vs'$ in the mini-batch, rather than just one $\va'$ as in \Secref{sec:basic}.

\textbf{State-smoothing at Bellman Backup.} 
Due to the stochastic nature of environmental dynamics, multiple next states $\vs'$ are possible after taking action $\va$ at state $\vs$, while the offline dataset $\sD$ only contains one such $\vs'$.
Since the agent is unable to interact with the environment to collect more data in offline RL, local exploration \citep{s4rl2021} in the state-space appears as an effective strategy to regularize the Bellman backup by considering states close to the records in the offline dataset.
We assume that: \textbf{(1)} a small transformation to a state results in states physically plausible in the underlying environment (as in \citet{s4rl2021}); 
\textbf{(2)} when the state space is continuous, the transition kernel $\gP\br{\boldsymbol{\cdot}\given \vs, \va}$ is locally continuous and centered at the recorded $\vs'$ in the dataset. 

With these assumptions, we propose to fit $Q_\vtheta(\vs, \va)$ on the value of a small region around the recorded next state $\vs'$. 
Specifically, with a pre-specified standard deviation $\sigma_B$, we sample around $\vs'$ as $\hat\vs = \vs' + \vepsilon, \vepsilon \sim \gN(\vzero, \sigma_B^2 \mI)$, and modify Eq.~\eqref{eq:critic_target} as 
\begin{equation} \label{eq:critic_smooth_target}
         \widetilde{Q}\br{\vs, \va} \triangleq r(\vs, \va) +
         \gamma \E_{\hat\vs} \E_{\hat\va \sim \pi_{\vphi'}\br{\boldsymbol{\cdot}\given \hat\vs}}\sbr{\lambda \min_{j=1,2}Q_{\vtheta'_j}\br{\hat\vs, \hat\va} + (1-\lambda) \max_{j=1,2}Q_{\vtheta'_j}\br{\hat\vs, \hat\va}},%
\end{equation}
where $N_B$ $\hat\vs$ are sampled to estimate the expectation.
This strategy is equivalent to using a Gaussian distribution centered at $\vs'$ to approximate the otherwise non-smooth $\delta_{\vs'}$ transition kernel manifested in the offline dataset.
Similar technique is also considered as the target policy smoothing regularization in \citet{td32018}, though smoothing therein is applied on the target action.

\textbf{State-smoothing at Policy-matching.} 
In optimizing the policy-matching objective Eq.~\eqref{eq:joint_gan_objective}, we substitute $d_\sD(\vs)$ for $d_b(\vs)$.
However, $d_\sD(\vs)$ is in essence discrete and the idea of smoothing the discrete state-distribution can be applied again to provide a better coverage of the state space. 
This design explicitly encourages a predictable and smooth behavior at states unseen in the offline dataset. 
Specifically, with some pre-specified $\sigma_J^2$, we modify the sampling scheme of $\Tilde{\vs}$ in Eq.~\ref{eq:true_fake_sample} as
\begin{equation} \label{eq:resample_state} \textstyle
    \Tilde{\vs} \sim \sD, \, \vepsilon \sim \gN\br{\vzero, \sigma_J^2\mI}, \, \Tilde{\vs} \leftarrow \Tilde{\vs} + \vepsilon.
\end{equation}
Our strategy is akin to sampling from a kernel density approximation \citep{nonparamstat2006} of $d_b(\vs)$ with data points $\vs \in \sD$ and with radial basis kernel of bandwidth $\sigma_J$.

Algorithm~\ref{alg:simple} shows the main steps of our full algorithm, instantiated by approximately minimizing JSD via GAN, and dubbed as ``GAN-Joint.''
A detailed listing of our algorithm is provided in Appendix \ref{sec:full_algo}. 
% Implementation details are in Appendix \ref{sec:rl_details}. 

% \vspace{-3mm}
\begin{algorithm}[H]
\captionsetup{font=small}
\caption{\small GAN-Joint, Main Steps}
\begin{algorithmic}
\label{alg:simple}
\STATE Initialize policy network $\pi_{\vphi}$, critic network $Q_{\vtheta_1}$ and $Q_{\vtheta_2}$, discriminator network $D_\vw$.
\FOR{each iteration}
\STATE Sample transition mini-batch $\gB = \cbr{(\vs, \va, r, \vs')} \sim \sD$. 
\STATE Train the critics by Eq.~\eqref{eq:critic_smooth_target}, $\forall \, j = 1,2$, $\arg\min_{\vtheta_j}( Q_{\vtheta_j}\br{\vs, \va} - \widetilde{Q}(\vs, \va) )^2 $ over $\br{\vs, \va} \in \gB$.
\STATE Get generator loss $\gL_g$ using $D_\vw$ and  the $\vx$, $\vy$ in Eq.~\eqref{eq:true_fake_sample}, apply state-smoothing in \Secref{sec:enhance_comp}.
\STATE Optimize policy network $\pi_{\vphi}$ by Eq.~\eqref{eq:policy_target}.
\STATE Optimize discriminator $D_\vw$ to maximize $\E_{\vy \sim d_\sD\br{\cdot}}\sbr{\log D_\vw \br{\vy}} + \E_{\vx} \sbr{\log\br{1-D_\vw(\vx)}}$.
\ENDFOR
\end{algorithmic}
\end{algorithm}
% \vspace{-6mm}

\section{Theoretical Analysis}\label{sec:theory}

As discussed in \Secref{sec:prelim}, the offline dataset $\sD$ is typically sampled from the \emph{undiscounted} state-action visitation frequency induced by the behavior policy $\pi_b$.
Recall that in this paper we adopt the common strategy 
% in prior offline-RL work 
of controlling the distance between the behavior policy and the current policy during the training process.
In this section, we first prove that this approach, in essence, controls the corresponding undiscounted state-action visitations.
As a consequence, the issue of uncontrollable extrapolation errors in the action-value function estimate can be mitigated.

\begin{theorem}[Informal]\label{thm:occup_single_informal}
When the current policy is close to the behavior policy, the total-variation distance between the corresponding undiscounted state-action visitation frequencies are small.
\end{theorem}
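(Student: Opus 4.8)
The plan is to make the informal statement precise by fixing a notion of closeness between $\pi_\vphi$ and $\pi_b$ --- the natural choice is a uniform bound on the total-variation distance between the state-conditional action distributions, $\sup_{\vs} \mathrm{TV}\br{\pi_\vphi(\boldsymbol{\cdot}\given\vs), \pi_b(\boldsymbol{\cdot}\given\vs)} \le \epsilon$ --- and then to show that $\mathrm{TV}\br{d_\vphi(\vs,\va), d_b(\vs,\va)}$ is bounded by a constant multiple of $\epsilon$ (I expect a factor like $\frac{1}{1-\gamma}$ in the discounted case, or a horizon factor $H$ in the finite-horizon undiscounted case; for the genuinely undiscounted infinite-horizon case one needs an ergodicity/mixing assumption, so I would state the cleanest version under whichever visitation definition the paper has committed to in Section~\ref{sec:offlinerl_prelim}). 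Since $d_b(\vs,\va) = d_b(\vs)\pi_b(\va\given\vs)$ and likewise $d_\vphi(\vs,\va) = d_\vphi(\vs)\pi_\vphi(\va\given\vs)$, the total-variation distance of the joints splits, via the triangle inequality, into (i) a term controlled by the conditional closeness at a fixed state-marginal, and (ii) a term controlled by $\mathrm{TV}\br{d_\vphi(\vs), d_b(\vs)}$; so the crux is bounding the state-marginal discrepancy.

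The key steps, in order, would be: First, write the state visitation $d_\pi$ as the stationary/normalized solution of the flow equation $d_\pi(\vs') = (1-\gamma)\nu_0(\vs') + \gamma \sum_{\vs,\va} \gP(\vs'\given\vs,\va)\pi(\va\given\vs)d_\pi(\vs)$ (or the undiscounted analogue), i.e. as a fixed point of the policy-induced transition operator $T_\pi$. Second, show that $T_{\pi_\vphi}$ and $T_{\pi_b}$, as operators on signed measures over $\sS$, are uniformly close in the relevant operator norm: for any state distribution $\mu$, $\norm{T_{\pi_\vphi}\mu - T_{\pi_b}\mu}_{\mathrm{TV}} \le \gamma \sup_\vs \mathrm{TV}\br{\pi_\vphi(\boldsymbol{\cdot}\given\vs),\pi_b(\boldsymbol{\cdot}\given\vs)} \le \gamma\epsilon$, which is just a marginalization/triangle-inequality computation using that $\gP$ is a stochastic kernel (so it is a TV-contraction, or at worst non-expansive). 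Third, combine a perturbation-of-fixed-point argument: since $d_b = T_{\pi_b} d_b$ and $d_\vphi = T_{\pi_\vphi} d_\vphi$, we get $\norm{d_\vphi - d_b}_{\mathrm{TV}} = \norm{T_{\pi_\vphi}d_\vphi - T_{\pi_b}d_b}_{\mathrm{TV}} \le \norm{T_{\pi_\vphi}d_\vphi - T_{\pi_\vphi}d_b}_{\mathrm{TV}} + \norm{T_{\pi_\vphi}d_b - T_{\pi_b}d_b}_{\mathrm{TV}} \le \gamma\norm{d_\vphi - d_b}_{\mathrm{TV}} + \gamma\epsilon$, hence $\norm{d_\vphi - d_b}_{\mathrm{TV}} \le \frac{\gamma\epsilon}{1-\gamma}$. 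Fourth, plug this back into the joint decomposition: $\mathrm{TV}\br{d_\vphi(\vs,\va),d_b(\vs,\va)} \le \mathrm{TV}\br{d_\vphi(\vs),d_b(\vs)} + \E_{\vs\sim d_b}\mathrm{TV}\br{\pi_\vphi(\boldsymbol{\cdot}\given\vs),\pi_b(\boldsymbol{\cdot}\given\vs)} \le \frac{\gamma\epsilon}{1-\gamma} + \epsilon = \frac{\epsilon}{1-\gamma}$, giving the clean conclusion that small conditional TV implies small joint TV.

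The main obstacle I anticipate is the $\gamma=1$ / genuinely undiscounted case, since then $T_\pi$ is only non-expansive rather than a strict contraction and the fixed-point perturbation bound degenerates --- the stationary distribution need not even be unique without an irreducibility/aperiodicity assumption, and the constant blows up. I would handle this either by (a) working with the finite-horizon undiscounted visitation $d_b(\vs,\va) = \frac{1}{H}\sum_{t=0}^{H-1}\Pr(\vs_t=\vs,\va_t=\va)$ and getting an $O(H\epsilon)$ bound by telescoping the per-step TV errors (a coupling/simulation-lemma argument: run $\pi_\vphi$ and $\pi_b$ on the same dynamics with maximal coupling of actions, accumulating at most $\epsilon$ TV mass per step), or (b) invoking a uniform mixing-time or Doeblin-type minorization assumption on $\gP$ under all policies, under which $T_\pi$ contracts in TV after $m$ steps and the same perturbation argument goes through with $1-\gamma$ replaced by the spectral gap. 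The formal theorem in the paper's Section~\ref{sec:theory} presumably commits to one of these; my plan is to present the discounted contraction argument as the core and note the finite-horizon telescoping variant, since the simulation-lemma style bound is the most self-contained and matches the phrase ``undiscounted state-action visitations'' without extra ergodicity hypotheses.
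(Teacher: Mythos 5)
Your proposal is essentially correct in spirit but takes a genuinely different route from the paper, and your headline argument lands in the one regime the paper explicitly disclaims. The paper's formal version (Theorem~\ref{thm:occup_single} in Appendix~\ref{sec:proof}) works with a finite state space and the \emph{stationary} distribution of the policy-induced chain: it writes $\vd$ as the solution of the augmented linear system $\bigl(\begin{smallmatrix}\1\\ \mI-\mT^\top\end{smallmatrix}\bigr)\vd=\ve^{(1)}$, uses Perron--Frobenius (under strict positivity of the transition matrices) to show the relevant $N\times N$ submatrix $\mA$ is invertible, bounds $\norm{\mT_b-\mT_\vphi}_\infty\le\epsilon$ by exactly the Jensen/marginalization step you describe, and then invokes a classical condition-number perturbation lemma for linear systems to get $\norm{\vd_\vphi-\vd_b}_1\le \epsilon\kappa_{\max}/(1-\epsilon\kappa_{\max})$. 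So the paper's constant is a condition number of $\mI-\mT_b^\top$ (augmented by the normalization row), not $1/(1-\gamma)$ and not a mixing time. Your core contraction argument with the discounted flow operator is fine as far as it goes, but the paper stresses that its visitations are \emph{undiscounted} precisely because $\gamma=1$ kills the $1/(1-\gamma)$ bound; presenting the discounted contraction as the main proof would therefore not establish the stated theorem. Of your two fallbacks, (b) is the one that matches: under the paper's ergodicity assumption your $m$-step Doeblin contraction would yield the same qualitative conclusion with a mixing-time constant in place of $\kappa_{\max}$, and this is arguably more robust (it extends to continuous state spaces, which the paper defers to future work), whereas the paper's linear-algebra route gives an explicit, computable constant for finite chains without needing a quantitative minorization. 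Your fallback (a) silently changes the object being bounded to a finite-horizon average visitation, which is not the limiting occupancy the paper defines in Section~\ref{sec:offlinerl_prelim}, so it proves a neighboring statement rather than this one. Your reduction from the joint to the state marginal (splitting $\mathrm{TV}\br{d_\vphi(\vs,\va),d_b(\vs,\va)}$ into a marginal term plus an averaged conditional term) is the same reduction the paper uses, and is correct.
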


A formal statement and the proof of Theorem~\ref{thm:occup_single_informal} is on Theorem~\ref{thm:occup_single} provided in Appendix~\ref{sec:proof}. 

We notice that similar analysis has been given in the prior work of bounding $D_{\mathrm{KL}}\br{d_\vphi(\vs) \| d_{b}(\vs)}$ by $O\br{\epsilon / (1-\gamma)^2}$ \citep{trpo2015,offlinetutorial2020}. 
However, that prior work deals with (unnormalized) \emph{discounted} visitation frequencies while our bound is devoted to \emph{undiscounted} visitation frequencies, since neither the data collection (\textit{i.e.}, policy rollout) nor the proposed state-action joint-matching scheme (\Secref{sec:policy_matching_reg}) involve the discount factor.  
In short, the definitions of $d_\vphi(\vs)$ and $d_b(\vs)$ in our work are different from the prior work.
Note that this prior bound depends on $1-\gamma$ in the denominator and hence cannot be applied to the undiscounted case where the discount factor $\gamma = 1$.
% Furthermore, instead of the KL divergence, our bound is \textit{w.r.t.} the total-variation distance, which is a valid metric. 

In practice, the offline dataset $\sD$ often consists of samples collected by a mixture of policies.
Equivalently, the behavior policy $\pi_b(\boldsymbol{\cdot}\given\vs)$ is a mixture of single policies. 
Theorem~\ref{thm:occup_single_informal} can be extended into the mixture of policies case as in Theorem \ref{thm:occup_mix} provided in Appendix~\ref{sec:proof}.

We now show the correctness of our proposed state-action joint-matching scheme (Eq.~\eqref{eq:true_fake_sample}) in IPM.

\begin{definition}[Integral Probability Metric]\label{def:ipm}
The integral probability metrics (IPM) $D_{\mathcal{G}}$ for the probability measures $\mathcal{P}, \mathcal{Q}$ \textit{w.r.t.} some function class $\mathcal{G}$ is defined as \citep{ipm1997, mmdgangp2018}
    \begin{equation*}
        D_{\mathcal{G}}\br{\mathcal{P}, \mathcal{Q}} = \sup_{g\in \mathcal{G}}\abs{\E_{X\sim \mathcal{P}} \sbr{g(X)} - \E_{Y \sim \mathcal{Q}} \sbr{g(Y)}}.
    \end{equation*}
\end{definition}

\begin{theorem}[Informal] \label{thm:ipm_undiscount_informal}
    Our proposed state-action joint-matching scheme (Eq.~\eqref{eq:true_fake_sample}) and the classical policy-matching scheme (Eq.~\eqref{eq:cond_true_fake_sample}) minimize the IPM between undiscounted state-action visitations.
\end{theorem}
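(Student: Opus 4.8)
The plan is to show that both matching schemes are, up to harmless relabelling of the sampling distributions, computing the same IPM, namely $D_{\mathcal{G}}\br{\pi_b(\va\given\vs)d_b(\vs),\, \pi_\vphi(\va\given\vs)d_b(\vs)}$, and then to invoke Theorem~\ref{thm:occup_single_informal} (in the form of its formal counterpart Theorem~\ref{thm:occup_single}) to relate this joint distribution to the undiscounted state-action visitation frequency. First I would unpack Definition~\ref{def:ipm} for the two relevant pairs of measures. For the classical scheme Eq.~\eqref{eq:cond_true_fake_sample}, the ``data'' sample is $\vy=(\vs,\va)\sim\sD$, which we treat as a draw from $d_b(\vs)\pi_b(\va\given\vs)$, and the ``generator'' sample is $\vx=(\vs,\Tilde\va)$ with the \emph{same} $\vs$ and $\Tilde\va\sim\pi_\vphi(\boldsymbol{\cdot}\given\vs)$, i.e.\ a draw from $d_b(\vs)\pi_\vphi(\va\given\vs)$. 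For our joint scheme Eq.~\eqref{eq:true_fake_sample}, the data sample is the same $\vy\sim d_b(\vs)\pi_b(\va\given\vs)$, but the generator sample is $\vx=(\Tilde\vs,\Tilde\va)$ with $\Tilde\vs$ \emph{resampled} from $\sD$ (hence from $d_b(\vs)$) and $\Tilde\va\sim\pi_\vphi(\boldsymbol{\cdot}\given\Tilde\vs)$ — this is again a draw from $d_b(\vs)\pi_\vphi(\va\given\vs)$. The key observation is that $\E_{X\sim\mathcal{P}}[g(X)]$ depends only on the \emph{marginal law} of $X$, not on any coupling between $\vs$ and $\Tilde\vs$; so whether $\Tilde\vs$ equals $\vs$ or is drawn independently is irrelevant to the value of $D_{\mathcal G}$. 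Therefore both schemes optimize exactly $\sup_{g\in\mathcal G}\abs{\E_{d_b\pi_b}[g]-\E_{d_b\pi_\vphi}[g]}$, and in particular they are equal.

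Next I would connect this common IPM to the undiscounted state-action visitations. Writing $d_b(\vs,\va)=d_b(\vs)\pi_b(\va\given\vs)$ for the true undiscounted visitation of the behavior policy, the data measure in both schemes is exactly $d_b(\vs,\va)$. For the generator side, I would argue that $d_b(\vs)\pi_\vphi(\va\given\vs)$ can be driven towards $d_\vphi(\vs,\va)$: by Theorem~\ref{thm:occup_single_informal}, when $\pi_\vphi$ is close to $\pi_b$ in the state-conditional sense, the induced state marginals $d_\vphi(\vs)$ and $d_b(\vs)$ are close in total variation, hence so are $d_b(\vs)\pi_\vphi(\va\given\vs)$ and $d_\vphi(\vs)\pi_\vphi(\va\given\vs)=d_\vphi(\vs,\va)$. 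Combining the triangle inequality for $D_{\mathcal G}$ (which holds whenever $\mathcal G$ is such that $D_{\mathcal G}$ is a pseudmetric — true for the $1$-Lipschitz or bounded function classes considered, cf.\ \citet{ipm1997}) with the bound $D_{\mathcal G}(\mathcal P,\mathcal Q)\le C_{\mathcal G}\cdot\mathrm{TV}(\mathcal P,\mathcal Q)$ for the relevant $\mathcal G$, I get that minimizing the matching objective controls $D_{\mathcal G}\br{d_b(\vs,\va),\, d_\vphi(\vs,\va)}$, the IPM between the two undiscounted state-action visitations — which is the assertion of the theorem. I would state the formal version as: the optimum of either matching objective equals $D_{\mathcal G}\br{d_b\pi_b,\, d_b\pi_\vphi}$, and this quantity upper-bounds $D_{\mathcal G}\br{d_b,\,d_\vphi}$-type discrepancies up to the additive error from Theorem~\ref{thm:occup_single}.

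I expect the main obstacle to be not the marginal-law argument (which is essentially a one-line observation once set up carefully) but rather making precise the claim that the generator measure $d_b(\vs)\pi_\vphi(\va\given\vs)$ is the ``right'' surrogate for the policy's own undiscounted visitation $d_\vphi(\vs,\va)$ — the two differ because the state marginal used in the objective is $d_b$, not $d_\vphi$. Handling this cleanly requires either (i) restricting the formal statement to the literal claim ``both schemes minimize $D_{\mathcal G}(d_b\pi_b, d_b\pi_\vphi)$'' and then appealing separately to Theorem~\ref{thm:occup_single_informal} to close the gap to $d_\vphi$, or (ii) carrying the TV error term through a triangle inequality, which needs $D_{\mathcal G}$ to be dominated by total variation on the relevant class $\mathcal G$ (a standard fact for bounded or Lipschitz $\mathcal G$, but worth stating as a hypothesis). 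A secondary, purely bookkeeping, subtlety is that $\sD$ is a finite sample, so strictly speaking the objectives use $d_\sD$ rather than $d_b$; I would note that the informal statement is about the population-level objectives and defer the finite-sample discussion to the companion finite-sample result already advertised in \Secref{sec:theory}. Modulo these clarifications, the proof is short: expand the IPM, observe marginal-law invariance, and quote Theorem~\ref{thm:occup_single_informal}.
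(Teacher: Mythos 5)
Your first step --- that at the population level the joint-matching and conditional-matching schemes both compute $\sup_{g\in\mathcal G}\abs{\E_{d_b(\vs)\pi_b(\va\given\vs)}[g]-\E_{d_b(\vs)\pi_\vphi(\va\given\vs)}[g]}$ because an IPM depends only on the marginal laws and not on the coupling between $\vs$ and $\Tilde\vs$ --- is correct and is exactly how the paper obtains the last equality in Theorem~\ref{thm:ipm_undiscount}. The gap is in your second step, which is where the actual content of the theorem lives: relating this common objective to $D_{\mathcal G}\br{d_\vphi(\vs,\va),d_b(\vs,\va)}$, where $d_\vphi(\vs,\va)=d_\vphi(\vs)\pi_\vphi(\va\given\vs)$ carries the policy's \emph{own} state marginal. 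The paper does not use Theorem~\ref{thm:occup_single} here at all, and does not settle for an approximate control. Instead it proves an exact identity via an average-reward Bellman duality: for each reward $g\in\mathcal G$, let $f$ be the differential action-value function of $\pi_\vphi$ under reward $g$, so that $f(\vs,\va)-\E_{\vs'\sim\cP(\cdot\given\vs,\va),\va'\sim\pi_\vphi(\cdot\given\vs')}[f(\vs',\va')]=g(\vs,\va)-R_{\pi_\vphi}$ with $R_{\pi_\vphi}=\E_{d_\vphi(\vs,\va)}[g]$. Substituting this into $\E_{d_\vphi}[g]-\E_{d_b}[g]$ and using stationarity of the dataset (the marginal of $\vs'$ under $d_b(\vs,\va,\vs')$ equals the marginal of $\vs$) collapses the difference to exactly $\E_{\vs\sim d_b}\sbr{\E_{\va\sim\pi_\vphi}[f]-\E_{\va\sim\pi_b}[f]}$. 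Taking suprema, the IPM between the true undiscounted visitations over the class $\mathcal G$ \emph{equals} the matching objective over the transformed class $\mathcal F$ of value functions --- no triangle inequality, no TV bound, no residual error.

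Your proposed substitute has two concrete problems. First, to invoke Theorem~\ref{thm:occup_single} you need $\max_i\norm{\pi_b(\cdot\given\vs^i)-\pi_\vphi(\cdot\given\vs^i)}_1\le\epsilon$, a \emph{uniform-over-states} total-variation bound on the conditionals; smallness of the joint IPM $D_{\mathcal G}\br{d_b\pi_b,d_b\pi_\vphi}$ that the algorithm actually minimizes does not imply this (the conditionals can differ arbitrarily on states of small $d_b$-mass, and even on typical states an IPM over a restricted $\mathcal G$ need not dominate TV), so your chain of implications runs in the wrong direction. Second, even granting that hypothesis, your argument only yields $D_{\mathcal G}\br{d_b(\vs,\va),d_\vphi(\vs,\va)}\le D_{\mathcal G}\br{d_b\pi_b,d_b\pi_\vphi}+C_{\mathcal G}\cdot\mathrm{TV}(d_b,d_\vphi)$ with an additive slack that does not vanish along the optimization unless $\pi_\vphi=\pi_b$ exactly, whereas the theorem (in its formal form) asserts the objectives \emph{are} the IPM between the undiscounted visitations. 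The missing idea is the reward-to-value-function change of variables $g\mapsto f$ together with the observation that $\vs$ and $\vs'$ share the same marginal in a stationary dataset; without it your proof establishes a strictly weaker statement under hypotheses the algorithm does not guarantee.
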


A formal statement and the proof of Theorem~\ref{thm:ipm_undiscount_informal} is on Theorem~\ref{thm:ipm_undiscount} provided in Appendix~\ref{sec:proof}.
Interestingly, from the last two equalities in Theorem~\ref{thm:ipm_undiscount}, only state-samples from the offline dataset are needed to minimize the IPM between undiscounted state-action visitations.

Though out of the scope of this paper, for completeness we note that for the discounted visitation frequency defined for a policy $\pi$ as 
$
    d^\pi\br{\vs, \va} \triangleq (1-\gamma) \sum_{t=0}^\infty \gamma^t \Pr_\pi\br{\vs_t = \vs, \va_t = \va},
$
our scheme also matches the IPM between the discounted visitation frequencies $D_{\mathcal{G}}\br{d_\vphi\br{\vs, \va}, d_b\br{\vs, \va}}$, where we reuse the notations $d_\vphi$ and $d_b$ which denote undiscounted visitation frequencies elsewhere.
This is shown in Theorem~\ref{thm:ipm_discount} provided in Appendix~\ref{sec:proof}.

Note that since both IPM and JSD are valid probability metrics, 
in theory, we consider IPM and JSD as comparable for distribution matching.
Empirical successes of approximate JSD matching via GAN are abundant, however, JSD is hard to analyze in theory \cite{fedus2018many}. 
IPM is much easier to analyze, but requires the discriminator to be within some specific function class, which is hard to enforce in practice \citep{mescheder2018training}.
We thus conduct theoretical analysis under IPM, but adopt GAN for coding. 
Indeed, based on our preliminary study discussed in \Secref{sec:abaltion_study} \textbf{(e)}, the JSD-matching via GAN provides both better results and an easier hyperparameter reference from the literature (discussed in Appendix~\ref{sec:tech_gan}).
We henceforth focus on approximately minimizing JSD via GAN.

At the population level, the objectives for our proposed state-action joint-matching scheme (Eq.~\eqref{eq:true_fake_sample}) and the classical policy-matching scheme (Eq.~\eqref{eq:cond_true_fake_sample}) are the same.
However, in theory the classical policy-matching requires many samples from $\pi_b(\va \given \vs)$ while our proposed scheme only requires many samples from $\sD$, as in the discussed case of JSD.
We now verify the equivalence of Eqs.~\eqref{eq:joint_gan_objective} and \eqref{eq:cond_gan_objective}.
Further, while both are valid methods in theory, our method has better property in practice.
% we show a benefit of our state-action joint-matching scheme during the optimization process, as compared to the classical policy-matching scheme.

\begin{theorem} \label{thm:jsd_lower_bound}
    \textbf{(1)} $
    \mathrm{JSD}\sbr{\pi_b\br{\va \given \vs} d_b\br{\vs}, \pi_\vphi\br{\va\given \vs} d_b\br{\vs}} = \E_{\vs \sim d_b\br{\vs}}\left[\mathrm{JSD}\br{\pi_b(\va \given \vs}, \pi_\vphi(\va \given \vs)) \right]
    $ . \\
    \textbf{(2)} Under the state-action joint-matching scheme, the discriminator is optimized towards estimating the desired JSD; while under the classical policy-matching scheme, the discriminator is optimized towards estimating a lower bound of the desired JSD.
\end{theorem}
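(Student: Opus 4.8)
For part \textbf{(1)}, the plan is a direct computation. Both measures $\pi_b(\va\given\vs)d_b(\vs)$ and $\pi_\vphi(\va\given\vs)d_b(\vs)$ share the same state marginal $d_b(\vs)$, so their mixture $M(\vs,\va)=\tfrac12\br{\pi_b(\va\given\vs)+\pi_\vphi(\va\given\vs)}d_b(\vs)$ also has state marginal $d_b(\vs)$, with conditional $\tfrac12\br{\pi_b(\va\given\vs)+\pi_\vphi(\va\given\vs)}$. I would write the joint JSD as $\tfrac12\KL\br{\pi_b d_b\,\|\,M}+\tfrac12\KL\br{\pi_\vphi d_b\,\|\,M}$, expand each KL as an integral over $\vs$ and $\va$, and observe that inside the logarithm the $d_b(\vs)$ factors cancel between numerator and denominator. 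What remains is $\int d_b(\vs)\int \pi_b(\va\given\vs)\log\frac{\pi_b(\va\given\vs)}{\frac12(\pi_b+\pi_\vphi)}\,d\va\,d\vs$ plus the symmetric term, which is exactly $\E_{\vs\sim d_b(\vs)}\sbr{\mathrm{JSD}\br{\pi_b(\va\given\vs),\pi_\vphi(\va\given\vs)}}$. This is routine; the only care needed is noting the state-conditional JSD is well-defined pointwise in $\vs$ and the outer expectation is over the common marginal.

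For part \textbf{(2)}, the key is to recall (from the GAN background in \Secref{sec:gan_prelim}) that with the generator/data samples fixed, the inner maximization over the discriminator of $V(D_\vw,G_\vphi)$ is attained pointwise by $D^*(\vu)=\frac{p_{\mathrm{data}}(\vu)}{p_{\mathrm{data}}(\vu)+p_{\mathrm{gen}}(\vu)}$, and plugging this in yields $-\log 4 + 2\,\mathrm{JSD}(p_{\mathrm{data}}\|p_{\mathrm{gen}})$. So the plan is to identify, for each scheme, what $p_{\mathrm{data}}$ and $p_{\mathrm{gen}}$ actually are as distributions over the discriminator's input. Under the joint-matching scheme (Eq.~\eqref{eq:true_fake_sample}), the data input $\vy=(\vs,\va)$ is a genuine sample from $\pi_b(\va\given\vs)d_b(\vs)$ and the generator input $\vx=(\Tilde\vs,\Tilde\va)$ is a genuine sample from $\pi_\vphi(\va\given\vs)d_b(\vs)$ (since $\Tilde\vs$ is resampled from $\sD$ independently). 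Hence the inner optimization delivers $2\,\mathrm{JSD}\sbr{\pi_b d_b\,\|\,\pi_\vphi d_b}$ up to the constant — the ``desired JSD'' by part (1).

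The substantive part is showing the classical scheme (Eq.~\eqref{eq:cond_true_fake_sample}) only attains a \emph{lower bound}. Here the subtlety is that $\vx=(\vs,\Tilde\va)$ reuses the \emph{same} $\vs$ drawn together with the data action $\va$. In practice one draws a single action $\Tilde\va\sim\pi_\vphi(\cdot\given\vs)$ per sampled state, so over a minibatch the discriminator effectively sees, at each state, one ``true'' action and one ``fake'' action. The argument I would make: the classical discriminator is constrained to use the \emph{paired} structure — it never gets to compare the full conditional $\pi_\vphi(\cdot\given\vs)$ against $\pi_b(\cdot\given\vs)$, only one draw each — so the value of its inner maximization is upper-bounded by (and generally strictly below) what an unconstrained discriminator on the true joints would achieve. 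Concretely, I would frame the paired objective as a restricted maximization over discriminators that can only access a single-sample empirical surrogate of each conditional, invoke Jensen/data-processing to show this restricted sup is no larger than the unrestricted one, and conclude it equals $-\log4+2\cdot(\text{something}\le\mathrm{JSD})$.

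\textbf{Main obstacle.} The delicate step is making the ``lower bound'' claim precise, because at the \emph{population} level — if one genuinely had access to the full conditional distributions — both schemes target the identical JSD (as the text itself acknowledges just before the theorem). The gap is a finite-sample / estimation phenomenon: the classical scheme's discriminator must estimate a conditional divergence from a single action sample per state, and the variance/bias of that estimator means the \emph{attained} discriminator value underestimates the true JSD. So the proof of (2) must carefully distinguish the idealized population objective from the quantity the discriminator can actually optimize given the sampling scheme of Eq.~\eqref{eq:cond_true_fake_sample}, and pin the inequality to Jensen's inequality applied to the concave map $D\mapsto$ (log-terms in $V$) composed with the averaging induced by the one-sample-per-state structure. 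Getting the statement of what exactly is being lower-bounded — and under which sampling convention — is where I expect the real work to lie.
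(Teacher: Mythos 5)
Your part \textbf{(1)} is correct and takes a genuinely different route from the paper. You decompose the joint JSD definitionally as $\tfrac12\KL\br{\pi_b d_b\,\|\,M}+\tfrac12\KL\br{\pi_\vphi d_b\,\|\,M}$ and cancel the common marginal $d_b(\vs)$ inside the logarithms; the paper instead works through the GAN variational characterization, exhibiting the optimal joint discriminator $D^*(\vs,\va)=\pi_b/(\pi_b+\pi_\vphi)$, noting it coincides with every state-wise optimal discriminator $D^*_\vs(\va)$, and substituting it into both objectives. Your computation is more elementary and self-contained; the paper's version has the side benefit of setting up the discriminator-level language that part (2) is phrased in. Either is fine for (1).

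For part \textbf{(2)} there is a genuine gap in the classical-scheme half. You correctly handle the joint-matching side, but for the classical scheme you go looking for a finite-sample argument (single action draw per state, variance/bias of a one-sample surrogate, ``data-processing/Jensen on the paired structure'') and you candidly admit you cannot pin it down. That is not the paper's argument, and your proposed route is problematic: one-sample estimators do not systematically \emph{under}estimate without a concavity structure you have not identified, and the population-level equality you yourself point out would seem to block any such bound. The missing idea is an \emph{amortization} argument, entirely at the population level: the desired classical objective is $\E_{\vs\sim d_b}\bigl[\max_{D_\vs}\cbr{\E_{\va\sim\pi_b(\cdot\given\vs)}\sbr{\log D_\vs(\va)}+\E_{\tilde\va\sim\pi_\vphi(\cdot\given\vs)}\sbr{\log(1-D_\vs(\tilde\va))}}\bigr]$, an expectation of per-state maxima, whereas any implementation trains a \emph{single} discriminator $D_{\vtheta}(\va\given\vs)$ and therefore optimizes $\max_{D_\vtheta}\E_{\vs}\sbr{\cdots}$. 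The inequality $\max_{D_\vtheta}\E_\vs\sbr{\cdots}\le\E_\vs\sbr{\max_{D_\vs}\cdots}$ (exchanging maximization and expectation, i.e.\ amortizing all the $D^*_\vs$ into one parametric network) is the entire content of the ``lower bound'' claim; under the joint-matching scheme no such exchange occurs because the desired quantity is already a single maximization over one joint discriminator. Note this also clarifies what is being lower-bounded and when the bound is strict: it is the trainable discriminator objective versus the desired divergence, and the gap is due to the restriction to a parametric discriminator class, not to sampling noise.
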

Proof of Theorem~\ref{thm:jsd_lower_bound} is in Appendix~\ref{pf:jsd_lower_bound}. 
In fact, part (2) of Theorem~\ref{thm:jsd_lower_bound} holds not just for JSD, but also for IPM, which we state and prove in Theorem~\ref{thm:fd_lower_bound} in Appendix~\ref{sec:proof}.

\section{Experiments}\label{sec:experiment}
% As discussed in \Secref{sec:intro}, we consider it important the flexibility of the policy class; the well-regularization, and the smoothness of the learned policy.
% To this end, we develop a policy-learning framework that supports training an expressive implicit policy via an effective sample-based regularization without an explicit modeling of the behavior policy, and with techniques encouraging its smoothness.
In this section we test an instantiation of our framework on the continuous-control RL tasks.
Specifically, we first show the effectiveness of the implicit policy, the state-action joint-matching scheme, and the state-smoothing techniques (\Secref{sec:main_result}).
We then show in ablation study (\Secref{sec:abaltion_study}) the contributions of several components.
Finally we discuss the complexity of our method (\Secref{sec:compute_complex}).
% %%%%%%%%%%%%%%%%%%%%%%%%%%%%%%%%

{\bfseries Instantiation.}
We use GAN to approximately control the JSD between the behavior policy and the current policy. 
We use a simple GAN structure with generator (RL policy) and discriminator having two hidden-layers of sizes 400 and 300,
% Our source code builds on the official BCQ repository and largely follows its network architectures.
% Our implementation of the GAN structure 
with the loss and hyperparameter choices following the literature \citep{gan2014, dcgan2016}.
To mimic a hyperparameter-agnostic setting, we minimize hyperparameter tuning across datasets.  
Implementation details and hyperparameters 
% of our full algorithm with GAN joint-matching 
is in Appendix \ref{sec:tech_gan}.

\subsection{Main Results}\label{sec:main_result}
To validate the effectiveness of our framework, we test four implementations of the GAN instantiation:
\textbf{(1)} basic algorithm (\Secref{sec:basic}) regularized by the classical policy-matching scheme Eqs.~\eqref{eq:cond_gan_objective} and \eqref{eq:cond_true_fake_sample}  (``GAN-Cond:Basic''),
\textbf{(2)} basic algorithm regularized by the proposed state-action joint-matching  (``GAN-Joint:Basic''),
\textbf{(3)} full algorithm, which adds state-smoothing techniques onto ``GAN-Joint:Basic'' (``GAN-Joint''),
\textbf{(4)} full algorithm, with the same construction of the regularization coefficient as in TD3+BC (\citet{td3bc2021}, ``GAN-Joint-$\alpha$'').
For our ``GAN-Joint-$\alpha$'' variant detailed in Appendix~\ref{sec:tech_alpha}, we 
\emph{unify the hyperparameter setting across all tested datasets}.
% use $\alpha=10$ across all datasets due to a different penalty scale compared to \citet{td3bc2021}.

We compare our implementations with two policy-matching baselines BEAR \citep{bear2019} and BRAC \citep{brac2019}; and state-of-the-art (SOTA) offline-RL algorithms: CQL \citep{cql2020}, FisherBRC \citep{fisherbrc2021}, TD3+BC, EDAC \citep{edac2021}, and OptiDICE \citep{optidice2021}.
We re-run CQL (details in Appendix~\ref{sec:tech_cql}), FisherBRC, TD3+BC, EDAC, and OptiDICE using the official source codes.
Our evaluation protocol is discussed in Appendix~\ref{sec:rl_details}.
Results for other baselines are from \citet{fu2021d4rl}. 
Table~\ref{table:main} presents the results.
% validates the effectiveness of our full algorithm, together with the efficacy of the implicit policy, state-action joint-matching, and state-smoothing.

Both versions of our full algorithm, ``GAN-Joint-$\alpha$'' and ``GAN-Joint'' on average outperform the baseline algorithms, and their results are relatively stable across datasets that possess diverse nature.
Our full algorithms especially perform robustly and comparatively-well on the high-dimensional Adroit tasks and the Maze2D tasks that are collected by non-Markovian policies, both of which are traditionally considered as hard in offline RL.
On the MuJoCo domain, our full algorithms show their abilities to learn from datasets collected by a mixture of behavior policies, and from medium-quality examples. 
% which are likely the datasets to encounter in real-world applications.
Further, the comparison with OptiDICE may show an overall benefit of our methods over directly matching stationary state-action distributions via behavior cloning and DICE.
These results support our design of an implicit policy, state-action joint-matching, and explicit state-smoothing. 

Comparing ``GAN-Cond:Basic'' with the baseline algorithms, especially BEAR and BRAC that also use policy-matching regularization but with Gaussian policies, we see that an implicit policy does in general help the performance.
This aligns with our intuition in Sections \ref{sec:intro} and \ref{sec:motivation} of the incapability of the uni-modal Gaussian policy in capturing multiple action-modes.

To verify the gain of our state-action joint-matching scheme over the classical policy-matching, apart from the comparison between our full algorithms with BEAR and BRAC, two classical policy-matching methods, we further compare ``GAN-Joint:Basic'' with ``GAN-Cond:Basic.''
On $11$ out of $16$ datasets, ``GAN-Joint:Basic'' wins ``GAN-Cond:Basic,'' while results on other datasets are close.
This empirical gain may be related to the advantage of the state-action joint-matching scheme, \textit{e.g.}, better finite-sample property (Theorem~\ref{thm:jsd_lower_bound}) and smoothness in the state-action mapping (\Secref{sec:policy_matching_reg}).

Comparing ``GAN-Joint'' with ``GAN-Joint:Basic,'' we see that our state-smoothing techniques in general help the performance.
This gain may be related to a smoother action-choice at states not covered by the offline dataset, and a more regularized Bellman backup. 
Note that the smoothing strength here is  unified across all datasets. %different datasets may require different smoothing strength
In fact, Table~\ref{table:sigma_sweep}, when viewed row-wise, shows that the gain of our smoothing techniques could be further boosted if allowing per-dataset tuning.

\begin{table}[tb] 
% \vspace{-5mm}
\captionsetup{font=footnotesize}
\caption{
\footnotesize 
Normalized returns for experiments on the D4RL tasks.
% Mean and standard deviation across random seeds are reported.
High average score and low average rank are desirable.
Here, ``hcheetah'' denotes ``halfcheetah'', ``med'' denotes ``medium'', ``rep'' denotes ``replay'', ``exp'' denotes ``expert'', ``GAN-Joint:B'' denotes ``GAN-Joint:Basic'', and ``GAN-Cond:B'' denotes ``GAN-Cond:Basic''.
} 
\label{table:main} 
\def\arraystretch{1.2}
\resizebox{\textwidth}{!}{
\centering 
\begin{tabular}{l||ccccccc|cccc}
\toprule
                 Task Name &   BEAR  &   BRAC  &                               CQL &                        FisherBRC &                           TD3+BC &                              EDAC &                          OptiDICE &                GAN-Joint-$\alpha$ &                         GAN-Joint &                   GAN-Joint:B &                    GAN-Cond:B \\
\midrule
maze2d-large &     4.6 &      40.6 &   43.7 $\pm$ {\footnotesize 18.6} &   -2.1 $\pm$ {\footnotesize 0.4} &  84.3 $\pm$ {\footnotesize 18.1} &    -0.1 $\pm$ {\footnotesize 8.5} &  130.7 $\pm$ {\footnotesize 56.1} &  200.5 $\pm$ {\footnotesize 23.6} &   63.5 $\pm$ {\footnotesize 21.2} &   57.2 $\pm$ {\footnotesize 16.5} &   36.9 $\pm$ {\footnotesize 17.9} \\
maze2d-med &    29.0 &      33.8 &    30.7 $\pm$ {\footnotesize 9.8} &   4.6 $\pm$ {\footnotesize 20.4} &  47.2 $\pm$ {\footnotesize 41.5} &   25.7 $\pm$ {\footnotesize 10.7} &  140.8 $\pm$ {\footnotesize 44.0} &   72.8 $\pm$ {\footnotesize 21.8} &   74.3 $\pm$ {\footnotesize 25.5} &    44.6 $\pm$ {\footnotesize 9.1} &   42.6 $\pm$ {\footnotesize 21.4} \\
              maze2d-umaze &     3.4 &     -16.0 &    50.5 $\pm$ {\footnotesize 7.9} &  -2.3 $\pm$ {\footnotesize 17.9} &  -0.5 $\pm$ {\footnotesize 15.6} &    19.8 $\pm$ {\footnotesize 3.1} &  107.6 $\pm$ {\footnotesize 33.1} &   58.8 $\pm$ {\footnotesize 22.7} &   47.1 $\pm$ {\footnotesize 18.8} &   50.8 $\pm$ {\footnotesize 15.1} &   56.6 $\pm$ {\footnotesize 22.2} \\
        hcheetah-med &    41.7 &      46.3 &    39.0 $\pm$ {\footnotesize 0.8} &   41.1 $\pm$ {\footnotesize 0.6} &   42.8 $\pm$ {\footnotesize 0.2} &    50.6 $\pm$ {\footnotesize 1.3} &    38.2 $\pm$ {\footnotesize 0.5} &    44.0 $\pm$ {\footnotesize 0.2} &    44.0 $\pm$ {\footnotesize 0.2} &    43.8 $\pm$ {\footnotesize 0.4} &    43.7 $\pm$ {\footnotesize 0.4} \\
           walker2d-med &    59.1 &      81.1 &   60.2 $\pm$ {\footnotesize 30.8} &   78.4 $\pm$ {\footnotesize 1.8} &   78.8 $\pm$ {\footnotesize 3.2} &    84.0 $\pm$ {\footnotesize 1.3} &   14.3 $\pm$ {\footnotesize 15.0} &    69.9 $\pm$ {\footnotesize 6.4} &    69.3 $\pm$ {\footnotesize 8.8} &    66.8 $\pm$ {\footnotesize 4.9} &    66.8 $\pm$ {\footnotesize 7.4} \\
             hopper-med &    52.1 &      31.1 &   34.5 $\pm$ {\footnotesize 11.7} &   99.2 $\pm$ {\footnotesize 0.3} &   99.6 $\pm$ {\footnotesize 0.7} &    29.7 $\pm$ {\footnotesize 0.1} &   92.3 $\pm$ {\footnotesize 16.9} &   86.4 $\pm$ {\footnotesize 10.9} &   66.1 $\pm$ {\footnotesize 24.0} &   69.1 $\pm$ {\footnotesize 20.7} &   67.5 $\pm$ {\footnotesize 21.3} \\
 hcheetah-med-rep &    38.6 &      47.7 &    43.4 $\pm$ {\footnotesize 0.8} &   43.2 $\pm$ {\footnotesize 1.3} &   42.8 $\pm$ {\footnotesize 1.3} &    50.6 $\pm$ {\footnotesize 0.6} &    39.8 $\pm$ {\footnotesize 0.8} &    33.4 $\pm$ {\footnotesize 2.4} &    33.0 $\pm$ {\footnotesize 1.8} &    31.3 $\pm$ {\footnotesize 2.9} &    32.3 $\pm$ {\footnotesize 2.2} \\
    walker2d-med-rep &    19.2 &       0.9 &    16.4 $\pm$ {\footnotesize 6.6} &  38.4 $\pm$ {\footnotesize 16.6} &   22.5 $\pm$ {\footnotesize 5.3} &    15.2 $\pm$ {\footnotesize 2.1} &    20.2 $\pm$ {\footnotesize 5.8} &     6.7 $\pm$ {\footnotesize 2.2} &     9.3 $\pm$ {\footnotesize 2.0} &    10.1 $\pm$ {\footnotesize 1.9} &     7.8 $\pm$ {\footnotesize 3.2} \\
      hopper-med-rep &    33.7 &       0.6 &    29.5 $\pm$ {\footnotesize 2.3} &   33.4 $\pm$ {\footnotesize 2.8} &   31.3 $\pm$ {\footnotesize 3.1} &    27.1 $\pm$ {\footnotesize 0.2} &    29.0 $\pm$ {\footnotesize 4.9} &    30.9 $\pm$ {\footnotesize 3.2} &    30.0 $\pm$ {\footnotesize 2.9} &    33.6 $\pm$ {\footnotesize 7.9} &    26.7 $\pm$ {\footnotesize 1.7} \\
 hcheetah-med-exp &    53.4 &      41.9 &   34.5 $\pm$ {\footnotesize 15.8} &   92.5 $\pm$ {\footnotesize 8.5} &   87.5 $\pm$ {\footnotesize 7.8} &   31.9 $\pm$ {\footnotesize 13.0} &   91.2 $\pm$ {\footnotesize 16.6} &   72.6 $\pm$ {\footnotesize 11.1} &   72.8 $\pm$ {\footnotesize 11.2} &   70.5 $\pm$ {\footnotesize 11.1} &   72.8 $\pm$ {\footnotesize 10.4} \\
    walker2d-med-exp &    40.1 &      81.6 &   79.8 $\pm$ {\footnotesize 22.7} &  98.2 $\pm$ {\footnotesize 13.1} &  94.1 $\pm$ {\footnotesize 18.8} &   98.3 $\pm$ {\footnotesize 26.2} &   67.1 $\pm$ {\footnotesize 30.2} &    79.6 $\pm$ {\footnotesize 1.9} &   75.3 $\pm$ {\footnotesize 12.1} &   67.4 $\pm$ {\footnotesize 13.5} &   59.9 $\pm$ {\footnotesize 16.5} \\
      hopper-med-exp &    96.3 &       0.8 &  103.5 $\pm$ {\footnotesize 20.2} &  112.3 $\pm$ {\footnotesize 0.3} &  112.0 $\pm$ {\footnotesize 0.3} &   111.5 $\pm$ {\footnotesize 0.3} &  101.8 $\pm$ {\footnotesize 18.5} &   71.1 $\pm$ {\footnotesize 10.7} &   86.4 $\pm$ {\footnotesize 19.0} &   76.3 $\pm$ {\footnotesize 21.3} &   68.5 $\pm$ {\footnotesize 22.1} \\
                 pen-human &    -1.0 &       0.6 &    2.1 $\pm$ {\footnotesize 13.7} &    0.0 $\pm$ {\footnotesize 3.9} &   -3.8 $\pm$ {\footnotesize 0.6} &   17.8 $\pm$ {\footnotesize 30.2} &    -0.1 $\pm$ {\footnotesize 5.6} &   71.0 $\pm$ {\footnotesize 23.2} &   57.5 $\pm$ {\footnotesize 22.6} &   61.0 $\pm$ {\footnotesize 16.6} &   52.9 $\pm$ {\footnotesize 16.5} \\
                pen-cloned &    26.5 &      -2.5 &     1.5 $\pm$ {\footnotesize 6.2} &   -2.0 $\pm$ {\footnotesize 0.8} &   -3.5 $\pm$ {\footnotesize 0.5} &   47.1 $\pm$ {\footnotesize 21.4} &     1.4 $\pm$ {\footnotesize 6.8} &    27.6 $\pm$ {\footnotesize 7.1} &   23.2 $\pm$ {\footnotesize 14.2} &   23.6 $\pm$ {\footnotesize 16.7} &   22.0 $\pm$ {\footnotesize 17.6} \\
                pen-exp &   105.9 &      -3.0 &   95.9 $\pm$ {\footnotesize 18.1} &  31.6 $\pm$ {\footnotesize 24.4} &  22.4 $\pm$ {\footnotesize 16.9} &  103.0 $\pm$ {\footnotesize 16.5} &    -1.1 $\pm$ {\footnotesize 4.7} &  134.5 $\pm$ {\footnotesize 10.8} &  140.2 $\pm$ {\footnotesize 12.9} &  131.1 $\pm$ {\footnotesize 13.2} &  126.8 $\pm$ {\footnotesize 14.1} \\
               door-exp &   103.4 &      -0.3 &   87.9 $\pm$ {\footnotesize 21.6} &  57.6 $\pm$ {\footnotesize 37.7} &   -0.3 $\pm$ {\footnotesize 0.0} &   86.0 $\pm$ {\footnotesize 14.9} &   87.9 $\pm$ {\footnotesize 25.8} &   102.2 $\pm$ {\footnotesize 4.5} &   103.5 $\pm$ {\footnotesize 0.9} &   103.0 $\pm$ {\footnotesize 3.4} &   101.8 $\pm$ {\footnotesize 5.1} \\
               \midrule
             Average Score &    44.1 &      24.1 &                              47.1 &                             45.3 &                             47.3 &                              49.9 &                              60.1 &                              \textbf{72.6} &                              \textbf{62.2} &                              58.8 &                              55.4 \\
              Average Rank &     6.8 &       7.8 &                               6.7 &                              5.8 &                              5.4 &                               5.7 &                               5.9 &                               \textbf{4.4} &                               \textbf{4.9} &                               5.4 &                               6.8 \\
\bottomrule
\end{tabular}
}
% \vspace{-6mm}
\end{table}

\subsection{Ablation Study}\label{sec:abaltion_study}  

The ablation study serves to understand the contributions of several algorithmic designs.
Unless stated otherwise, hyperparameters for all algorithmic variants on all datasets are in Table~\ref{table:gan_param}.\\ 

{\bfseries (a):} \textit{Is implicit policy better than the Gaussian policy under our state-action joint-matching scheme?} \\

Table \ref{table:implicit_normal} compares the results of our basic joint-matching algorithm, ``GAN-Joint:Basic,'' with its counterpart where the implicit policy therein is replaced by a Gaussian policy. 
To make a fair comparison, the experimental settings remain the same. Technical details are on Appendix~\ref{sec:tech_gaussian_policy}.  

On $11$ out of $16$ datasets, our basic joint-matching algorithm has higher average return than the Gaussian policy variant. 
This empirical result coincides with our intuition in \Secref{sec:basic} and results in \Secref{sec:main_result} that a Gaussian policy is less flexible to capture all the rewarding actions, of which an implicit policy is likely to be capable.
Appendix~\ref{sec:compare_implicit_gauss} further discusses this comparison and shows in plots that \emph{a Gaussian policy does leave out action modes} in the ``maze2d-umaze'' dataset. \\

{\bfseries (b):} \textit{Does state-smoothing at policy-matching help?}\\

Table \ref{table:smoothing_joint_matching} compares our two full algorithms with their variants of no state-smoothing in the state-action joint-matching scheme.
Our full algorithms overall perform better than the no state-smoothing variants.
The gain may be related to a better coverage of the state-space by the smoothed state-distribution (\Secref{sec:enhance_comp}), which can lead to a more robust action choice at unseen states. \\
% This comparison can potentially be more significant should one is allowed to per-dataset tune the $\sigma_J$ parameter.

{\bfseries (c):} \textit{Does state-smoothing at Bellman backup matter?}\\

Table~\ref{table:smoothing_bellman} compares our two full algorithms with their variants of no state-smoothing in Bellman backup.
Again, overall, our full algorithms perform better than the no state-smoothing versions, showing the benefit of smoothing the empirical transition kernel $\delta_{\vs'}$ (\Secref{sec:enhance_comp}), \textit{e.g.}, taking the stochasticity of state-transitions into account.
In this and the above ablation {\bfseries (b)}, we use the same smoothing strength across all datasets, while a per-dataset tuning may sharpen the comparisons.\\

{\bfseries (d):} \textit{How important is the standard deviation of the Gaussian noise injected in state-smoothing? }\\

To ease hyperparameter tuning, \emph{in practice we fix $\sigma_B = \sigma_J \triangleq \sigma$} (see Appendix~\ref{sec:tech_gan}).
Table~\ref{table:sigma_sweep} tests the robustness of our full algorithm ``GAN-Joint'' to the $\sigma$ hyperparameter, where 
$\sigma$ sweeps over $\cbr{1 \times 10^{-2}, 3 \times 10^{-3}, 1 \times 10^{-3}, 3 \times 10^{-4}, 1 \times 10^{-4}, 0}$.
We see that our method is relatively insensitive to the choice of $\sigma$, especially in the range $\sigma \in \left[1 \times 10^{-4}, 1 \times 10^{-3} \right)$, where the overall performance varies little with $\sigma$.
A too-small $\sigma$ cannot provide enough smoothing to the state distributions while a too-large $\sigma$ may highly distort the information contained in the offline dataset, such as the state-transition kernel.
In both cases, a degradation in the overall performance is expected. \\

{\bfseries (e):} \textit{Does approximately matching the JSD empirically perform better than matching the IPM?}\\

In our preliminary study, we try a variant of our ``GAN-Joint'' that approximately minimizes the dual form of the Wasserstein-$1$ distance, an instance of IPM, by changing only the GAN structure therein into the WGAN-GP (\citet{wgangp2017}, dubbed as ``W1-Joint'').
Table~\ref{table:results_medium_expert} compares our ``GAN-Joint'' with ``W1-Joint'' under varying Lipschitz-1 constraint $\lambda_{\mathrm{GP}}$ on four MoJoCo datasets.
Though ``W1-Joint'' does not fail on these datasets, its results are mediocre, likely because we have not found for it suitable unified hyperparameter and network structure.
We leave further investigation on W1-Joint and other instances of IPM, \textit{e.g.}, the Maximum Mean Discrepancy \citep{kerneltwosampletest2012}, as future work.

\vspace{-1.5em}
\begin{multicols}{2}% 2-column layout
  \begin{minipage}{0.49\textwidth}
    \begin{table}[H]
\captionsetup{font=scriptsize}
\caption{\scriptsize 
Preliminary results on four MuJoCo datasets over $3$ random seeds.
``GAN'' denotes ``GAN-Joint'', ``W1'' denotes ``W1-Joint''.
}
\label{table:results_medium_expert}
\centering 
\def\arraystretch{1.}
\resizebox{\textwidth}{!}{
\begin{tabular}{lcccc}
\toprule
                 Task Name &  GAN &  W1 ($\lambda_{\mathrm{GP}}$=$0.1$) &  W1 ($\lambda_{\mathrm{GP}}$=$1$) &  W1 ($\lambda_{\mathrm{GP}}$=$10$) \\
\midrule
 halfcheetah-med-exp &       75.8 &                                  32.2 &                                26.8 &                                 30.7 \\
    walker2d-med-exp &       71.2 &                                  65.2 &                                53.0 &                                 19.7 \\
      hopper-med-exp &       99.9 &                                  22.7 &                                40.4 &                                 25.1 \\
        halfcheetah-med &       44.1 &                                  43.0 &                                45.1 &                                 42.5 \\
\bottomrule
\end{tabular}
}
\end{table}
  \end{minipage}
  
\begin{minipage}{0.49\textwidth}
\begin{table}[H]
\captionsetup{font=scriptsize}
\caption{\scriptsize (Approximate) GPU memory and total training time of our ``GAN-Joint-$\alpha$'' and some baselines on D4RL MuJoCo datasets.}
\label{table:complexity}
\centering 
\def\arraystretch{1.}
\resizebox{\textwidth}{!}{
\begin{tabular}{lccccc}
\toprule
                       & {\bfseries Our}  & CQL & FisherBRC & EDAC  & OptiDICE \\ \midrule
Mem (GB)     & 1.4 & 1.5 & 1.6   & 1.4     &  2 \\
Time (Hour) & 9  & 13  & 8    & 16      &   9 \\ \bottomrule
\end{tabular}
}
\end{table}
\end{minipage}
\end{multicols}
% \vspace{-5mm}

\subsection{Complexity of the Purposed Method} \label{sec:compute_complex}

Table~\ref{table:complexity} compares the computational complexity of our ``GAN-Joint-$\alpha$''  with some baselines.
Note that we use a small GAN structure with discriminator having two hidden-layers of sizes $(400, 300)$, which only adds a small overhead to the vanilla actor-critic algorithm.
We note that CQL and FisherBRC use larger network sizes for actor and critic.
CQL and EDAC require more training steps. 
FisherBRC and OptiDICE need a cloned behavior policy, and OptiDICE uses Gaussian mixture policy with several mixture components for behavior cloning.

As shown on Table~\ref{table:gan_param}, our ``GAN-Joint-$\alpha$'' adds only two more hyperparameters to the classical policy-matching methods, {\itshape i.e.}, $\sigma$ and $N_B$.
Note that ``GAN-Joint-$\alpha$'' achieves good results despite fixing all hyperparameters across all tested datasets, \textit{i.e.}, no per-dataset tuning.
Hence, this default setting can serve as a good starting point for new datasets.

\section{Conclusion} \label{sec:conclusion}
In this paper, we develop a framework that supports learning a flexible yet well-regularized policy in offline RL.
Specifically, we train a fully-implicit policy via regularizing %ation on
the difference between the current policy and the behavior policy during the training process. 
An effective instantiation of our framework through the GAN structure is provided for approximately minimizing the JSD between the current and the behavior policies.
Other divergence metrics, such as the IPM, may also be applied and are left for future work.
We further propose a simple modification to the classical policy-matching scheme for a better regularizing \textit{w.r.t.} the dual form of JSD and IPM.
Moreover, we augment our policy-matching method with explicit state-smoothing techniques to enhance its generalizability on states beyond the dataset.
On the theoretical side, we show the correctness of the policy-matching scheme in matching the underlying undiscounted state-action visitations, and the correctness and a good finite-sample property of our proposed modification.
% We note that while the policy-matching regularization is an effective strategy to control the undiscounted visitations, as shown in our paper and prior work, an efficient more-direct way to such control is left for future work. 
We validate the efficacy of our framework and implementations through experiments and ablation study on the D4RL benchmark.
% Finally, a potential negative societal impact of our method lies in the supervised-learning nature of offline RL, that the learned policy is subject to the bias in the offline dataset, which can be exploited by RL.

%%%%%%%%%%%%%%%%%%%%%%%%%%%%%%%%%%%%%%%%%%%%%%%%%%%%%%%%%%%%%%%%%%%%%%%%%%%%%%%

% \clearpage
% \section*{References}
\bibliographystyle{icml2022}
\bibliography{joint_matching_icml2022}

%%%%%%%%%%%%%%%%%%%%%%%%%%%%%%%%%%%%%%%%%%%%%%%%%%%%%%%%%%%%

\clearpage
\appendix

\begin{center}
\Large
\textbf{Appendix}
\end{center}

\section{Related Work} \label{sec:related_work}
\textbf{Offline Reinforcement Learning.}
Three major themes currently exist in offline-RL research.
The first focuses on more robustly estimating the action-value function \citep{rem2020,addressingextra2021} or providing a conservative estimate of the Q-values \citep{cql2020,combo2021,s4rl2021}, which may better guide the policy optimization process.
The second research theme aims at designing a tactful behavior-cloning scheme so as to learn only from ``good'' actions in the offline dataset \citep{crr2020,decisiontrans2021}.
In this paper we adopt the third line of research that tries to constrain the current policy to be close to the behavior policy during the training process, under the notion that Q-value estimates at unfamiliar state-action pairs can be pathologically worse due to a lack of supervised training.
Specifically, \citet{bear2019} and \citet{uwac2021} use conditional variational autoencoder (CVAE) \citep{vae2013, cvae2015} to train a behavior cloning policy to sample multiple actions at each state for calculating the MMD constraint.
\citet{brac2019}, \citet{abm2020}, and \citet{mabe2021} fit a (Gaussian) behavioral prior to the offline dataset trained by (weighted) maximum likelihood objective.
\citet{offlinerldialog2019} consider a pre-trained generative prior of human dialog data before applying KL-control to the current policy.
Note that these works essentially constrain the distance between the current policy and the \emph{cloned} behavior policy, where the latter may deviate from the \emph{true} behavior.
\citet{sibb2019} assume a known stochastic data-collecting behavior policy.
\citet{bcq2019} and \citet{oraac2021} implicitly control the state-conditional action distribution by decomposing action into a behavior cloning component, trained by fitting a CVAE onto the offline data, and a perturbation component, trained to optimize the (risk-averse) returns.
Besides, some work, such as \citet{brac2019}, directly estimates and regularizes the divergence between the state-conditional action distributions, implementing the regularization as Eqs.~\eqref{eq:cond_gan_objective} and \eqref{eq:cond_true_fake_sample}.
Further, we notice that most of the existing offline RL work use deterministic or uni-modal Gaussian policy, whose flexibility is limited, as discussed in Sections~\ref{sec:basic}, \ref{sec:main_result} and \ref{sec:abaltion_study} \textbf{(a)}.
In the paper, we \textbf{(1)} develop a framework to train a flexible fully-implicit policy; and \textbf{(2)} propose a simple modification for improved matching \textit{w.r.t.} the dual form of JSD and IPM, which avoids using a single point to estimate the divergence between two distributions and removes the need for a good approximator of the behavior policy (\Secref{sec:basic}).
% , while implicitly smooths the mapping from the states to actions

\textbf{Online Off-policy RL.} A large class of modern online off-policy deep RL algorithms trains the policy using experience replay buffer \citep{replaybuffer1992}, which is a storage of the rollouts of past policies encountered in the training process \citep{dqn2013,ddpg2016,rlenergypolicy2017,td32018,sacnew2018,tqc2020,idac2020,sunrise2021}.
This approach essentially use the state-visitation frequency of past policies to approximate that of the current policy (Eq.~\ref{eq:ac_actor_approx}).
This notion is adopted in policy-matching offline-RL algorithms in both the policy improvement step and in the implementation of the policy-matching regularization, since ideally one would like to match the undiscounted state-action visitation induced by the current policy with the offline dataset.
We note that in the standard implementation of off-policy RL algorithms, the discount factor does not act on the collection and the utilization of the replay buffer.
Hence, the replay buffer can be viewed as samples from the \emph{undiscounted} state-action visitation induced by the current and past policies.
Similar to our work, a GAN structure is adopted by GAIL \citep{gail2016} and its follow-ups. However, these works target imitation learning and require online interactions with the environment, and thus do not follow the offline RL setting.

\textbf{Computational Distribution Matching.} Many computationally efficient algorithms exist to match two probability distributions with respective to some statistical divergence.
GAN \citep{gan2014} approximately minimizes the Jensen--Shannon divergence between the the model's distribution and the data-generating distribution.
A similar adversarial training strategy is applied to estimate a class of statistical divergence, termed the Integral Probability Metrics \citep{ipm1997}, in a sample-based manner.
For example, \citet{wgan2017,wgangp2017,spectralnorm2018} estimate the Wasserstein-1 distance by enforcing the Lipschitz norm of the witness function to be bounded by $1$.
\citet{mmdganweightclip2017,mmdgangp2018} consider Maximum Mean Discrepancy (MMD) \citep{kerneltwosampletest2012} with learnable kernels.
\citet{cramerdist2017} study the energy distance, an instance of the MMD \citep{energydistmmd2013}.
% Another important class of statistical divergence, the optimal transport distance, is also studied under a computational lens.
% \citet{computationalot2019} summarize the recent developments in computational optimal transport.
% \citet{sinkhorn2013,sinkhorndiv2019} study sinkhorn-based divergence, which solve an entropic regularized form of the optimal transport problem \citep{otschrodinger1933} and may result in a physically more feasible distance measure in some real-world application. 
% \citet{act2021} contribute to this line of research by developing conditional transport, which possesses a better control on the mode-covering and mode-seeking behaviors in distributional matching.
In this paper, we focus on the classical GAN structure to approximately control the JSD between the current and behavior policies, since the GAN structure is simple, effective and well-studied.
Furthermore, we propose a simple modification to improve policy-matching \textit{w.r.t.} the dual form of JSD and IPM.
Other divergence metrics may also be applicable to our framework and are left for future work.

\section{Additional Tables}\label{sec:additional_tables}

Table~\ref{table:implicit_normal} - \ref{table:sigma_sweep} correspond to the results of our ablation study in \Secref{sec:abaltion_study}.

    \begin{table}[H] 
    \captionsetup{font=small}
\caption{\small Normalized returns for comparing the implicit and the Gaussian policy on the basic algorithm (\Secref{sec:basic}) on the D4RL suite of tasks. The reported number are the means and standard deviations of the normalized returns of the last five rollouts across five random seeds $\cbr{0,1,2,3,4}$.} 
\label{table:implicit_normal} 
\centering 
\begin{tabular}{l||cc}
\toprule
                 Task Name &                  GAN-Joint: Basic & GAN-Joint: Basic, Gaussian Policy \\
\midrule
              maze2d-umaze &   50.8 $\pm$ {\footnotesize 15.1} &   24.0 $\pm$ {\footnotesize 10.7} \\
             maze2d-medium &    44.6 $\pm$ {\footnotesize 9.1} &    -0.2 $\pm$ {\footnotesize 6.7} \\
              maze2d-large &   57.2 $\pm$ {\footnotesize 16.5} &    5.4 $\pm$ {\footnotesize 10.9} \\
        halfcheetah-medium &    43.8 $\pm$ {\footnotesize 0.4} &    43.7 $\pm$ {\footnotesize 0.3} \\
           walker2d-medium &    66.8 $\pm$ {\footnotesize 4.9} &   53.1 $\pm$ {\footnotesize 12.0} \\
             hopper-medium &   69.1 $\pm$ {\footnotesize 20.7} &   78.0 $\pm$ {\footnotesize 14.9} \\
 halfcheetah-medium-replay &    31.3 $\pm$ {\footnotesize 2.9} &    31.2 $\pm$ {\footnotesize 2.1} \\
    walker2d-medium-replay &    10.1 $\pm$ {\footnotesize 1.9} &     9.2 $\pm$ {\footnotesize 1.3} \\
      hopper-medium-replay &    33.6 $\pm$ {\footnotesize 7.9} &    25.2 $\pm$ {\footnotesize 2.2} \\
 halfcheetah-medium-expert &   70.5 $\pm$ {\footnotesize 11.1} &    75.5 $\pm$ {\footnotesize 9.3} \\
    walker2d-medium-expert &   67.4 $\pm$ {\footnotesize 13.5} &   58.8 $\pm$ {\footnotesize 16.1} \\
      hopper-medium-expert &   76.3 $\pm$ {\footnotesize 21.3} &   82.8 $\pm$ {\footnotesize 13.9} \\
                 pen-human &   61.0 $\pm$ {\footnotesize 16.6} &   52.9 $\pm$ {\footnotesize 18.9} \\
                pen-cloned &   23.6 $\pm$ {\footnotesize 16.7} &   37.2 $\pm$ {\footnotesize 14.7} \\
                pen-expert &  131.1 $\pm$ {\footnotesize 13.2} &  118.0 $\pm$ {\footnotesize 12.3} \\
               door-expert &   103.0 $\pm$ {\footnotesize 3.4} &   38.9 $\pm$ {\footnotesize 21.0} \\
               \midrule
             Average Score &                              \textbf{58.8} &                              45.9 \\
\bottomrule
\end{tabular}
\end{table}
  
\begin{table}[H] 
\captionsetup{font=small}
\caption{\small Normalized returns for comparing our full algorithms with their counterpart of no state-smoothing in the state-action joint-matching scheme. The reported number are the means and standard deviations of the normalized returns of the last five rollouts across five random seeds $\cbr{0,1,2,3,4}$. ``Joint'' denotes ``GAN-Joint'', ``Joint-$\alpha$'' denotes ``GAN-Joint-$\alpha$''.} 
\label{table:smoothing_joint_matching} 
\centering 
\def\arraystretch{1.}
\resizebox{\textwidth}{!}{
\begin{tabular}{l||cc|cc}
\toprule
                 Task Name &          Joint: Full & Joint: No Smoothing &          Joint-$\alpha$: Full & Joint-$\alpha$: No Smoothing \\
\midrule
              maze2d-umaze &   47.1 $\pm$ {\footnotesize 18.8} &                    47.3 $\pm$ {\footnotesize 10.5} &   58.8 $\pm$ {\footnotesize 22.7} &                    35.0 $\pm$ {\footnotesize 20.5} \\
             maze2d-medium &   74.3 $\pm$ {\footnotesize 25.5} &                    41.4 $\pm$ {\footnotesize 15.5} &   72.8 $\pm$ {\footnotesize 21.8} &                    56.5 $\pm$ {\footnotesize 35.7} \\
              maze2d-large &   63.5 $\pm$ {\footnotesize 21.2} &                    63.0 $\pm$ {\footnotesize 26.5} &  200.5 $\pm$ {\footnotesize 23.6} &                   114.9 $\pm$ {\footnotesize 72.4} \\
        halfcheetah-medium &    44.0 $\pm$ {\footnotesize 0.2} &                     43.9 $\pm$ {\footnotesize 0.4} &    44.0 $\pm$ {\footnotesize 0.2} &                     44.1 $\pm$ {\footnotesize 0.4} \\
           walker2d-medium &    69.3 $\pm$ {\footnotesize 8.8} &                    63.5 $\pm$ {\footnotesize 11.1} &    69.9 $\pm$ {\footnotesize 6.4} &                     62.3 $\pm$ {\footnotesize 8.8} \\
             hopper-medium &   66.1 $\pm$ {\footnotesize 24.0} &                    65.2 $\pm$ {\footnotesize 15.3} &   86.4 $\pm$ {\footnotesize 10.9} &                    78.1 $\pm$ {\footnotesize 18.9} \\
 halfcheetah-medium-replay &    33.0 $\pm$ {\footnotesize 1.8} &                     32.3 $\pm$ {\footnotesize 2.5} &    33.4 $\pm$ {\footnotesize 2.4} &                     31.3 $\pm$ {\footnotesize 1.7} \\
    walker2d-medium-replay &     9.3 $\pm$ {\footnotesize 2.0} &                     10.1 $\pm$ {\footnotesize 2.7} &     6.7 $\pm$ {\footnotesize 2.2} &                      6.1 $\pm$ {\footnotesize 3.4} \\
      hopper-medium-replay &    30.0 $\pm$ {\footnotesize 2.9} &                     29.6 $\pm$ {\footnotesize 2.0} &    30.9 $\pm$ {\footnotesize 3.2} &                     32.7 $\pm$ {\footnotesize 5.0} \\
 halfcheetah-medium-expert &   72.8 $\pm$ {\footnotesize 11.2} &                    69.7 $\pm$ {\footnotesize 10.3} &   72.6 $\pm$ {\footnotesize 11.1} &                    70.2 $\pm$ {\footnotesize 12.5} \\
    walker2d-medium-expert &   75.3 $\pm$ {\footnotesize 12.1} &                    72.3 $\pm$ {\footnotesize 18.6} &    79.6 $\pm$ {\footnotesize 1.9} &                    74.1 $\pm$ {\footnotesize 11.8} \\
      hopper-medium-expert &   86.4 $\pm$ {\footnotesize 19.0} &                    74.1 $\pm$ {\footnotesize 15.3} &   71.1 $\pm$ {\footnotesize 10.7} &                    72.8 $\pm$ {\footnotesize 22.8} \\
                 pen-human &   57.5 $\pm$ {\footnotesize 22.6} &                    55.7 $\pm$ {\footnotesize 18.3} &   71.0 $\pm$ {\footnotesize 23.2} &                    62.2 $\pm$ {\footnotesize 23.9} \\
                pen-cloned &   23.2 $\pm$ {\footnotesize 14.2} &                    22.4 $\pm$ {\footnotesize 15.4} &    27.6 $\pm$ {\footnotesize 7.1} &                    28.0 $\pm$ {\footnotesize 12.2} \\
                pen-expert &  140.2 $\pm$ {\footnotesize 12.9} &                    137.6 $\pm$ {\footnotesize 9.9} &  134.5 $\pm$ {\footnotesize 10.8} &                   134.3 $\pm$ {\footnotesize 14.4} \\
               door-expert &   103.5 $\pm$ {\footnotesize 0.9} &                    101.5 $\pm$ {\footnotesize 4.3} &   102.2 $\pm$ {\footnotesize 4.5} &                    100.8 $\pm$ {\footnotesize 5.0} \\
               \midrule
             Average Score &                              \textbf{62.2} &                                               58.1 &                              \textbf{72.6} &                                               62.7 \\
\bottomrule
\end{tabular}
}
\end{table}

%%%%%%%%%%%%%%%%%%%%%%%%%%%%%%%%%%%%%%%%%%%%%%%%%%%%%%%%%%%%%%%%%%%%%%%% 

%%%%%%%%%%%%%%%%%%%%%%%%%%%%%%%%%%%%%%%%%%
\begin{table}[H] 
\captionsetup{font=small}
\caption{
\small
Normalized returns for comparing our full algorithms with their counterpart of no state-smoothing in the Bellman backup. The reported number are the means and standard deviations of the normalized returns of the last five rollouts across five random seeds $\cbr{0,1,2,3,4}$. ``Joint'' denotes ``GAN-Joint'', ``Joint-$\alpha$'' denotes ``GAN-Joint-$\alpha$''.} 
\label{table:smoothing_bellman} 
\def\arraystretch{1.}
\resizebox{\textwidth}{!}{
\centering 
\begin{tabular}{l||cc|cc}
\toprule
                 Task Name &                   Joint: Full & Joint: No Smoothing  &          Joint-$\alpha$: Full & Joint-$\alpha$: No Smoothing \\
\midrule
              maze2d-umaze &   47.1 $\pm$ {\footnotesize 18.8} &                  50.8 $\pm$ {\footnotesize 16.8} &   58.8 $\pm$ {\footnotesize 22.7} &                    51.6 $\pm$ {\footnotesize 28.0} \\
             maze2d-medium &   74.3 $\pm$ {\footnotesize 25.5} &                  53.8 $\pm$ {\footnotesize 20.5} &   72.8 $\pm$ {\footnotesize 21.8} &                    57.5 $\pm$ {\footnotesize 27.3} \\
              maze2d-large &   63.5 $\pm$ {\footnotesize 21.2} &                   55.8 $\pm$ {\footnotesize 6.7} &  200.5 $\pm$ {\footnotesize 23.6} &                   132.2 $\pm$ {\footnotesize 66.7} \\
        halfcheetah-medium &    44.0 $\pm$ {\footnotesize 0.2} &                   44.0 $\pm$ {\footnotesize 0.3} &    44.0 $\pm$ {\footnotesize 0.2} &                     44.1 $\pm$ {\footnotesize 0.3} \\
           walker2d-medium &    69.3 $\pm$ {\footnotesize 8.8} &                   62.6 $\pm$ {\footnotesize 9.3} &    69.9 $\pm$ {\footnotesize 6.4} &                    63.6 $\pm$ {\footnotesize 12.0} \\
             hopper-medium &   66.1 $\pm$ {\footnotesize 24.0} &                  63.6 $\pm$ {\footnotesize 16.4} &   86.4 $\pm$ {\footnotesize 10.9} &                    72.8 $\pm$ {\footnotesize 18.8} \\
 halfcheetah-medium-replay &    33.0 $\pm$ {\footnotesize 1.8} &                   32.0 $\pm$ {\footnotesize 1.7} &    33.4 $\pm$ {\footnotesize 2.4} &                     34.8 $\pm$ {\footnotesize 1.0} \\
    walker2d-medium-replay &     9.3 $\pm$ {\footnotesize 2.0} &                    9.5 $\pm$ {\footnotesize 1.9} &     6.7 $\pm$ {\footnotesize 2.2} &                      8.3 $\pm$ {\footnotesize 2.4} \\
      hopper-medium-replay &    30.0 $\pm$ {\footnotesize 2.9} &                   28.6 $\pm$ {\footnotesize 2.1} &    30.9 $\pm$ {\footnotesize 3.2} &                     30.9 $\pm$ {\footnotesize 2.9} \\
 halfcheetah-medium-expert &   72.8 $\pm$ {\footnotesize 11.2} &                   69.3 $\pm$ {\footnotesize 8.6} &   72.6 $\pm$ {\footnotesize 11.1} &                     71.4 $\pm$ {\footnotesize 9.6} \\
    walker2d-medium-expert &   75.3 $\pm$ {\footnotesize 12.1} &                   83.7 $\pm$ {\footnotesize 9.1} &    79.6 $\pm$ {\footnotesize 1.9} &                     72.1 $\pm$ {\footnotesize 4.7} \\
      hopper-medium-expert &   86.4 $\pm$ {\footnotesize 19.0} &                  73.2 $\pm$ {\footnotesize 12.0} &   71.1 $\pm$ {\footnotesize 10.7} &                    81.4 $\pm$ {\footnotesize 20.0} \\
                 pen-human &   57.5 $\pm$ {\footnotesize 22.6} &                  45.7 $\pm$ {\footnotesize 25.2} &   71.0 $\pm$ {\footnotesize 23.2} &                    53.3 $\pm$ {\footnotesize 27.4} \\
                pen-cloned &   23.2 $\pm$ {\footnotesize 14.2} &                  23.4 $\pm$ {\footnotesize 12.9} &    27.6 $\pm$ {\footnotesize 7.1} &                    23.6 $\pm$ {\footnotesize 15.7} \\
                pen-expert &  140.2 $\pm$ {\footnotesize 12.9} &                 131.3 $\pm$ {\footnotesize 12.3} &  134.5 $\pm$ {\footnotesize 10.8} &                   130.6 $\pm$ {\footnotesize 13.9} \\
               door-expert &   103.5 $\pm$ {\footnotesize 0.9} &                  101.8 $\pm$ {\footnotesize 1.0} &   102.2 $\pm$ {\footnotesize 4.5} &                    101.4 $\pm$ {\footnotesize 4.3} \\
               \midrule
             Average Score &                              \textbf{62.2} &                                             58.1 &                              \textbf{72.6} &                                               64.4 \\
\bottomrule
\end{tabular}
}
\end{table}
%%%%%%%%%%%%%%%%%%%%%%%%%%%%%%%%%%%%%%%%%%
\begin{table}[H] 
\captionsetup{font=small}
\caption{
\small
Normalized returns under several values of $\sigma \triangleq \sigma_B = \sigma_J$ (Appendix \ref{sec:tech_gan}) in the full algorithm ``GAN-joint''. The reported number are the means and standard deviations of the normalized returns of the last five rollouts across three random seeds $\cbr{0,1,2}$.} 
\label{table:sigma_sweep} 
\centering 
\def\arraystretch{1.1}
\resizebox{\textwidth}{!}{
\begin{tabular}{l||cccccc}
\toprule
                 Task Name & $\sigma = 1 \times 10^{-2}$ & $\sigma = 3 \times 10^{-3}$ & $\sigma = 1 \times 10^{-3}$ & $\sigma = 3  \times 10^{-4}$ & $\sigma = 1 \times 10^{-4}$ & $\sigma = 0$ \\
\midrule
              maze2d-umaze &                    48.4 $\pm$ {\footnotesize 21.4} &                    48.3 $\pm$ {\footnotesize 19.7} &                    41.7 $\pm$ {\footnotesize 11.5} &                    40.1 $\pm$ {\footnotesize 16.9} &                    54.9 $\pm$ {\footnotesize 10.4} &                    50.8 $\pm$ {\footnotesize 24.0} \\
             maze2d-medium &                    58.7 $\pm$ {\footnotesize 33.6} &                     48.7 $\pm$ {\footnotesize 7.4} &                    64.0 $\pm$ {\footnotesize 23.9} &                    69.6 $\pm$ {\footnotesize 25.6} &                    46.9 $\pm$ {\footnotesize 15.5} &                     26.4 $\pm$ {\footnotesize 5.7} \\
              maze2d-large &                    87.1 $\pm$ {\footnotesize 17.9} &                    57.6 $\pm$ {\footnotesize 21.3} &                    62.4 $\pm$ {\footnotesize 13.3} &                    71.3 $\pm$ {\footnotesize 26.0} &                     61.0 $\pm$ {\footnotesize 8.6} &                    62.3 $\pm$ {\footnotesize 32.3} \\
        halfcheetah-medium &                     43.0 $\pm$ {\footnotesize 0.4} &                     43.7 $\pm$ {\footnotesize 0.3} &                     43.9 $\pm$ {\footnotesize 0.4} &                     44.1 $\pm$ {\footnotesize 0.3} &                     43.8 $\pm$ {\footnotesize 0.3} &                     44.0 $\pm$ {\footnotesize 0.4} \\
           walker2d-medium &                    56.9 $\pm$ {\footnotesize 10.4} &                     66.4 $\pm$ {\footnotesize 7.9} &                    68.8 $\pm$ {\footnotesize 10.3} &                     69.3 $\pm$ {\footnotesize 8.6} &                    64.6 $\pm$ {\footnotesize 13.8} &                     63.8 $\pm$ {\footnotesize 8.4} \\
             hopper-medium &                     23.5 $\pm$ {\footnotesize 8.4} &                    66.7 $\pm$ {\footnotesize 20.8} &                    63.3 $\pm$ {\footnotesize 21.0} &                    60.1 $\pm$ {\footnotesize 27.3} &                    74.5 $\pm$ {\footnotesize 19.3} &                    89.6 $\pm$ {\footnotesize 27.9} \\
 halfcheetah-medium-replay &                     32.1 $\pm$ {\footnotesize 2.4} &                     31.5 $\pm$ {\footnotesize 3.3} &                     32.3 $\pm$ {\footnotesize 2.1} &                     33.1 $\pm$ {\footnotesize 2.3} &                     31.2 $\pm$ {\footnotesize 1.9} &                     31.5 $\pm$ {\footnotesize 3.2} \\
    walker2d-medium-replay &                      9.8 $\pm$ {\footnotesize 2.4} &                     10.7 $\pm$ {\footnotesize 2.0} &                     10.2 $\pm$ {\footnotesize 1.8} &                     10.2 $\pm$ {\footnotesize 2.4} &                     10.9 $\pm$ {\footnotesize 1.7} &                      9.4 $\pm$ {\footnotesize 1.4} \\
      hopper-medium-replay &                     28.7 $\pm$ {\footnotesize 3.8} &                     30.1 $\pm$ {\footnotesize 2.7} &                     30.5 $\pm$ {\footnotesize 2.9} &                     29.5 $\pm$ {\footnotesize 2.5} &                     31.3 $\pm$ {\footnotesize 1.9} &                     29.2 $\pm$ {\footnotesize 1.5} \\
 halfcheetah-medium-expert &                    79.9 $\pm$ {\footnotesize 10.1} &                    74.2 $\pm$ {\footnotesize 13.0} &                    76.8 $\pm$ {\footnotesize 13.4} &                    75.8 $\pm$ {\footnotesize 10.1} &                     71.3 $\pm$ {\footnotesize 8.6} &                     70.7 $\pm$ {\footnotesize 7.9} \\
    walker2d-medium-expert &                    67.4 $\pm$ {\footnotesize 16.0} &                    63.4 $\pm$ {\footnotesize 22.2} &                    69.7 $\pm$ {\footnotesize 17.3} &                    71.2 $\pm$ {\footnotesize 22.0} &                    63.4 $\pm$ {\footnotesize 23.1} &                    77.2 $\pm$ {\footnotesize 18.4} \\
      hopper-medium-expert &                     20.5 $\pm$ {\footnotesize 6.8} &                    56.7 $\pm$ {\footnotesize 27.5} &                    79.4 $\pm$ {\footnotesize 21.9} &                    99.9 $\pm$ {\footnotesize 29.0} &                    66.7 $\pm$ {\footnotesize 19.6} &                    62.0 $\pm$ {\footnotesize 19.5} \\
                 pen-human &                     -3.3 $\pm$ {\footnotesize 0.5} &                    64.2 $\pm$ {\footnotesize 17.0} &                    46.6 $\pm$ {\footnotesize 33.5} &                    45.5 $\pm$ {\footnotesize 24.5} &                    67.8 $\pm$ {\footnotesize 13.4} &                    60.3 $\pm$ {\footnotesize 11.4} \\
                pen-cloned &                      4.5 $\pm$ {\footnotesize 1.9} &                    19.6 $\pm$ {\footnotesize 11.8} &                    23.3 $\pm$ {\footnotesize 13.2} &                    18.0 $\pm$ {\footnotesize 14.4} &                    36.6 $\pm$ {\footnotesize 18.4} &                    40.0 $\pm$ {\footnotesize 20.8} \\
                pen-expert &                    74.2 $\pm$ {\footnotesize 26.6} &                   132.8 $\pm$ {\footnotesize 11.1} &                   132.8 $\pm$ {\footnotesize 17.9} &                   141.1 $\pm$ {\footnotesize 14.8} &                   136.6 $\pm$ {\footnotesize 10.8} &                   132.0 $\pm$ {\footnotesize 19.4} \\
               door-expert &                     29.1 $\pm$ {\footnotesize 9.7} &                    104.1 $\pm$ {\footnotesize 1.6} &                    104.2 $\pm$ {\footnotesize 1.7} &                    103.4 $\pm$ {\footnotesize 3.7} &                    102.9 $\pm$ {\footnotesize 3.9} &                    102.3 $\pm$ {\footnotesize 4.8} \\
               \midrule
             Average Score &                                               41.3 &                                               57.4 &                                               59.4 &                                               61.4 &                                               60.3 &                                               59.5 \\
\bottomrule
\end{tabular}
}
\end{table}

\section{Further Discussion on Capturing Multiple Modes in the Dataset} \label{sec:compare_implicit_gauss}

We clarify that our algorithms, \textit{e.g.}, ``GAN-Joint:Basic'', do not fail on the MuJoCo tasks, such as the medium-expert and medium datasets, though they underperform some of the baselines there. 
The tested MuJoCo datasets are collected by uni-modal Markovian policy (SAC), and hence uni-modal or deterministic policies can be sufficient for good results. 
Here, capturing multiple modes does not guarantee to give better scores.
However, on non-Markovian datasets \textit{e.g.}, Maze2D and Adroit, Table~\ref{table:main} and the following Figure~\ref{fig:mode} show that capturing multiple modes, capable by our methods, are critical for good results.
As a further corroboration, the baseline method, OptiDICE, also try to capture multiple action-modes in the offline dataset by training for behavior cloning a mixture of Gaussian policy with a per-dataset-tuned number of mixtures.
This may explains its relatively good scores on the Maze2D tasks.
However, the mixture of Gaussian behavior-cloning can fail on high-dimensional yet small-size datasets, which explains its relatively inferior results on the Adroit datasets.

In \Secref{sec:abaltion_study} we note that a uni-modal stochastic policy, such as the Gaussian policy, is less flexible to capture all the rewarding actions, on which an implicit policy may fit well.
Below we visualize such a difference.

Figure~\ref{fig:toy_gaussian_policy} compares the fitting of the eight-Gaussian toy dataset by implicit policy and Gaussian policy.
Specifically, Figure~\ref{fig:toy_gaussian_truth} plots the dataset; 
Figure~\ref{fig:toy_gaussian_cgan} plots CGAN with the default implicit generator (implicit policy) fitted by the classical policy-matching approach;
Figure~\ref{fig:toy_gaussian_gcgan} plots CGAN with Gaussian generator (Gaussian policy) fitted by the classical policy-matching approach;
Figure~\ref{fig:toy_gaussian_gan} plots CGAN with implicit policy fitted by the basic state-action joint-matching strategy (\Secref{sec:basic});
Figure~\ref{fig:toy_gaussian_ggan} plots CGAN with Gaussian policy fitted by the basic state-action joint-matching strategy.
Experimental details are on Appendix~\ref{sec:toy_detail}.

\begin{figure}[H]
    %  \vspace{-3.75mm}
     \centering
     \begin{subfigure}[b]{0.19\textwidth}
         \centering
         \includegraphics[width=\textwidth]{./toy_seed0/8gaussians_true.pdf}
         \caption{Truth}
         \label{fig:toy_gaussian_truth}
     \end{subfigure}
     \hfill
     \begin{subfigure}[b]{0.19\textwidth}
         \centering
         \includegraphics[width=\textwidth]{./toy_seed0/new_new_cgan_8gaussians_fake_1600.pdf}
         \caption{CGAN}
         \label{fig:toy_gaussian_cgan}
     \end{subfigure}
     \hfill
     \begin{subfigure}[b]{0.19\textwidth}
         \centering
         \includegraphics[width=\textwidth]{./toy_seed0/gauss_cgan_8gaussians_epoch_2000.pdf}
         \caption{G-CGAN}
         \label{fig:toy_gaussian_gcgan}
     \end{subfigure}
     \hfill
     \begin{subfigure}[b]{0.19\textwidth}
         \centering
         \includegraphics[width=\textwidth]{./toy_seed0/gan_8gaussians_epoch_2000.pdf}
         \caption{GAN}
         \label{fig:toy_gaussian_gan}
     \end{subfigure}
     \hfill
     \begin{subfigure}[b]{0.19\textwidth}
         \centering
         \includegraphics[width=\textwidth]{./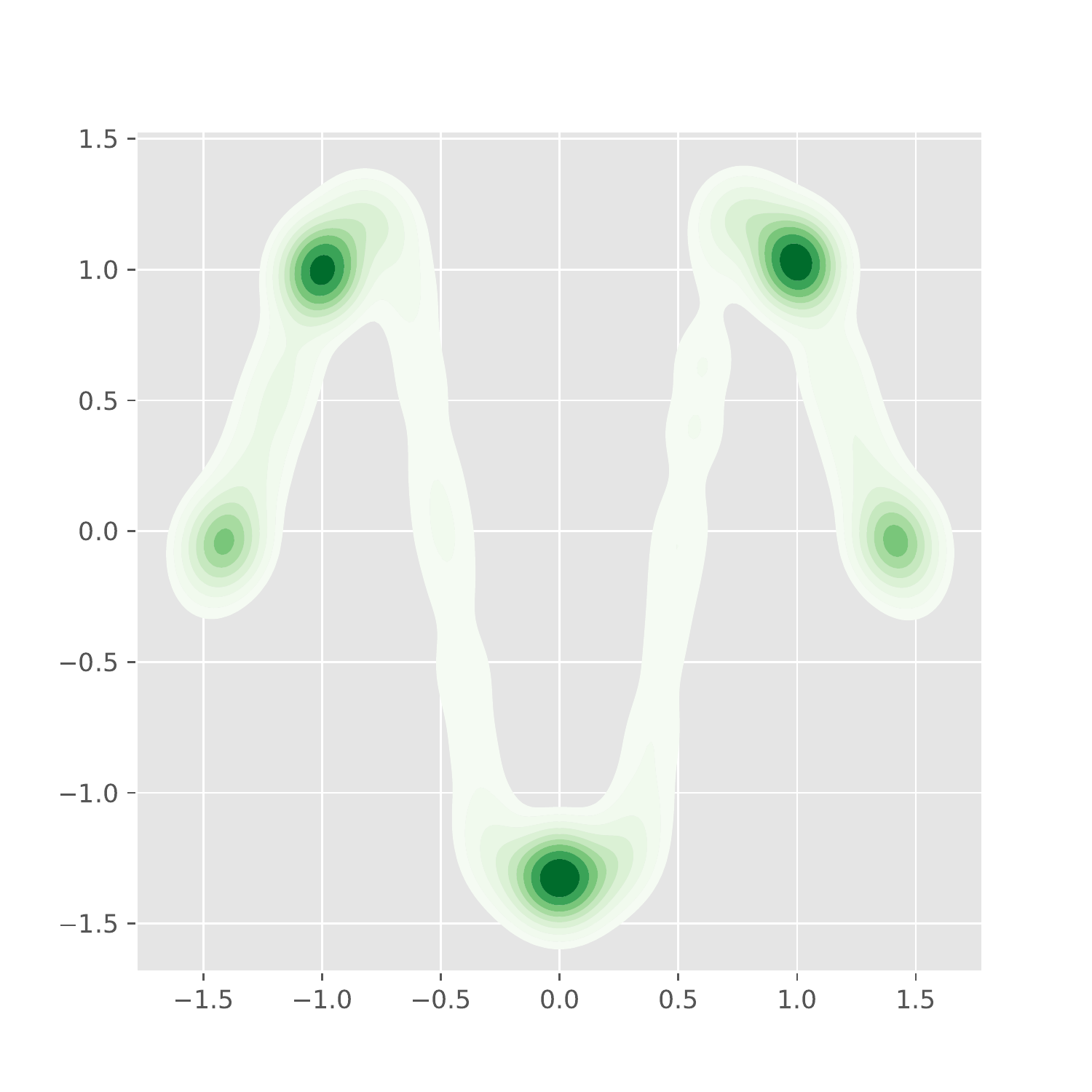}
         \caption{G-GAN}
         \label{fig:toy_gaussian_ggan}
     \end{subfigure}
     \captionsetup{font=small}
        \caption{
        \small
        Performance of approximating the behavior policy on the eight-Gaussian dataset by conditional GAN with default (implicit) generator and Gaussian generator. 
        A conditional GAN (``CGAN'') and a Gaussian-generator conditional GAN (``G-CGAN'') are fitted using the policy-matching approach.
        A conditional GAN (``GAN'') and a Gaussian-generator conditional GAN (``G-GAN'') are fitted using the basic state-action joint-matching strategy (\Secref{sec:policy_matching_reg}).
        Performance is judged by \textbf{(1)} clear concentration on the eight centers, and \textbf{(2)} smooth interpolation between centers, which implies a good and smooth fit to the behavior policy.
        % Details of this toy experiment are presented on Appendix \ref{sec:toy_detail}.
        }
        \label{fig:toy_gaussian_policy}
        % \vspace{-5mm}
\end{figure}

We see that whatever training strategies, Gaussian policies fail to learn multi-modal state-conditional action distributions, even if needed.
Even though the Gaussian policy version of CGAN may still correctly capture some modes in the action distributions, an improvement over the mode-covering CVAE, they miss other modes.
Besides, these Gaussian policy versions interpolate less-smoothly between the centers.
In offline RL, these weaknesses is related the missing of some rewarding actions and less-predictable action-choices at unseen states.

To visualize the differences between the implicit and the Gaussian policy in the offline RL setting, we plot the kernel density estimates of the action-distribution in the ``maze2d-umaze'' dataset, where a performance difference is shown in Table~\ref{table:implicit_normal}.
Specifically, Figure~\ref{fig:mode_truth} plots the action-distribution in the offline dataset.
Figure~\ref{fig:mode_gauss} and \ref{fig:mode_implicit} respectively plot action-distributions produced by the final Gaussian policy and the final implicit policy generating Table~\ref{table:implicit_normal}.

\begin{figure}[H]
    %  \vspace{-3.75mm}
     \centering
     \begin{subfigure}[b]{0.3\textwidth}
         \centering
         \includegraphics[width=\textwidth]{./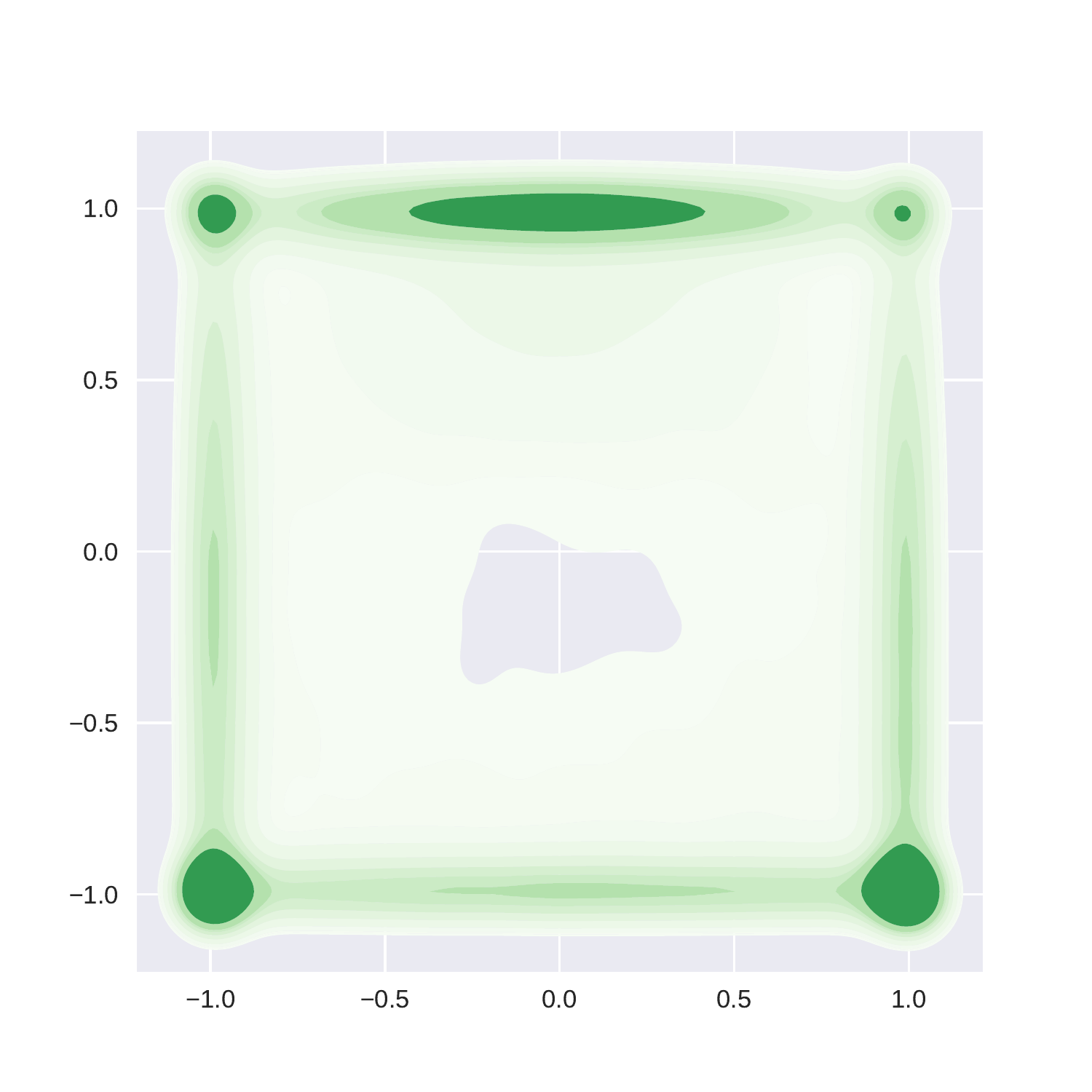}
         \caption{Truth}
         \label{fig:mode_truth}
     \end{subfigure}
     \hfill
     \begin{subfigure}[b]{0.3\textwidth}
         \centering
         \includegraphics[width=\textwidth]{./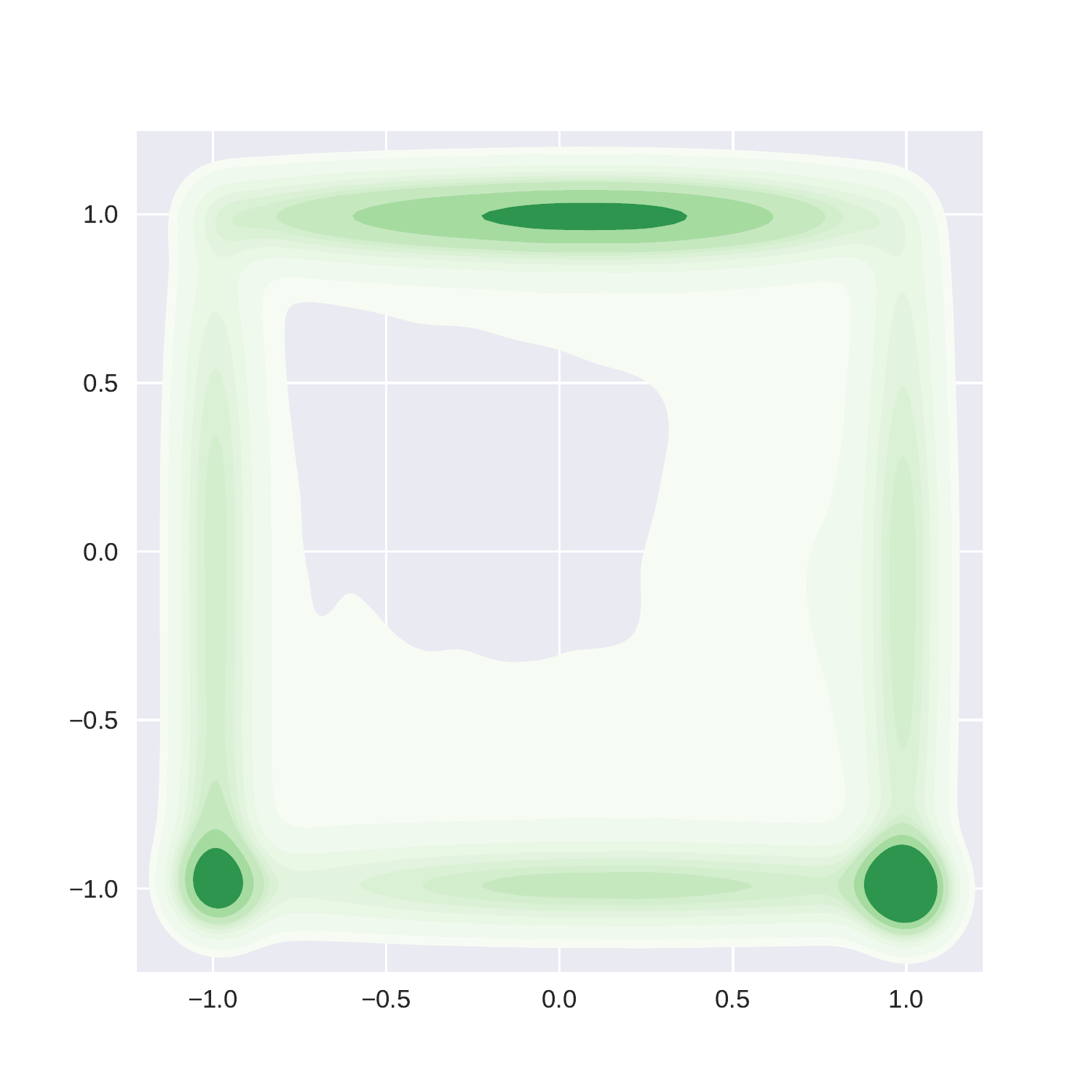}
         \caption{Gaussian Policy}
         \label{fig:mode_gauss}
     \end{subfigure}
     \hfill
     \begin{subfigure}[b]{0.3\textwidth}
         \centering
         \includegraphics[width=\textwidth]{./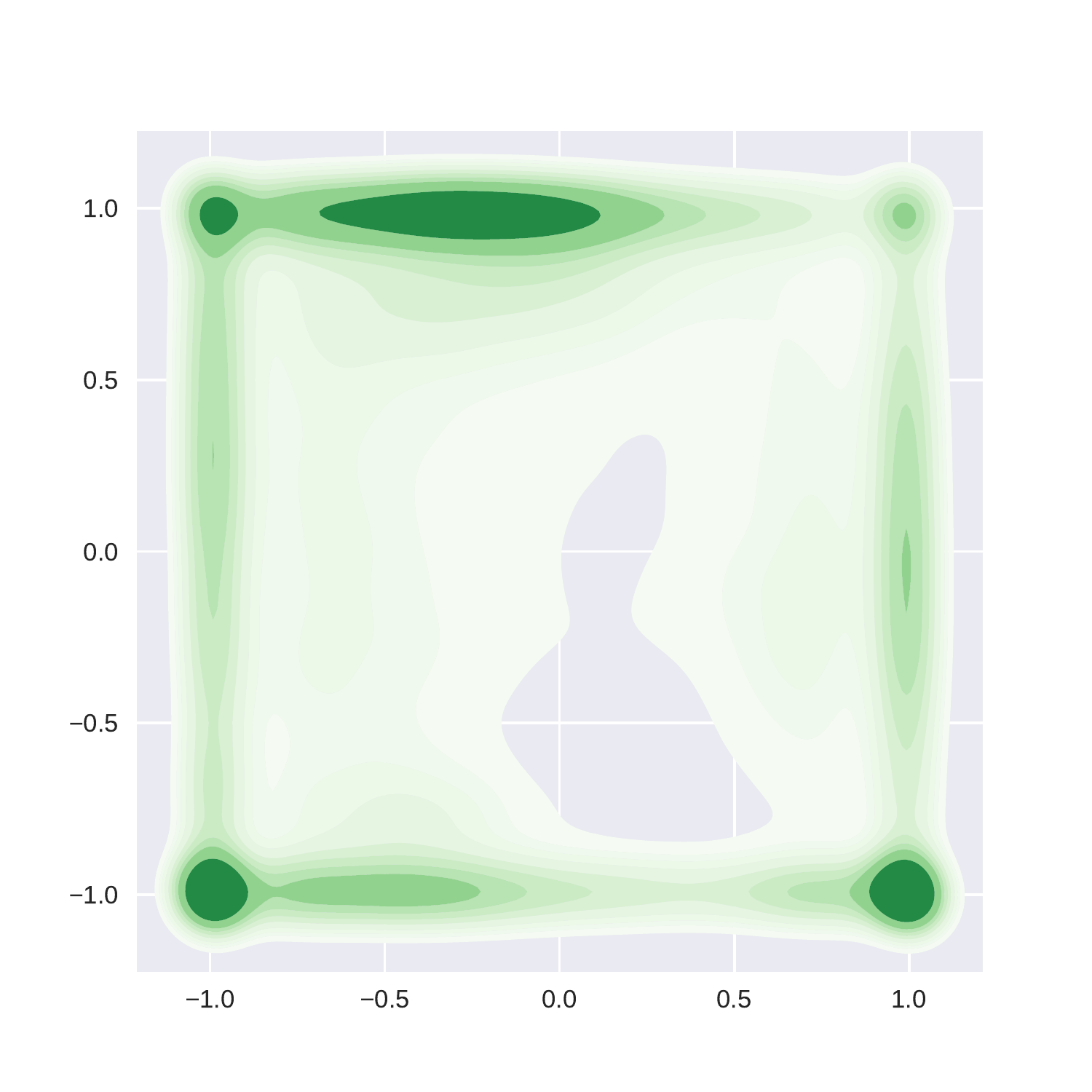}
         \caption{Implicit Policy}
         \label{fig:mode_implicit}
     \end{subfigure}
     \captionsetup{font=small}
        \caption{
        \small
        KDE plots of the action-distributions in the ``maze2d-umaze'' dataset.
        \textbf{(a)} Action distribution in the offline dataset; \textbf{(b)} Action distribution produced by the final Gaussian policy in Table~\ref{table:implicit_normal}; \textbf{(c)} Action distribution by the final implicit policy (\Secref{sec:basic}) in Table~\ref{table:implicit_normal}.
        }
        \label{fig:mode}
        % \vspace{-5mm}
\end{figure}

We see from Figure~\ref{fig:mode_truth} and Figure~\ref{fig:mode_gauss} that Gaussian policy leaves out two action modes, namely, modes on the upper-left and upper-right corners.
Figure~\ref{fig:mode_implicit} shows that our implicit policy does capture all modes shown in Figure~\ref{fig:mode_truth}.
Note that ``maze2d-umaze'' is a navigation task requiring agents to reach a goal location \citep{fu2021d4rl}.
Gaussian policy thus may miss out some directions in the offline dataset pertaining to short paths to the goal state, which may explain its inferior performance on this dataset in Table~\ref{table:implicit_normal}.

\section{Full Algorithm}\label{sec:full_algo}
% Algorithm~\ref{alg:main} presents our full algorithm in detail.

\begin{algorithm}[H]
\captionsetup{font=small}
\caption{\small GAN-Joint, Detailed Version}
\begin{algorithmic}
\label{alg:main}
\STATE {\bfseries Input:} Learning rate $\eta_\vtheta, \eta_\vphi, \eta_\vw$; 
target smoothing factor $\beta$; 
noise distribution $p_\vz(\vz)$; 
policy network $\pi_{\vphi}$ with parameter $\vphi$; 
critic network $Q_{\vtheta_1}$ and $Q_{\vtheta_2}$ with parameters $\vtheta_1$,$\vtheta_2$; 
discriminator network $D_\vw$ with parameter $\vw$; 
generator loss function $\gL_g$; 
standard deviation $\sigma_B = \sigma_J \triangleq \sigma$; 
number of smoothed states $N_B$;
% number of actions $\va'$ at $\vs'$ $N_a$;
number of epochs for warm start $N_{\mathrm{warm}}$;
policy frequency $k$. \\
% \STATE {\bfseries Output:} Learned neural network parameters $\vphi$, $\vtheta_1$,$\vtheta_2$, $\vw$. \\
\STATE \textbf{Initialization:} Initialize $\vphi$, $\vtheta_1$,$\vtheta_2$, $\vw$. Initialize $\vphi' \leftarrow \vphi$, $\vtheta_1' \leftarrow \vtheta_1$, $\vtheta_2' \leftarrow \vtheta_2$. 
Load dataset $\mathcal{D}$.
\FOR{each epoch}
\FOR{each iteration within current epoch}
\STATE Sample a mini-batch of transitions $\gB = \cbr{(\vs, \va, r, \vs')} \sim \sD$. 
\STATE \COMMENT{// Policy Evaluation}
\STATE For each $\vs' \in \gB$ sample $N_B$ $\hat \vs$ with noise standard deviation $\sigma_B$ (= $\sigma$) for state-smoothing via,
\begin{equation*}\textstyle
    \hat\vs = \vs' + \vepsilon, \vepsilon \sim \gN(\vzero, \sigma_B^2 \mI) .
\end{equation*}
\STATE Sample one corresponding actions $\hat \va \sim \pi_{\vphi'}(\boldsymbol{\cdot}\given \hat\vs)$ for each $\hat\vs$.
\STATE Calculate $\widetilde{Q}(\vs, \va)$ as 
\begin{equation*} \textstyle
     \widetilde{Q}\br{\vs, \va} \triangleq r(\vs, \va) + \gamma \frac{1}{N_B} \sum_{\br{\hat\vs,\hat\va}} \sbr{\lambda \min_{j=1,2}Q_{\vtheta'_j}\br{\hat\vs, \hat\va} + (1-\lambda) \max_{j=1,2}Q_{\vtheta'_j}\br{\hat\vs, \hat\va}}.
\end{equation*}
\STATE Minimize the critic loss with respect to $\vtheta_j, j = 1,2$, over $\br{\vs, \va} \in \gB$, with learning rate $\eta_\vtheta$,
\begin{equation*}\textstyle
    \arg\min_{\vtheta_j}\frac{1}{\abs{\gB}}\sum_{\br{\vs, \va} \in \gB} \br{ Q_{\vtheta_j}\br{\vs, \va} - \widetilde{Q}(\vs, \va) }^2 .
\end{equation*}
\STATE \COMMENT{// Policy Improvement with State-action Joint-matching}
\STATE Resample $\abs{\gB}$ new states $\Tilde{\vs} \sim \sD$ independent of $\vs$. Add state-smoothing to $\Tilde{\vs}$ with noise standard deviation $\sigma_J$ (= $\sigma$) using $\vepsilon \sim \gN\br{\vzero, \sigma_J^2\mI}, \, \Tilde{\vs} \leftarrow \Tilde{\vs} + \vepsilon$. 
\STATE Form the generator sample $\vx$ and data sample $\vy$ using 
$
    \vx \triangleq (\Tilde{\vs}, \Tilde{\va}), \; \Tilde{\va} \sim \pi_\vphi(\boldsymbol{\cdot}\given \Tilde{\vs}); \; \vy \triangleq (\vs, \va) \in \gB.
$
\STATE Calculate the generator loss $\gL_g(\vphi) = \frac{1}{\abs{\gB}} \sum_\vx\sbr{\log\br{1-D_\vw\br{\vx}}}$ using $\vx$ and discriminator $D_\vw$.
\IF{iteration count \% $k$ == 0}
\IF{epoch count $< N_{\mathrm{warm}}$}
\STATE Optimize policy with respect to $\gL_g(\vphi)$ only, with learning rate $\eta_\vphi$, \textit{i.e.},
\begin{equation*} \textstyle
    \arg\min_\vphi \alpha \cdot \gL_g(\vphi) .
\end{equation*}
\ELSE
\STATE Optimize policy with learning rate $\eta_\vphi$ for the target
\begin{equation*} \textstyle
    \arg\min_{\vphi} -\frac{1}{\abs{\gB}} \sum_{\vs \in \gB, \va \sim \pi_\vphi(\boldsymbol{\cdot}\given \vs)}\sbr{\min_{j=1,2} Q_{\vtheta_j}(\vs, \va)} + \alpha \cdot \gL_g(\vphi) .
\end{equation*}
\ENDIF
\ELSE
\STATE Skip the policy improvement step.
\ENDIF
\STATE \COMMENT{// Training the Discriminator}
\STATE Optimize discriminator $D_\vw$ to maximize $\frac{1}{\abs{\gB}} \sum_{\vy}\sbr{\log D_\vw(\vy)} + \frac{1}{\abs{\gB}} \sum_\vx \sbr{\log\br{1-D_\vw(\vx)}}$ with respect to $\vw$ with learning rate $\eta_\vw$.
\STATE \COMMENT{// Soft Update the Target Networks}
\STATE $\vphi' \leftarrow \beta \vphi + (1-\beta) \vphi'$; \; $\vtheta'_j \leftarrow \beta \vtheta_j + (1-\beta)\vtheta'_j$ for $j=1,2$. 
\ENDFOR
\ENDFOR
\end{algorithmic}%}
\end{algorithm}

\section{Proofs and Additional Theoretical Analysis}\label{sec:proof}
We follow the offline RL literature \citep{breakingcurse2018, algaedice2019, confoundingrobust2020, blackope2020, gendice2020} to assume the following regularity condition on the MDP structure, which ensures the ergodicity of the corresponding Markov chains and that the limiting state occupancy measures exist and equal to the stationary distributions of the chains.

\begin{assumption}[Ergodicity of MDP]
	 The MDP $\cM$ is ergodic, \textit{i.e.}, the Markov chains associated with any $\pi_b$ and any $\pi_{\vphi}$ under consideration are positive Harris recurrent \citep{harris2011}. 
\end{assumption}

Below is a useful lemma for the proof of Theorem~\ref{thm:occup_single} and Theorem~\ref{thm:occup_mix}.

\begin{lemma} \label{lemma:perturb}
	Let $\mA \in \R^{N\times N}$ be nonsingular and let $\vzero \ne \vb \in \R^N, \vx = \mA^{-1} \vb\in \R^N$. 
	Let $\Delta \mA\in \R^{N\times N}$ be an arbitrary perturbation on $\mA$.
	Assume that the norm on $\R^{N\times N}$ satisfies $\| \mA\vx \| \leq \|\mA\| \|\vx\|$  for all $\mA$ and $\vx$. If
	\begin{equation*}
		\left(\mA + \Delta \mA \right) (\vx + \Delta \vx) = \vb \text{ and } {\| \Delta \mA\| \over \|\mA\|} < {1\over \kappa(\mA)}
	\end{equation*}
	then
	\begin{equation*}
		{\| \Delta \vx\| \over \|\vx\|}  \leq {\kappa(\mA) \, {\| \Delta \mA\| \over \|\mA\|} \over 1- \kappa(\mA) \, {\| \Delta \mA\| \over \|\mA\|}} = {\kappa(\mA) \over {\|\mA\| \over \| \Delta \mA \|} - \kappa(\mA)}
	\end{equation*}
\end{lemma}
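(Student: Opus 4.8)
The plan is to prove this as a standard matrix-perturbation bound, the kind that underlies condition-number analysis for linear systems. First I would set up the error equation: subtracting $\mA\vx = \vb$ from $(\mA+\Delta\mA)(\vx+\Delta\vx) = \vb$ gives $\mA\,\Delta\vx + \Delta\mA\,(\vx + \Delta\vx) = \vzero$, hence $\Delta\vx = -\mA^{-1}\Delta\mA\,(\vx + \Delta\vx)$. Taking norms and using the submultiplicativity assumption $\|\mA\vx\| \le \|\mA\|\|\vx\|$ (which also yields $\|\mA^{-1}\Delta\mA\,\vy\| \le \|\mA^{-1}\|\|\Delta\mA\|\|\vy\|$), we obtain $\|\Delta\vx\| \le \|\mA^{-1}\|\,\|\Delta\mA\|\,(\|\vx\| + \|\Delta\vx\|)$.

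Next I would isolate $\|\Delta\vx\|$. Write $\rho \triangleq \|\mA^{-1}\|\,\|\Delta\mA\| = \kappa(\mA)\,\|\Delta\mA\|/\|\mA\|$, using $\kappa(\mA) = \|\mA\|\|\mA^{-1}\|$. The hypothesis $\|\Delta\mA\|/\|\mA\| < 1/\kappa(\mA)$ is exactly $\rho < 1$, so $1 - \rho > 0$. Rearranging $\|\Delta\vx\| \le \rho\|\vx\| + \rho\|\Delta\vx\|$ gives $(1-\rho)\|\Delta\vx\| \le \rho\|\vx\|$, i.e. $\|\Delta\vx\|/\|\vx\| \le \rho/(1-\rho)$. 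Substituting back the expression for $\rho$ yields the first claimed bound $\kappa(\mA)\,\tfrac{\|\Delta\mA\|}{\|\mA\|} \big/ \big(1 - \kappa(\mA)\,\tfrac{\|\Delta\mA\|}{\|\mA\|}\big)$, and dividing numerator and denominator by $\kappa(\mA)\,\tfrac{\|\Delta\mA\|}{\|\mA\|}$ gives the equivalent form $\kappa(\mA)\big/\big(\tfrac{\|\mA\|}{\|\Delta\mA\|} - \kappa(\mA)\big)$. One should also note in passing that $\vx \ne \vzero$ (since $\vb \ne \vzero$ and $\mA$ is nonsingular), so dividing by $\|\vx\|$ is legitimate.

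The only genuine subtlety — and the step I would be most careful about — is that this argument never needs to invoke nonsingularity of $\mA + \Delta\mA$; the existence of a solution $\vx + \Delta\vx$ to the perturbed system is simply assumed in the hypothesis, and the bound holds for whatever such solution is given. (In fact $\rho < 1$ does guarantee $\mA + \Delta\mA$ is nonsingular via a Neumann-series argument, but we do not need that here.) I would also double-check that the assumed norm inequality is used in the correct direction when passing from $\|\mA^{-1}(\Delta\mA\,\vy)\|$ to $\|\mA^{-1}\|\|\Delta\mA\|\|\vy\|$ — this requires the norm on $\R^{N\times N}$ to be sub-multiplicative with the vector norm, i.e. an operator-type norm, which the hypothesis $\|\mA\vx\|\le\|\mA\|\|\vx\|$ supplies. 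With those points in place the proof is a short chain of inequalities with no real obstacle.
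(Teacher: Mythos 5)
Your proof is correct and follows essentially the same route as the paper's: subtract the unperturbed system to get $\Delta\vx = -\mA^{-1}\,\Delta\mA\,(\vx+\Delta\vx)$, take norms using the assumed compatibility of the matrix and vector norms, and rearrange with $\rho = \kappa(\mA)\,\|\Delta\mA\|/\|\mA\| < 1$. Your side remarks (that $\vx \ne \vzero$ justifies the division, and that nonsingularity of $\mA+\Delta\mA$ is never needed) are accurate but not points of divergence from the paper's argument.
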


\begin{proof}[Proof of Lemma \ref{lemma:perturb}]
	Let $\widehat{\vx} = \vx + \Delta \vx, \mA\,(\vx + \Delta \vx) + \Delta \mA \,\widehat{\vx}  = \vb \implies \mA\, \Delta \vx+ \Delta \mA \,\widehat{\vx}  = \vzero \implies \Delta \vx = -\mA^{-1}\,\Delta \mA \,\widehat{\vx}$. Then,
	\begin{equation*}
		\begin{split}
			& \| \Delta \vx\| \leq \| \mA^{-1} \| \| \Delta \mA\| \left(\|\vx\| + \|\Delta \vx\|\right) = \kappa(\mA) \,{\| \Delta \mA\| \over \| \mA \|}\,\left(\|\vx\| + \|\Delta \vx\|\right) \\
			\implies & \left(1 - \kappa(\mA)\, {\| \Delta \mA\| \over \| \mA \|} \right) \,\| \Delta \vx\| \leq \kappa(\mA) \,{\| \Delta \mA\| \over \| \mA \|}\, \| \vx\| \\
			\implies & {\| \Delta \vx\| \over \|\vx\|}  \leq {\kappa(\mA) \, {\| \Delta \mA\| \over \|\mA\|} \over 1- \kappa(\mA) \, {\| \Delta \mA\| \over \|\mA\|}} = {\kappa(\mA) \over {\|\mA\| \over \| \Delta \mA \|} - \kappa(\mA)}
		\end{split}
	\end{equation*}
	since $\kappa(\mA) \,\| \Delta \mA\| \,/ \,\| \mA\| < 1$ by assumption.
\end{proof}

For the proof of Theorem~\ref{thm:occup_single} and Theorem~\ref{thm:occup_mix}, we assume the following notations.

{\bfseries Notation.} 
Denote $\mA_{-i*}$ as matrix $\mA$ with its $i$-th row removed; $\kappa(\mA)$ as the 2-norm condition number of $\mA$; $\1$ as a row vector of all ones and $\mI$ as an identity matrix, both with an appropriate dimension.
Assume that the state space $\sS$ is finite with cardinality $N$, \textit{i.e.}, $\sS = \left(\vs^1, \ldots, \vs^N\right)$.
The transition probabilities associated with policy $\pi_{\vphi}$ over $\sS$ is then an $N \times N$ matrix $\mT_{\vphi}$, whose $(i, j)$ entry is $\mT_{\vphi, (i,j)} = p_{\pi_\vphi}\left(S_{t+1}=\vs^j \given S_t = \vs^i\right) = \int_{\sA}\mathcal{P}\left(S_{t+1}=\vs^j \given S_t=\vs^i, A_t=\va_t\right) \pi_{\vphi}\left(\va_t \given \vs^i\right) \,d\va_t$, and similarly for $\mT_b$, the transition matrix associated with $\pi_b$.
Note that in this case, $d_\vphi(\vs), d_b(\vs)$ are vectors and we denote $\vd_{\vphi} \triangleq \vd_{\vphi}(\vs), \vd_b \triangleq \vd_b(\vs) \in \R^N$ and $\vd_{\vphi} = \vd_b + \Delta \vd$.

For the proof of Theorem~\ref{thm:occup_single}, notice that $d_{\vphi}(\vs,\va)=d_{\vphi}(\vs)\pi_{\vphi}(\va\given\vs)$ and similarly for $d_{b}(\vs,\va)$.
Hence, it is sufficient to show the closeness between $d_{\vphi}(\vs)$ and $d_{b}(\vs)$ when $\pi_{\vphi}(\va\given\vs)$ is close to $\pi_b(\va\given\vs)$.
Below we give our analysis for the matrix (finite state space) case. Continuous state-space cases may be analyzed similarly and are left for future work. 

\begin{theorem}[Formal Statement of Theorem~\ref{thm:occup_single_informal}] \label{thm:occup_single}
Denote 
\begin{equation*}\textstyle
	\kappa_{max} = \max_{i=2, \ldots, N+1}\kappa\left( \begin{pmatrix}
	\1 \\
	\mI - \mT^{\top}_b
\end{pmatrix}_{-i*} \right).
\end{equation*}
If
\begin{equation*}\textstyle
	\max_{i=1,\ldots, N} \left\| \pi_b\br{\boldsymbol{\cdot}\given \vs^i} - \pi_{\vphi}\br{\boldsymbol{\cdot}\given \vs^i} \right\|_1 \leq \epsilon < {1 \over \kappa_{max}}
\end{equation*}
and $\mT^{\vphi}_{i,j}, \mT^b_{i,j} > 0, \forall \, i, j \in \cbr{1,2,\ldots,N}$,
then 
\begin{equation*}\textstyle
	2 \mathrm{TV}(\vd_\vphi, \vd_b) = \| \Delta \vd \|_1 = \| \vd_\vphi - \vd_b \|_1 \leq {\epsilon~\kappa_{max} \over 1 - \epsilon~\kappa_{max}} \rightarrow 0\quad \text{as } \epsilon \rightarrow 0.
\end{equation*}
\end{theorem}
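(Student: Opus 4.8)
The plan is to express both stationary distributions $\vd_b$ and $\vd_\vphi$ as solutions of linear systems built from the respective transition matrices, and then apply the perturbation bound of Lemma~\ref{lemma:perturb} with $\mA$ coming from $\mT_b$ and $\Delta\mA$ from $\mT_\vphi - \mT_b$. First I would recall that, under the ergodicity assumption, the stationary vector $\vd_b$ is the unique probability vector satisfying $\vd_b^\top \mT_b = \vd_b^\top$, equivalently $(\mI - \mT_b^\top)\vd_b = \vzero$ together with the normalization $\1\vd_b = 1$. Since the kernel of $\mI - \mT_b^\top$ is one-dimensional, I would replace one redundant row of $\mI - \mT_b^\top$ by the normalization row $\1$: concretely, form the $(N{+}1)\times N$ matrix $\begin{pmatrix}\1 \\ \mI - \mT_b^\top\end{pmatrix}$ and delete row $i$ (for any $i \in \{2,\dots,N{+}1\}$) to obtain a nonsingular $N\times N$ matrix $\mA_i$ with $\mA_i \vd_b = \ve_1$, where $\ve_1 = (1,0,\dots,0)^\top$. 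The same construction with $\mT_\vphi$ in place of $\mT_b$ gives $(\mA_i + \Delta\mA_i)\vd_\vphi = \ve_1$, where $\Delta\mA_i$ is obtained from $\mT_b^\top - \mT_\vphi^\top$ by the same row-deletion and has a zero first row.

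Next I would bound $\|\Delta\mA_i\|_2$, the $2$-norm of the perturbation, in terms of the hypothesis $\max_i \|\pi_b(\cdot\given\vs^i) - \pi_\vphi(\cdot\given\vs^i)\|_1 \le \epsilon$. Entrywise, $|\mT_{b,(i,j)} - \mT_{\vphi,(i,j)}| = |\int_\sA \gP(\vs^j\given\vs^i,\va)(\pi_b(\va\given\vs^i)-\pi_\vphi(\va\given\vs^i))\,d\va| \le \|\pi_b(\cdot\given\vs^i)-\pi_\vphi(\cdot\given\vs^i)\|_1 \le \epsilon$ in the sense that the row sums of $|\mT_b - \mT_\vphi|$ are at most $\epsilon$ (using $\gP \le 1$ and total mass considerations); more precisely $\sum_j |\mT_{b,(i,j)}-\mT_{\vphi,(i,j)}| \le \|\pi_b(\cdot\given\vs^i)-\pi_\vphi(\cdot\given\vs^i)\|_1 \le \epsilon$, so $\|\mT_b - \mT_\vphi\|_\infty \le \epsilon$ and hence $\|\mT_b^\top - \mT_\vphi^\top\|_1 \le \epsilon$. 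Deleting a row only decreases these induced norms, and since deleting the $\1$-row is not allowed (we keep row $1$), the submatrix $\Delta\mA_i$ has first row zero and the rest bounded; I would then pass to the $2$-norm via the standard inequality $\|\cdot\|_2 \le \sqrt{\|\cdot\|_1\|\cdot\|_\infty}$, or simply carry the argument in whichever norm makes $\kappa$ the $2$-norm condition number as stated — the cleanest route is to note $\|\Delta\mA_i\|_2 \le \epsilon\,\|\mA_i\|_2^{-1}\cdot(\text{something})$; I will be careful here so that the final ratio $\|\Delta\mA_i\|/\|\mA_i\|$ is controlled by $\epsilon$. Taking the maximum over $i$ introduces $\kappa_{max}$, and the hypothesis $\epsilon < 1/\kappa_{max}$ is exactly the smallness condition needed to invoke Lemma~\ref{lemma:perturb}.

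With $\mA_i\vd_b = \ve_1$, $(\mA_i+\Delta\mA_i)(\vd_b + \Delta\vd) = \ve_1$, and $\kappa(\mA_i)\,\|\Delta\mA_i\|/\|\mA_i\| < 1$, Lemma~\ref{lemma:perturb} yields $\|\Delta\vd\|/\|\vd_b\| \le \kappa_{max}\epsilon/(1-\kappa_{max}\epsilon)$ (after choosing the index $i$ achieving $\kappa_{max}$, or bounding uniformly). Since $\vd_b$ is a probability vector, $\|\vd_b\|_1 = 1$, which converts the relative bound into the absolute bound $\|\Delta\vd\|_1 \le \epsilon\kappa_{max}/(1-\epsilon\kappa_{max})$; identifying $\|\Delta\vd\|_1 = \|\vd_\vphi - \vd_b\|_1 = 2\,\mathrm{TV}(\vd_\vphi,\vd_b)$ finishes the state-marginal claim, and the state-action claim follows from $d_\bullet(\vs,\va) = d_\bullet(\vs)\pi_\bullet(\va\given\vs)$ plus the triangle inequality (bounding $\|d_\vphi(\vs,\va)-d_b(\vs,\va)\|_1$ by $\|\vd_\vphi-\vd_b\|_1 + \max_i\|\pi_b(\cdot\given\vs^i)-\pi_\vphi(\cdot\given\vs^i)\|_1$ or similar), though the statement as written only asks for the state-marginal bound. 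Letting $\epsilon \to 0$ gives convergence to $0$.

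The main obstacle I anticipate is the bookkeeping around norms: Lemma~\ref{lemma:perturb} is stated for a generic submultiplicative norm, but $\kappa_{max}$ is defined via the $2$-norm condition number, so I must either run the perturbation argument entirely in the $2$-norm (which requires bounding $\|\mT_b^\top - \mT_\vphi^\top\|_2$ by $\epsilon$, most naturally through $\|\cdot\|_2 \le \sqrt{\|\cdot\|_1\|\cdot\|_\infty}$ and absorbing any constant, or noting that for the row-deleted matrices the relevant operator norm is still dominated by $\epsilon$) or carefully track the conversion constants so the final clean bound $\epsilon\kappa_{max}/(1-\epsilon\kappa_{max})$ survives. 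The positivity hypothesis $\mT^{\vphi}_{i,j},\mT^b_{i,j}>0$ and ergodicity are what guarantee uniqueness of the stationary vectors and nonsingularity of each $\mA_i$ (so the row deletion is legitimate for every $i$), and I would invoke these explicitly; the rest is the direct application of the lemma.
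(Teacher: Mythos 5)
Your proposal follows the paper's proof essentially step for step: augment $\mI-\mT_b^\top$ with the normalization row $\1$, delete one row to obtain an invertible square system $\mA\,\vd=\ve^{(1)}$, bound the perturbation by $\epsilon$ via Jensen's inequality applied row-wise to $\mT_b-\mT_\vphi$, and invoke Lemma~\ref{lemma:perturb} together with $\norm{\vd_b}_1=1$. The step you explicitly leave open --- the norm bookkeeping --- is the one place where your sketch could fail, and neither of the two escape routes you propose closes it. The resolution is to run the entire perturbation argument in the matrix $1$-norm (maximum absolute column sum): then $\norm{\Delta\mA}_1\le\norm{\mT_b^\top-\mT_\vphi^\top}_1=\norm{\mT_b-\mT_\vphi}_\infty\le\epsilon$ holds exactly, $\norm{\mA}_1\ge 1$ because of the all-ones row, the $1$-norm is submultiplicative so Lemma~\ref{lemma:perturb} applies verbatim, and $\norm{\mA}_1/\norm{\Delta\mA}_1\ge 1/\epsilon$ yields the clean constant $\epsilon\kappa_{max}/(1-\epsilon\kappa_{max})$ with $\kappa$ read as the $1$-norm condition number. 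Your alternative of converting to the $2$-norm via $\norm{\cdot}_2\le\sqrt{\norm{\cdot}_1\norm{\cdot}_\infty}$ would introduce dimension-dependent factors and would not reproduce the stated bound; the inequality you gesture at, $\norm{\Delta\mA_i}_2\le\epsilon\,\norm{\mA_i}_2^{-1}\cdot(\cdot)$, is not meaningful as written. (The paper's notation paragraph does declare $\kappa$ to be the $2$-norm condition number, but its own proof uses the $1$-norm throughout; you have in fact put your finger on an imprecision in the paper rather than a gap you can absorb "up to constants.")

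A second, smaller overreach: you assert that deleting \emph{any} row $i\in\cbr{2,\dots,N+1}$ of the augmented matrix yields a nonsingular $\mA_i$. Positivity and ergodicity do not give you that, and it need not hold. What the argument actually establishes is that the $(N+1)\times N$ augmented matrix has full column rank $N$, hence a one-dimensional left null space spanned by some $\vv$ with $v_i\ne 0$ for at least one $i\ge 2$; deleting \emph{that} row produces an invertible matrix, and this single valid choice suffices because its condition number is dominated by $\kappa_{max}$, which is defined as the maximum over all candidate deletions. With the argument carried out in the $1$-norm and the row deletion justified for one (rather than every) index, your proof coincides with the paper's.
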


\begin{remark}
\textbf{(1)} We note that $\kappa_{max}$ is a constant for fixed $\mT_b$ and can be calculated by iteratively removing columns of $\mT_b$ and computing the SVD of the referred matrix.
\textbf{(2)} The assumption that $\mT^{\vphi}_{i,j}, \mT^b_{i,j} > 0, \forall \, i, j \in \cbr{1,\ldots,N}$ can be satisfied by substituting the zero entries in the original transition matrix with a small number and re-normalized each row of the resulting matrix, as in the PageRank algorithm \citep{pagerank1998,deeppagerank2004}.
\end{remark}

\begin{proof}[Proof of Theorem~\ref{thm:occup_single}]
	By ergodicity,  $\vd_{\vphi}(\vs), \vd_b(\vs) \in \R^N$ uniquely exist. 
	For $\vd \in \left\{\vd_{\vphi}, \vd_b\right\}, \mT \in \left\{\mT^{\vphi}, \mT^b \right\}$, stationarity implies that $\vd$ is an eigenvector of $\mT^\top$ associated with eigenvalue $1$, and furthermore
	\begin{equation} \label{eq:eigen}
		\vd^\top = \vd^\top \mT \implies \left(\mI -\mT^\top \right) \vd = \vzero \implies \underbrace{\begin{pmatrix}
			\1 \\
			\mI -\mT^\top
		\end{pmatrix}}_{\triangleq\, \widehat{\mT} \in \R^{(N+1) \times N}} \vd = \begin{pmatrix}
			1 \\
			0\\
			\vdots \\
			0
		\end{pmatrix}
	\end{equation}
	Since $\mT$ is a positive matrix, by the Perron-Frobenius theorem \citep{matrixanalysis2000}, $1$ is an eigenvalue of $\mT^\top$ with algebraic multiplicity, and hence geometry multiplicity, $1$. 
	The eigen-equation $\mT^\top \vv = 1 \vv$  has unique solution $\vv$ up to a constant multiplier.
	Since $\mT^\top \vd = \vd$, the eigenspace of $\mT^\top$ associated with eigenvalue $1$ is $\mathrm{span}\left(\{\vd\}\right)$.
	Hence $\mathrm{dim}\,\mathrm{ker}\left(\mI - \mT^\top\right) = 1 \implies \mathrm{rank}\left(\mI - \mT^\top\right)= N-1 \implies \mathrm{rank} \left(\widehat{\mT}\right) = N$.
	The reason is that if $\exists\, \vv \ne \vzero \, \,s.t.\, \widehat{\mT}\, \vv = \vzero$, then 
	\begin{equation*}
		\widehat{\mT}\, \vv = \ \begin{pmatrix}
			\1 \, \vv \\
			\left(\mI -\mT^\top\right) \vv
		\end{pmatrix} = \begin{pmatrix}
			0 \\
			\vzero
		\end{pmatrix} \implies \vv = c\, \vd \text{ for scalar } c \in \R \text{ and } \1  \, \vv = c \, \1 \, \vd = 0 \implies c = 0, 
	\end{equation*}
	and hence $\vv = c \, \vd = \vzero$ which contradicts to $\vv \ne \vzero$.
	
	Since $\mathrm{rank}\left(\widehat{\mT}\right) = N$, $\mathrm{dim}\left(\left\{\vv \in \R^{N+1}: \vv^\top \, \widehat{\mT} = \vzero \right\}\right) = 1$. 
	For such a $\vv \ne \vzero, \exists\, i \in \{2, \ldots, N+1\} \,s.t. \, v_i \ne 0$. 
	WLOG, assume $v_{N+1} \ne 0$. 
	Let $\mA \in \R^{N\times N}$ be the first $N$ rows of $\widehat{\mT}$, then $\mathrm{rank}(\mA) = N$.
	The reason is that if $\mathrm{rank}(\mA) < N$, $\exists \, \vw \ne \vzero \, s.t.\, \vw^\top \mA = \vzero$, then
	\begin{equation*}
		\begin{pmatrix}
		\vw \\
		0
	\end{pmatrix}^\top \, \widehat{\mT} = \vw^\top \mA = \vzero
	\end{equation*}
	and $v_{N+1} \ne 0 \implies \begin{pmatrix}
		\vw^\top, 0
	\end{pmatrix}^\top$ is not a constant multiple of $\vv \implies \mathrm{dim}\,\mathrm{ker}\left(\widehat{\mT}^\top \right) \geq 2$, which contradicts to the fact that $\mathrm{rank} \left(\widehat{\mT}\right) = N$. 
	Thus, we conclude that $\mathrm{rank} \left(\mA\right) = N \implies \mA$ is invertible.
	
	Let $\ve^{(1)} = \begin{pmatrix}
		1, 0, \ldots, 0
	\end{pmatrix}^\top \in \R^N$, then Eq.~\ref{eq:eigen} implies $\mA \, \vd = \ve^{(1)}$.
	Plug in $\vd_{\vphi}, \vd_b, \mT_{\vphi}, \mT_b$ and define $\mA_\vphi, \mA_b$ similarly, we have $\mA_\vphi \, \vd_\vphi = \ve^{(1)}, \mA_b \, \vd_b = \ve^{(1)}$.
	For $\mT_b$ and $\mT_\vphi$, we notice that by Jensen's inequality,
	\begin{equation*}
		\begin{split}
			\sum_{j=1}^N \left|\left(\mT_b - \mT_\vphi \right)_{ij}\right| &\leq \sum_{j=1}^N \int_{\sA}\cP\left(S_{t+1}=\vs^j \given S_t=\vs^i, A_t=\va_t\right) \left|\pi_b\left(\va_t \given \vs^i\right) - \pi_\vphi\left(\va_t \given \vs^i\right)\right| \,d\va_t \\
		&= \int_{\sA}\left(\sum_{j=1}^N \cP\left(S_{t+1}=\vs^j \given S_t=\vs^i, A_t=\va_t\right)\right) \left|\pi_b\left(\va_t \given \vs^i\right) - \pi_\vphi\left(\va_t \given \vs^i\right)\right| \,d\va_t \\
		&= \int_{\sA} \left|\pi_b\left(\va_t \given \vs^i\right) - \pi_\vphi\left(\va_t \given \vs^i\right)\right| \,d\va_t \\
		&= \left\|\pi_b\left(\boldsymbol{\cdot}\given \vs^i\right) - \pi_\vphi\left(\boldsymbol{\cdot}\given \vs^i\right) \right\|_1
		\end{split}
	\end{equation*}
	Therefore, by assumption,
	\begin{equation*}
		\left\| \mT_b - \mT_\vphi \right\|_{\infty} = \max_{i=1, \ldots, N}\sum_{j=1}^N \left|\left(\mT_b - \mT_\vphi \right)_{ij}\right| \leq  \max_{i=1, \ldots, N} \left\|\pi_b\left(\boldsymbol{\cdot}\given \vs^i\right) - \pi_\vphi\left(\boldsymbol{\cdot}\given \vs^i\right) \right\|_1 \leq \epsilon.
	\end{equation*}
	Let $\mA_\vphi = \mA_b + \Delta \mA$. 
	For $\| \Delta \mA\|_1$, we notice that
	\begin{equation*}
		\begin{split}
			\| \Delta \mA\|_1 = \| \mA_\vphi - \mA_b \|_1 &\leq \left\|\begin{pmatrix}
			\1 \\
			\mI -\mT^\top_\vphi
		\end{pmatrix} - \begin{pmatrix}
			\1 \\
			\mI -\mT^\top_b
		\end{pmatrix} \right\|_1 \\
		&= \left\|\begin{pmatrix}
			\vzero \\
			\mT^\top_b -\mT^\top_\vphi
		\end{pmatrix} \right\|_1 = \left\|\left(\mT_b - \mT_\vphi \right)^\top  \right\|_1
		= \left\|\mT_b - \mT_\vphi \right\|_{\infty} \leq \epsilon.
		\end{split}
	\end{equation*}
	Notice that matrix 1-norm satisfies $\|\mM \vv\|_1 \leq \|\mM\|_1 \, \|\vv\|_1$ for all matrix $\mM$ and vector $\vv$, that $\left\|\mA_b\right\|_1 \geq 1$ and that $\| \vd_b\|_1 =1$.
	Lemma \ref{lemma:perturb} implies that
	\begin{equation}\label{eq:stationary_bound}
		\begin{split}
			{\| \Delta \vd\|_1 \over \| \vd_b \|_1} = \| \Delta \vd\|_1 &= \| \vd_\vphi - \vd_b \|_1 \\
			&\leq {\kappa (\mA_b) \over {1\over \| \Delta \mA \|_1} - \kappa(\mA_b)} \\
			&\leq  {\kappa (\mA_b) \over {1\over \epsilon} - \kappa(\mA_b)} = {\epsilon \, \kappa (\mA_b) \over 1- \epsilon \, \kappa (\mA_b)} \\
			&\leq {\epsilon \, \kappa_{max} \over 1 - \epsilon \, \kappa_{max}} \rightarrow 0\quad \text{as } \epsilon \rightarrow 0.
		\end{split}
	\end{equation}
\end{proof}
%%%%%%%%%%%%%%%%%%%%
For the statement and the proof of Theorem~\ref{thm:occup_mix}, assume that there are $K$ such data-collecting policies $\cbr{\pi_{b_k}\br{\boldsymbol{\cdot}\given\vs}}_{k=1}^K$ with corresponding mixture probabilities $\cbr{w_k}_{k=1}^K$, \textit{i.e.}, $\pi_b(\boldsymbol{\cdot}\given\vs) = \sum_{k=1}^K w_k \pi_{b_k}\br{\boldsymbol{\cdot}\given \vs}, \sum_{k=1}^K w_k=1$.
Since we collect $\sD$ by running each $\pi_{b_k}\br{\boldsymbol{\cdot}\given\vs}$ a proportion of $w_k$ of total time, we may decompose $\sD$ as $\sD = \bigcup_{k=1}^K \sD_k$, where $\sD_k$ consists of $w_k$ proportion of data in $\sD$.
Thus, $\vd_\sD(\vs) = \sum_{k=1}^K w_k \vd_{\sD_k}(\vs)$ and the targeted approximation $\vd_\vphi(\vs) \approx \vd_\sD(\vs)$ has population version $\vd_\vphi(\vs) \approx \sum_{k=1}^K w_k \vd_{b_k}(\vs) \triangleq \vd_b(\vs)$.
As before, denote $\vd_{b_k}(\vs) \in \R^N$ as the limiting state-occupancy measure induced by $\pi_{b_k}$ on $\cM$; $\mT_{b_k}$ as the transition matrix induced by $\pi_{b_k}$ over $\sS$; and $\vd_\vphi = \vd_{b_k} + \Delta \vd_k$.
\begin{theorem} \label{thm:occup_mix}
	Denote 
\begin{equation*}\textstyle
\begin{split}
    \kappa_{max, k} =  \max_{i=2, \ldots, N+1}\kappa\left( \begin{pmatrix}
	\mathbf{1} \\
	\mI - \mT^{\top}_{b_k}
\end{pmatrix}_{-i*} \right), \quad 
\kappa_{max} = \max_{k=1\ldots, K} \kappa_{max, k}.
\end{split}
\end{equation*}
If
\begin{equation*}\textstyle
	\max_{k=1, \ldots, K}\max_{i=1,\ldots, N} \left\| \pi_{b_k}\br{\boldsymbol{\cdot}\given \vs^i} - \pi_{\vphi}\br{\boldsymbol{\cdot}\given \vs^i} \right\|_1 \leq \epsilon < {1 \over \kappa_{max}}
\end{equation*}
and $\mT^{\vphi}_{i,j}, \mT^{b_k}_{i,j} > 0, \forall \, i, j \in \cbr{1,\ldots,N}, k \in \cbr{1,\ldots,K}$,
then 
\begin{equation*}\textstyle
	 \begin{split}
	     2 \mathrm{TV}(\vd_\vphi, \vd_b) = \| \vd_\vphi - \vd_b \|_1 \leq \sum_{k=1}^K w_k {\epsilon \, \kappa_{max,k} \over 1 - \epsilon \, \kappa_{max,k}} \leq {\epsilon \, \kappa_{max} \over 1-\epsilon \, \kappa_{max}} \rightarrow 0 \quad \text{as } \epsilon \rightarrow 0.
	 \end{split}
\end{equation*}
In particular, if $w_k = 1/K, \forall\, k \in \cbr{1,\ldots,K}$, then $\text{as } \epsilon \rightarrow 0$
\begin{equation*}\textstyle
	\mathbin{2 \mathrm{TV}(\vd_\vphi, \vd_b) = } \norm{\vd_\vphi - \vd_b}_1 \leq {1\over K} \sum_{k=1}^K {\epsilon \, \kappa_{max, k} \over 1-\epsilon \, \kappa_{max, k}} \rightarrow 0.
\end{equation*}
\end{theorem}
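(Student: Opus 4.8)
The plan is to reduce Theorem~\ref{thm:occup_mix} to $K$ separate applications of Theorem~\ref{thm:occup_single}, one per component policy $\pi_{b_k}$, and then recombine the resulting bounds by a convexity argument. The only structural fact I need beyond Theorem~\ref{thm:occup_single} is the defining decomposition $\vd_b(\vs) = \sum_{k=1}^K w_k \vd_{b_k}(\vs)$ (with $\sum_{k=1}^K w_k = 1$), which the excerpt establishes from the sampling description of $\sD = \bigcup_k \sD_k$.

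First I would write, using the mixture decomposition and $\sum_k w_k = 1$, that $\vd_\vphi - \vd_b = \sum_{k=1}^K w_k \br{\vd_\vphi - \vd_{b_k}} = \sum_{k=1}^K w_k \Delta\vd_k$, so the triangle inequality for $\norm{\cdot}_1$ gives $\norm{\vd_\vphi - \vd_b}_1 \le \sum_{k=1}^K w_k \norm{\Delta\vd_k}_1$. Next, for each fixed $k$ I would check that the hypotheses of Theorem~\ref{thm:occup_single} hold with $\pi_b$ replaced by $\pi_{b_k}$: the per-state bound $\max_i \norm{\pi_{b_k}(\cdot\,|\,\vs^i) - \pi_\vphi(\cdot\,|\,\vs^i)}_1 \le \epsilon$ is part of the theorem's hypothesis, the positivity $\mT^{b_k}_{i,j}, \mT^{\vphi}_{i,j} > 0$ is assumed, and the required constraint $\epsilon < 1/\kappa_{max,k}$ follows from $\epsilon < 1/\kappa_{max}$ together with $\kappa_{max} = \max_{k} \kappa_{max,k} \ge \kappa_{max,k}$. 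Theorem~\ref{thm:occup_single} then yields $\norm{\Delta\vd_k}_1 \le \dfrac{\epsilon\,\kappa_{max,k}}{1 - \epsilon\,\kappa_{max,k}}$ for every $k$, and substituting into the displayed inequality gives the first claimed bound $\norm{\vd_\vphi - \vd_b}_1 \le \sum_{k=1}^K w_k \dfrac{\epsilon\,\kappa_{max,k}}{1 - \epsilon\,\kappa_{max,k}}$.

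Finally, to obtain the uniform bound I would use that $x \mapsto \dfrac{\epsilon x}{1 - \epsilon x}$ is increasing on $[0, 1/\epsilon)$, so $\dfrac{\epsilon\,\kappa_{max,k}}{1 - \epsilon\,\kappa_{max,k}} \le \dfrac{\epsilon\,\kappa_{max}}{1 - \epsilon\,\kappa_{max}}$ for every $k$; since $\sum_k w_k = 1$, the weighted average is dominated by the same quantity, which tends to $0$ as $\epsilon \to 0$. The uniform-weight statement is then the specialization $w_k = 1/K$, giving $\norm{\vd_\vphi - \vd_b}_1 \le \tfrac1K \sum_{k=1}^K \dfrac{\epsilon\,\kappa_{max,k}}{1 - \epsilon\,\kappa_{max,k}} \to 0$.

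I do not anticipate a genuine obstacle: the argument is essentially a convex combination of the single-policy result. The one point that must be handled carefully is verifying that Theorem~\ref{thm:occup_single}'s hypotheses truly transfer to each component, and in particular that the condition-number comparison $\kappa_{max} \ge \kappa_{max,k}$ is exactly what makes the constraint $\epsilon < 1/\kappa_{max}$ suffice componentwise. It is also worth noting deliberately that one should keep the component constants $\kappa_{max,k}$ before collapsing to $\kappa_{max}$, since the intermediate bound $\sum_k w_k \dfrac{\epsilon\,\kappa_{max,k}}{1 - \epsilon\,\kappa_{max,k}}$ is sharper and is the form the theorem records first.
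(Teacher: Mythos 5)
Your proposal is correct and follows essentially the same route as the paper: the paper likewise bounds each $\norm{\vd_\vphi - \vd_{b_k}}_1$ by $\epsilon\,\kappa_{max,k}/(1-\epsilon\,\kappa_{max,k})$ (re-deriving the single-policy argument via its perturbation lemma rather than citing Theorem~\ref{thm:occup_single} as a black box), then combines via $\vd_\vphi - \vd_b = \sum_k w_k(\vd_\vphi - \vd_{b_k})$, the triangle inequality, and the monotone relaxation to $\kappa_{max}$. Your explicit check that $\epsilon < 1/\kappa_{max} \le 1/\kappa_{max,k}$ validates each componentwise application, which is exactly the point that makes the reduction go through.
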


\begin{proof}[Proof of Theorem \ref{thm:occup_mix}]
	By ergodicity, $\vd_\vphi(\vs), \vd_{b_k}(\vs) \in \R^N$ uniquely exist, $\forall \, k$. 
	For $\vd\in \cbr{\vd_\vphi, \vd_{b_1}, \ldots, \vd_{b_k}}$ and $\mT \in \cbr{\mT^\vphi, \mT^{b_1}, \ldots, \mT^{b_K}}$, we follow the steps and notations in the proof of Theorem \ref{thm:occup_single} to conclude that $\mathrm{rank}\br{\mA} = N \implies \mA$ is invertible and that $\mA \vd = \ve^{(1)} \in \R^N$.
	Plugging in $\vd_\vphi, \vd_{b_k}, \mT_{\vphi}, \mT_{b_k}$ and defining $\mA_\vphi, \mA_{b_k}$ similarly, we have $\mA_\vphi \vd_\vphi = \ve^{(1)}, \mA_{b_k}\vd_{b_k} = \ve^{(1)}$.
	For the transition matrix $\mT_b$ induced by the mixture of policies $\pi_b$, we have
	\begin{equation*}
		\begin{split}
			\mT_{b,(i,j)} &= p_{\pi_b} \br{S_{t+1} = \vs^j \given S_t = \vs^i} = \int_{\sA} \cP \br{S_{t+1}=\vs^j\given S_t = \vs^i, A_t = \va_t} \pi_b\br{\va_t \given \vs^i} \, d\va_t \\
			&= \sum_{k=1}^K w_k \int_{\sA} \cP\br{S_{t+1}=\vs^j\given S_t = \vs^i, A_t = \va_t} \pi_{b_k}\br{\va_t \given \vs^i} \, d\va_t = \sum_{k=1}^K w_k \mT_{b_k, (i,j)}
		\end{split}
	\end{equation*}
	and therefore $\mT_b = \sum_{k=1}^K w_k \mT_{b_k}$.
	
	For $\mT_{b_k}$ and $\mT_\vphi$, as in the proof of Theorem \ref{thm:occup_single}, by Jensen's inequality,
	\begin{equation*}
		\sum_{j=1}^N\abs{\br{\mT_{b_k} - \mT_\vphi}_{ij}} \leq \norm{\pi_{b_k}\br{\boldsymbol{\cdot}\given\vs^i} - \pi_\vphi\br{\boldsymbol{\cdot}\given\vs^i}}_1
	\end{equation*}
	Therefore, by assumption, 
	\begin{equation*}
		\norm{\mT_{b_k} - \mT_{\vphi}}_\infty = \max_{i=1, \ldots, N}\sum_{j=1}^N \left|\left(\mT_{b_k} - \mT_\vphi \right)_{ij}\right| \leq  \max_{i=1, \ldots, N} \left\|\pi_{b_k}\left(\boldsymbol{\cdot}\given \vs^i\right) - \pi_\vphi\left(\boldsymbol{\cdot}\given \vs^i\right) \right\|_1 \leq \epsilon.
	\end{equation*}
	Let $\mA_\vphi = \mA_{b_k}  + \Delta \mA_{b_k}$.	
	For $\norm{\Delta \mA_{b_k}}_1$, we have
	\begin{equation*}
		\norm{\Delta \mA_{b_k}}_1 = \norm{\mA_\vphi - \mA_{b_k}}_1 \leq \norm{\mT^\top_{b_k} - \mT_\vphi^\top}_1 = \norm{\mT_{b_k} - \mT_\vphi}_\infty \leq \epsilon.
	\end{equation*} 
	Note that $\forall\, k, \norm{\mA_{b_k}}_1 \geq 1 + 1-\sum_{i=1}^N \mT^{b_k}_{1i}=1, \norm{\vd_{b_k}}_1 = 1$.
	Lemma \ref{lemma:perturb} implies that
	\begin{equation*}
		\begin{split}
			{\norm{\Delta \vd_k}_1 \over  \norm{\vd_{b_k}}_1} &= \norm{\Delta \vd_k}_1 = \norm{\vd_\vphi - \vd_{b_k}}_1 \leq {\epsilon\, \kappa\br{\mA_{b_k}}\over 1 - \epsilon\, \kappa\br{\mA_{b_k}}} \leq {\epsilon \, \kappa_{max. k} \over 1- \epsilon \, \kappa_{max, k}}
		\end{split}
	\end{equation*}
	Thus for the relative distance between $\vd_\vphi$ and $\vd_b$, we have
	\begin{equation*}
		\begin{split}
		\norm{\vd_b}_1 &= \sum_{i=1}^N \vd_b\br{\vs^i} = \sum_{i=1}^N\sum_{k=1}^K w_k \vd_{b_k}\br{\vs^i} = \sum_{k=1}^K w_k \sum_{i=1}^N \vd_{b_k}\br{\vs^i} = \sum_{k=1}^Kw_k = 1, \\
			{\norm{\vd_\vphi - \vd_b}_1 \over \norm{\vd_b}_1} &= \norm{\vd_\vphi - \vd_b}_1 = \norm{\sum_{k=1}^K\br{w_k \vd_\vphi - w_k \vd_{b_k}}}_1 \\
			&\leq \sum_{k=1}^K w_k \norm{\vd_\vphi - \vd_{b_k}}_1 \leq \sum_{k=1}^K w_k  {\epsilon \, \kappa_{max. k} \over 1- \epsilon \, \kappa_{max, k}} \leq {\epsilon \, \kappa_{max} \over 1-\epsilon\, \kappa_{max}} \rightarrow 0, \quad \text{ as } \epsilon \rightarrow 0.
		\end{split}
	\end{equation*}
	Plugging $w_k = 1/K, \forall \, k \in \cbr{1,\ldots,K}$ into the second to last equation, we get
	\begin{equation*}
		\norm{\vd_\vphi - \vd_b}_1 \leq {1\over K} \sum_{k=1}^K {\epsilon \, \kappa_{max,k}\over 1-\epsilon\, \kappa_{max, k}}\rightarrow 0, \quad \text{ as } \epsilon \rightarrow 0,
	\end{equation*}
	as desired.
\end{proof}
\begin{remark}
\textbf{(1)} We note that $\kappa_{max,k}$ is a constant for fixed $\mT_{b_k}$ and $\kappa_{max}$ is a constant for fixed $\cbr{\pi_{b_k}}_{k=1}^K$. 
\textbf{(2)} In general, $\vd_b^\top \mT_b = \sum_{k=1}^K w_k^2 \vd_{b_k}^\top + \sum_{i \ne j} w_iw_j \vd_{b_i}^\top \mT^{b_j} \ne \vd_b^\top$.
One sufficient condition is $\vd_{b_i}^\top \mT^{b_j} = \vd_{b_i}^\top, \forall \, i, j \implies \vd_{b_i} = \vd_{b_j}, \forall \, i, j  \implies \pi_{b_i} = \pi_{b_j}, \forall \, i, j$, similar to \citet{gail2016}. 
In such case, $\pi_b$ reduces to a single policy, not a mixture, and Theorem~\ref{thm:occup_single} applies.
\end{remark}

The formal statement of Theorem~\ref{thm:ipm_undiscount_informal} is as follows.
\begin{theorem}[Formal Statement of Theorem~\ref{thm:ipm_undiscount_informal}] \label{thm:ipm_undiscount}
    Denote $d_b\br{\vs, \va, \vs'} \triangleq d_b(\vs) \pi_b(\va \given \vs)\mathcal{P}(\vs'\given \vs, \va )$ as in \citet{breakingcurse2018}, then
    \begin{equation*}
        \resizebox{\textwidth}{!}{%
$
     \begin{aligned}
            &D_{\mathcal{G}}\br{d_\vphi\br{\vs, \va}, d_b\br{\vs, \va}} \\
            =& \sup_{g \in \mathcal{G}} \abs{\E_{(\vs, \va) \sim d_\vphi\br{\vs, \va}}\sbr{g\br{\vs, \va}} - \E_{\br{\vs, \va} \sim d_b\br{\vs, \va}}\sbr{g\br{\vs, \va}}} \\
            =& \sup_{f \in \mathcal{F}} \abs{\E_{(\vs, \va) \sim d_b\br{\vs, \va}}\sbr{f\br{\vs, \va}} - \E_{\br{\vs, \va, \vs'} \sim d_b\br{\vs, \va, \vs'}, \va'\sim \pi_\vphi\br{\cdot \given \vs'}}\sbr{f\br{\vs', \va'}}} \\
            =& \sup_{f \in \mathcal{F}} \abs{\E_{(\vs, \va) \sim d_b\br{\vs, \va}}\sbr{f\br{\vs, \va}} - \E_{\vs \sim d_b\br{\vs}, \va \sim \pi_\vphi\br{\cdot \given \vs}}\sbr{f\br{\vs, \va}}} \quad \text{(State-action joint-matching scheme Eq.~\eqref{eq:true_fake_sample})}
            \\
            =& \sup_{f \in \mathcal{F}} \abs{\E_{\vs \sim d_b(\vs)}\sbr{\E_{\va \sim \pi_b\br{ \cdot \given \vs}}\sbr{f\br{\vs, \va}} - \E_{\va \sim \pi_\vphi\br{\cdot \given \vs}}\sbr{f\br{\vs, \va}}}} \quad \text{(Classical policy-matching scheme Eq.~\eqref{eq:cond_true_fake_sample})}
        \end{aligned}
$
}
    \end{equation*}
\end{theorem}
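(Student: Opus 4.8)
The first equality is nothing but Definition~\ref{def:ipm} applied with $\mathcal{P}=d_\vphi(\vs,\va)$ and $\mathcal{Q}=d_b(\vs,\va)$, so I would dispatch it in one line. The last two equalities are routine measure manipulations, and I would record them first to isolate the real content. The step from the second line to the third only uses that $d_b$ is stationary under $\pi_b$'s transition kernel, i.e. $\int d_b(\vs)\pi_b(\va\given\vs)\,\cP(\vs'\given\vs,\va)\,d(\vs,\va)=d_b(\vs')=d_b(\vs)$; this says exactly that the $\vs'$-marginal of $d_b(\vs,\va,\vs')$ is $d_b$, so drawing $(\vs,\va,\vs')\sim d_b(\vs,\va,\vs')$ and then $\va'\sim\pi_\vphi(\cdot\given\vs')$ yields a pair distributed as $d_b(\vs)\pi_\vphi(\va\given\vs)$, and relabelling $(\vs',\va')\mapsto(\vs,\va)$ gives the third line. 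The step from the third line to the fourth is just the factorisation $d_b(\vs,\va)=d_b(\vs)\pi_b(\va\given\vs)$ together with the law of total expectation (pull $\E_{\vs\sim d_b(\vs)}$ outside the difference). So all the work sits in the second equality: showing that the IPM between the two \emph{undiscounted} state-action visitations can be recast as an IPM-type quantity that only ever samples $(\vs,\va)\sim\sD$ and $\tilde\va\sim\pi_\vphi$.

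\textbf{The key step.} Write $\mu(\vs,\va)\triangleq d_b(\vs)\pi_\vphi(\va\given\vs)$ for the distribution of the ``fake'' samples in \eqref{eq:true_fake_sample}, and for a policy $\pi$ let $(\mathcal{T}^{\pi}g)(\vs,\va)\triangleq\E_{\vs'\sim\cP(\cdot\given\vs,\va),\,\va'\sim\pi(\cdot\given\vs')}[g(\vs',\va')]$ be the associated one-step lookahead operator (here and below $\E_{d}[g]$ abbreviates $\E_{(\vs,\va)\sim d}[g(\vs,\va)]$). Stationarity of $d_\vphi$ under $\pi_\vphi$ gives $\E_{d_\vphi}[g]=\E_{d_\vphi}[\mathcal{T}^{\pi_\vphi}g]$ for every bounded $g$, stationarity of $d_b$ under $\pi_b$ gives $\E_{d_b}[g]=\E_{d_b}[\mathcal{T}^{\pi_b}g]$, and the same stationarity of $d_b$ gives $\E_{d_b}[\mathcal{T}^{\pi_\vphi}g]=\E_{\mu}[g]$ --- this last identity is literally the line-2-to-line-3 computation above. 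Combining the three identities,
\begin{equation*}
\E_{d_\vphi}[g]-\E_{d_b}[g]=\big(\E_{d_\vphi}[\mathcal{T}^{\pi_\vphi}g]-\E_{d_b}[\mathcal{T}^{\pi_\vphi}g]\big)+\big(\E_{\mu}[g]-\E_{d_b}[g]\big),
\end{equation*}
which rearranges to $\E_{d_\vphi}[(\mathrm{Id}-\mathcal{T}^{\pi_\vphi})g]-\E_{d_b}[(\mathrm{Id}-\mathcal{T}^{\pi_\vphi})g]=\E_{\mu}[g]-\E_{d_b}[g]$. Thus the witness $(\mathrm{Id}-\mathcal{T}^{\pi_\vphi})g$ evaluated against $d_\vphi-d_b$ produces the same value (up to sign) as $g$ evaluated against $d_b-\mu$. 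Reading the theorem so that $\mathcal{F}$ is the function class for which the first equality is asserted, I would pin $\mathcal{F}$ and $\mathcal{G}$ down as images of one another under $\mathrm{Id}-\mathcal{T}^{\pi_\vphi}$ --- concretely, for $g\in\mathcal{G}$ take $f$ to solve the Poisson-type equation $(\mathrm{Id}-\mathcal{T}^{\pi_\vphi})f=g-\E_{d_\vphi}[g]$, noting that recentering $g$ is harmless on both sides since $d_\vphi-d_b$ and $d_b-\mu$ are differences of probability measures and hence annihilate constants --- after which taking absolute values and suprema turns the pointwise identity into the claimed equality of the two IPMs.

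\textbf{Main obstacle.} The delicate point is inverting $\mathrm{Id}-\mathcal{T}^{\pi_\vphi}$: without a discount factor $\mathcal{T}^{\pi_\vphi}$ has no spectral gap, the operator is singular on constants, and the naive Neumann series $\sum_{k\ge 0}(\mathcal{T}^{\pi_\vphi})^k$ diverges except on the subspace $\{g:\E_{d_\vphi}[g]=0\}$. The recentering above is exactly what projects onto that subspace, and existence/uniqueness (up to an additive constant) of the solution of the Poisson equation then follows from the same ergodicity assumption used for Theorems~\ref{thm:occup_single}--\ref{thm:occup_mix} (Perron--Frobenius and the group inverse of $\mathrm{Id}-\mathcal{T}^{\pi_\vphi}$ in the finite-state case, where this is precisely the fundamental-matrix machinery already invoked there). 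Mirroring those theorems, I would prove the matrix (finite state space) version cleanly and leave the continuous case, together with the regularity conditions needed to make the correspondence $\mathcal{G}\leftrightarrow\mathcal{F}$ fully rigorous, for future work; once that correspondence is fixed, the remaining three equalities reduce to the routine rewrites described in the first paragraph.
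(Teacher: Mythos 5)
Your proposal is correct and follows essentially the same route as the paper: the paper also sends each $g\in\mathcal{G}$ to the average-reward (differential) action-value function $f$ of $\pi_\vphi$ under reward $g$, which is exactly your Poisson equation $(\mathrm{Id}-\mathcal{T}^{\pi_\vphi})f = g-\E_{d_\vphi}[g]$ with $\E_{d_\vphi}[g]=R_{\pi_\vphi}$, then cancels the constant against $d_b$ being a probability measure and finishes with the same stationarity and factorization rewrites for the last two equalities. Your remarks on recentering and ergodicity make explicit what the paper compresses into ``under classical regularity conditions, $g$ and $f$ are in one-to-one correspondence,'' so no gap relative to the paper's own level of rigor.
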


\begin{proof}[Proof of Theorem~\ref{thm:ipm_undiscount}]
    Based on the definition of average reward and average Bellman Equation as in \citet{markovdp1994} and \citet{rlintro2018} Section 10.3, for deterministic reward function $g(\vs_t, \va_t)$ we have 
    \begin{equation*}
        \begin{split}
            R_\pi &\triangleq \lim_{T\rightarrow\infty } \E_{\tau \sim p_\pi(\tau)} \sbr{\frac{1}{T+1} \sum_{t=0}^T g(\vs_t, \va_t)} = \E_{(\vs, \va) \sim d^\pi(\vs, \va) } \sbr{g (\vs, \va)} \\
            Q^\pi(\vs, \va)&\triangleq \E_{\pi, \gP}\left[\sum_{t=0}^{\infty}\left(r_t - R_\pi \right) \given s_0=s, a_0=a\right]\\
            Q^\pi (\vs, \va) &- \E_{\vs' \sim \mathcal{P}\br{\cdot \given \vs, \va}, \va' \sim \pi\br{\cdot \given \vs'}} \sbr{Q^\pi (\vs', \va')} = g(\vs, \va) - R_\pi, \quad \forall \, \vs, \va
        \end{split}
    \end{equation*}

where $\tau$ is the trajectory.

We define $f$ as the the action-value function for policy $\pi_\vphi$ under the reward function $g(\vs,\va)$ and under the original environmental dynamics, satisfying,
\begin{equation*}
    f (\vs, \va) - \E_{\vs' \sim \mathcal{P}\br{\cdot \given \vs, \va}, \va' \sim \pi_\vphi \br{\cdot \given \vs'}} \sbr{f (\vs', \va')} = g(\vs, \va) - R_{\pi_{\vphi}}, \quad \forall \, \vs, \va.
\end{equation*}
Then for the IMP $D_\mathcal{G}\br{d_\vphi\br{\vs, \va}, d_b\br{\vs, \va}}$, we have,
\begin{equation*}
    \resizebox{1\textwidth}{!}{%
$
     \begin{aligned}
        D_{\mathcal{G}}\br{d_\vphi\br{\vs, \va}, d_b\br{\vs, \va}} &= \sup_{g \in \mathcal{G}} \abs{\E_{(\vs, \va) \sim d_\vphi\br{\vs, \va}}\sbr{g\br{\vs, \va}} - \E_{\br{\vs, \va} \sim d_b\br{\vs, \va}}\sbr{g\br{\vs, \va}}} \\
        &= \sup_{f \in \mathcal{F}} \abs{R_{\pi_{\vphi}} - \E_{\br{\vs, \va} \sim d_b\br{\vs, \va}}\sbr{f\br{\vs, \va} - \E_{\vs' \sim \mathcal{P}\br{\cdot \given \vs, \va}, \va'\sim \pi_\vphi\br{\cdot \given \vs'} }\sbr{f\br{\vs', \va'} }+ R_{\pi_{\vphi}}}} \\
        &= \sup_{f \in \mathcal{F}} \abs{\E_{\br{\vs, \va} \sim d_b\br{\vs, \va}}\sbr{f\br{\vs, \va} - \E_{\vs' \sim \mathcal{P}\br{\cdot \given \vs, \va}, \va'\sim \pi_\vphi\br{\cdot \given \vs'} }\sbr{f\br{\vs', \va'} }}} \\
            &= \sup_{f \in \mathcal{F}} \abs{\E_{(\vs, \va) \sim d_b\br{\vs, \va}}\sbr{f\br{\vs, \va}} - \E_{\br{\vs, \va, \vs'} \sim d_b\br{\vs, \va, \vs'}, \va'\sim \pi_\vphi\br{\cdot \given \vs'}}\sbr{f\br{\vs', \va'}}} \\
            &= \sup_{f \in \mathcal{F}} \abs{\E_{(\vs, \va) \sim d_b\br{\vs, \va}}\sbr{f\br{\vs, \va}} - \E_{\vs \sim d_b\br{\vs}, \va \sim \pi_\vphi\br{\cdot \given \vs}}\sbr{f\br{\vs, \va}}}\\
            &= \sup_{f \in \mathcal{F}} \abs{\E_{\vs \sim d_b(\vs)}\sbr{\E_{\va \sim \pi_b\br{ \cdot \given \vs}}\sbr{f\br{\vs, \va}} - \E_{\va \sim \pi_\vphi\br{\cdot \given \vs}}\sbr{f\br{\vs, \va}}}}.
    \end{aligned}
$
}
\end{equation*}
The second equality comes from the fact that under classical regularity conditions for the reward function class $\mathcal{G}$, reward function $g$ and its action-value function $f$ have one-to-one correspondence, with $f$ being the unique solution to the Bellman equation.
The second-to-last equality comes from the fact that for offline datasets collected by sequential rollouts, the marginal distribution of $\vs$ and $\vs'$ are the same. 
In other words, if we randomly draw $\vs \sim d_b(\vs)$, $\vs$ will almost always be the ``next state'' of some other state in the dataset.
\end{proof}
\begin{remark}
The function $f(\vs, \va)$ can be approximated using neural network under the same assumption on the reward function class $\mathcal{G}$ as the validity of neural-network approximation to the solution of the Bellman backup. 
\end{remark}

\begin{theorem}\label{thm:ipm_discount}
    For the discounted visitation frequencies $d_\vphi\br{\vs, \va}, d_b\br{\vs, \va}$, 
    \begin{equation*}
        \begin{split}
            D_{\mathcal{G}}\br{d_\vphi\br{\vs, \va}, d_b\br{\vs, \va}}
            =& \sup_{g \in \mathcal{G}} \abs{\E_{(\vs, \va) \sim d_\vphi\br{\vs, \va}}\sbr{g\br{\vs, \va}} - \E_{\br{\vs, \va} \sim d_b\br{\vs, \va}}\sbr{g\br{\vs, \va}}} \\
            =& \sup_{f \in \mathcal{F}} \abs{\E_{(\vs, \va) \sim d_b\br{\vs, \va}}\sbr{f\br{\vs, \va}} - \E_{\vs \sim d_b\br{\vs}, \va \sim \pi_\vphi\br{\cdot \given \vs}}\sbr{f\br{\vs, \va}}}
        \end{split}
    \end{equation*}
\end{theorem}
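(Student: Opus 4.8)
\emph{Proof proposal.} The plan is to follow the same template as the proof of Theorem~\ref{thm:ipm_undiscount}, but to replace the average-reward Bellman identity with the discounted performance-difference lemma (as used in \citet{trpo2015}), which conveniently cancels the initial-distribution terms that otherwise appear in the discounted setting. Fix the reward class $\mathcal{G}$, let $\mu_0$ denote the initial state distribution, and for each $g\in\mathcal{G}$ let $Q^{\pi_\vphi}_g$ be the $\gamma$-discounted action-value function of $\pi_\vphi$ under reward $g$ and the true dynamics $\mathcal{P}$, i.e., the unique bounded solution of $Q^{\pi_\vphi}_g(\vs,\va) = g(\vs,\va) + \gamma\,\E_{\vs'\sim\mathcal{P}(\boldsymbol{\cdot}\given\vs,\va),\,\va'\sim\pi_\vphi(\boldsymbol{\cdot}\given\vs')}\sbr{Q^{\pi_\vphi}_g(\vs',\va')}$; this exists and is unique because $\gamma<1$ makes the Bellman operator a contraction. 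Write $V^{\pi_\vphi}_g(\vs) = \E_{\va\sim\pi_\vphi(\boldsymbol{\cdot}\given\vs)}\sbr{Q^{\pi_\vphi}_g(\vs,\va)}$ and $A^{\pi_\vphi}_g = Q^{\pi_\vphi}_g - V^{\pi_\vphi}_g$.

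First I would record the elementary identity $(1-\gamma)\,\E_{\vs_0\sim\mu_0}\sbr{V^{\pi}_g(\vs_0)} = \E_{(\vs,\va)\sim d^{\pi}(\vs,\va)}\sbr{g(\vs,\va)}$, valid for any policy $\pi$ and immediate from the definition $d^{\pi}(\vs,\va) = (1-\gamma)\sum_{t\ge 0}\gamma^t\Pr_\pi(\vs_t=\vs,\va_t=\va)$. Applying this to $\pi_b$ and to $\pi_\vphi$ and subtracting gives $\E_{(\vs,\va)\sim d_b(\vs,\va)}\sbr{g} - \E_{(\vs,\va)\sim d_\vphi(\vs,\va)}\sbr{g} = (1-\gamma)\,\E_{\vs_0\sim\mu_0}\sbr{V^{\pi_b}_g(\vs_0) - V^{\pi_\vphi}_g(\vs_0)}$. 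Next I would invoke the performance-difference lemma, $\E_{\vs_0\sim\mu_0}\sbr{V^{\pi_b}_g(\vs_0) - V^{\pi_\vphi}_g(\vs_0)} = \tfrac{1}{1-\gamma}\,\E_{\vs\sim d_b(\vs),\,\va\sim\pi_b(\boldsymbol{\cdot}\given\vs)}\sbr{A^{\pi_\vphi}_g(\vs,\va)}$ (citable directly, or re-derivable by the same per-step telescoping already used for Theorem~\ref{thm:ipm_undiscount}), which turns the previous display into $\E_{d_b(\vs,\va)}\sbr{g} - \E_{d_\vphi(\vs,\va)}\sbr{g} = \E_{\vs\sim d_b(\vs),\,\va\sim\pi_b(\boldsymbol{\cdot}\given\vs)}\sbr{A^{\pi_\vphi}_g(\vs,\va)}$. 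Expanding $A^{\pi_\vphi}_g(\vs,\va) = Q^{\pi_\vphi}_g(\vs,\va) - \E_{\va'\sim\pi_\vphi(\boldsymbol{\cdot}\given\vs)}\sbr{Q^{\pi_\vphi}_g(\vs,\va')}$ and setting $f\triangleq Q^{\pi_\vphi}_g$, the right-hand side becomes $\E_{(\vs,\va)\sim d_b(\vs,\va)}\sbr{f(\vs,\va)} - \E_{\vs\sim d_b(\vs),\,\va\sim\pi_\vphi(\boldsymbol{\cdot}\given\vs)}\sbr{f(\vs,\va)}$, so $\abs{\E_{d_\vphi(\vs,\va)}\sbr{g} - \E_{d_b(\vs,\va)}\sbr{g}} = \abs{\E_{d_b(\vs,\va)}\sbr{f} - \E_{\vs\sim d_b(\vs),\va\sim\pi_\vphi}\sbr{f}}$.

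To finish I would take the supremum over $g\in\mathcal{G}$. The assignment $g\mapsto f = Q^{\pi_\vphi}_g$ is a bijection from $\mathcal{G}$ onto the induced class $\mathcal{F}\triangleq\cbr{Q^{\pi_\vphi}_g : g\in\mathcal{G}}$: it is well defined and injective by the contraction argument, and its inverse is the explicit ``inverse Bellman'' map $g(\vs,\va) = f(\vs,\va) - \gamma\,\E_{\vs'\sim\mathcal{P}(\boldsymbol{\cdot}\given\vs,\va),\,\va'\sim\pi_\vphi(\boldsymbol{\cdot}\given\vs')}\sbr{f(\vs',\va')}$. Therefore $D_{\mathcal{G}}(d_\vphi,d_b) = \sup_{g\in\mathcal{G}}\abs{\E_{d_\vphi}\sbr{g} - \E_{d_b}\sbr{g}} = \sup_{f\in\mathcal{F}}\abs{\E_{d_b}\sbr{f} - \E_{\vs\sim d_b(\vs),\va\sim\pi_\vphi}\sbr{f}}$, which is the asserted chain of equalities.

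I expect the main obstacle to be the bookkeeping around function classes and regularity rather than any hard estimate: one must verify that $\mathcal{F}$ is precisely the image of $\mathcal{G}$ under the policy-dependent Bellman inverse (so the two suprema range over genuinely matched sets), and that $\mathcal{G}$ carries enough regularity — boundedness and measurability — for $Q^{\pi_\vphi}_g$ to be an admissible witness function, which is the caveat already flagged in the remark after Theorem~\ref{thm:ipm_undiscount}. A minor but worthwhile point to spell out is that, unlike the undiscounted proof, this route does not invoke the ``next-state marginal equals state marginal'' property of sequential rollouts: the $\mu_0$-dependent terms cancel inside the performance difference $V^{\pi_b}_g - V^{\pi_\vphi}_g$, so no stationarity of the data distribution is required.
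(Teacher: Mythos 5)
Your proposal is correct and is essentially the paper's own argument: both define $f$ as the $\gamma$-discounted action-value function of $\pi_\vphi$ under reward $g$ (so that $g\leftrightarrow f$ is the Bellman bijection), use $(1-\gamma)\,\E_{\vs\sim\mu_0,\va\sim\pi_\vphi}\sbr{f(\vs,\va)}=\E_{(\vs,\va)\sim d_\vphi}\sbr{g(\vs,\va)}$, and telescope against the discounted occupancy of $\pi_b$ — you merely package that telescoping as the performance-difference lemma, whereas the paper derives it directly from the Bellman flow equation $\gamma\sum_{\vs}\mT_b(\vs'\given\vs)d_b(\vs)-d_b(\vs')+(1-\gamma)\mu_0(\vs')=0$. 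Your closing observation that, unlike the undiscounted case, no stationarity of the data marginals is needed here is accurate and matches the paper's proof.
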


\begin{remark}
We note that as is common in GAN and IPM literature, in Theorems~\ref{thm:ipm_undiscount} and \ref{thm:ipm_discount} we assume in theory that $\sup$ can be achieved on the inner maximization of the discriminator when fixing the generator. 
Then we optimize the generator using one-step of gradient descent.
This is implemented often by $k$ (= $2$ in our paper) steps of gradient ascent for discriminator before one-step of generator updates.  
Similar theory-practice gap also appears in the analysis of actor-critic algorithms, where one usually assumes an accurate critic function has been obtained before improving the policy. 
\end{remark}

\begin{proof}[Proof of Theorem~\ref{thm:ipm_discount}]
    Recall that the discounted visitation frequency for a policy $\pi$ is defined as 
\begin{equation*}\textstyle
    d^\pi\br{\vs, \va} \triangleq (1-\gamma) \sum_{t=0}^\infty \gamma^t \Pr_\pi\br{\vs_t = \vs, \va_t = \va}.
\end{equation*}
    Denote $\mT_{b}(\vs' \given \vs) = p_{\pi_b}\left(S_{t+1}=\vs' \given S_t = \vs \right) = \int_{\sA}\mathcal{P}\left(S_{t+1}=\vs' \given S_t=\vs, A_t=\va \right) \pi_b\left(\va \given \vs\right) \,d\va$.
    
    From \citet{breakingcurse2018}, Lemma 3, for $d_b(\vs)$ we have, 
    \begin{equation*}
        \gamma \sum_\vs \mT_{b}(\vs' \given \vs) d_b(\vs) - d_b(\vs') + (1-\gamma) \mu_0(\vs') = 0, \quad \forall \, \vs',
    \end{equation*}
    where $\mu_0$ is the initial state distribution. 
    Multiply $\pi_\vphi(\va'\given \vs')$ on both sides, we get,
    \begin{equation*}
        \gamma \sum_\vs \mT_{b}(\vs' \given \vs) d_b(\vs) \pi_\vphi(\va'\given \vs') - d_b(\vs') \pi_\vphi(\va'\given \vs') + (1-\gamma) \mu_0(\vs')\pi_\vphi(\va'\given \vs') = 0, \quad \forall \, \vs', \va'.
    \end{equation*}
    Denote $d_b\br{\vs, \va, \vs'} \triangleq d_b(\vs) \pi_b(\va \given \vs)\mathcal{P}(\vs'\given \vs, \va )$, for any integrable function $f(\vs, \va)$, we multiply both sides of the above equation by $f(\vs', \va')$ and summing over $\vs', \va'$, we get
    \begin{equation*}
        \gamma \E_{\br{\vs, \va, \vs'} \sim d_b, \va'\sim \pi_\vphi\br{\cdot \given \vs'} } \sbr{f\br{\vs',\va'}} - \E_{\vs\sim d_b, \va\sim \pi_\vphi(\cdot \given \vs)}\sbr{f\br{\vs,\va}}  + (1-\gamma) \E_{\vs\sim \mu_0, \va \sim \pi_\vphi(\cdot \given \vs)} \sbr{f\br{\vs, \va}} = 0.
    \end{equation*}
    
    In estimating the IPM $D_{\mathcal{G}}\br{d_\vphi\br{\vs, \va}, d_b\br{\vs, \va}}$, for any given $g(\vs, \va)$, define 
    \begin{equation*}
        \textstyle
        f(\vs, \va) = g(\vs, \va) + \gamma \E_{\vs' \sim \mathcal{P}(\cdot \given \vs, \va), \va' \sim \pi_\vphi(\cdot \given \vs')} \sbr{f(\vs', \va')}, \forall\, \vs, \va,
    \end{equation*}
    then we have
    \begin{equation*}
     \begin{aligned}
            \E_{\vs \sim \mu_0(\vs), \va \sim \pi_\vphi(\cdot \given \vs)}\sbr{f\br{\vs, \va}} &= \E_{\vs \sim \mu_0(\vs), \va \sim \pi_\vphi(\cdot \given \vs)}\sbr{ g(\vs, \va) + \gamma \E_{\vs' \sim \mathcal{P}(\cdot \given \vs, \va), \va' \sim \pi_\vphi(\cdot \given \vs')} \sbr{f(\vs', \va')} } \\
            &= \E\sbr{\sum_{t=0}^\infty \gamma^t g\br{\vs_t, \va_t} \given \vs_0 \sim \mu_0(\vs), \va_t \sim \pi_\vphi(\cdot \given \vs_t), \vs_{t+1} \sim \mathcal{P}\br{\cdot \given \vs_t, \va_t} } \\
            &= \br{1 - \gamma}^{-1} \E_{\br{\vs, \va} \sim d_{\vphi}\br{\vs,\va}}\sbr{g\br{\vs, \va}},
        \end{aligned}
    \end{equation*}
    based on the definition of $d_{\vphi}(\vs, \va) =: d^{\pi_\vphi}\br{\vs,\va}$ stated above.

    Putting the above three equalities together, for discounted visitation frequencies $d_\vphi\br{\vs, \va}, d_b\br{\vs, \va}$, we have
    \begin{equation*}
        \resizebox{\textwidth}{!}{%
$
     \begin{aligned}
            &D_{\mathcal{G}}\br{d_\vphi\br{\vs, \va}, d_b\br{\vs, \va}} \\
            =& \sup_{g \in \mathcal{G}} \abs{\E_{(\vs, \va) \sim d_\vphi\br{\vs, \va}}\sbr{g\br{\vs, \va}} - \E_{\br{\vs, \va} \sim d_b\br{\vs, \va}}\sbr{g\br{\vs, \va}}} \\
            =& \sup_{f \in \mathcal{F}}\abs{
            (1-\gamma) \E_{\vs \sim \mu_0(\vs), \va \sim \pi_\vphi(\cdot \given \vs)}\sbr{f\br{\vs, \va}} - \E_{(\vs, \va) \sim d_b(\vs, \va)} \sbr{ f(\vs, \va) - \gamma \E_{\vs' \sim \mathcal{P}(\cdot \given \vs, \va), \va' \sim \pi_\vphi(\cdot \given \vs')} \sbr{f(\vs', \va')} }
            }  \\
            =& \sup_{f \in \mathcal{F}} \abs{\E_{(\vs, \va) \sim d_b\br{\vs, \va}}\sbr{f\br{\vs, \va}} - \E_{\vs \sim d_b\br{\vs}, \va \sim \pi_\vphi\br{\cdot \given \vs}}\sbr{f\br{\vs, \va}}}.
        \end{aligned}
$
}
    \end{equation*}
Indeed, $f\br{\vs, \va}$ is the action-value function for policy $\pi_\vphi$ under the reward $g(\vs, \va)$ and under the original environmental dynamics, and can be approximated using neural network under the classical regularity assumptions on the reward function class $\mathcal{G}$.
The second equality comes from the fact that under the classical regularity conditions for $\mathcal{G}$, reward function $g$ and its action-value function $f$ have one-to-one correspondence, with $f$ being the unique solution to the Bellman equation.
\end{proof}

\begin{proof}[Proof of Theorem~\ref{thm:jsd_lower_bound}] \label{pf:jsd_lower_bound}
    For part (1), we have
    \begin{equation*}
     \begin{aligned}
    & \mathrm{JSD}\sbr{\pi_b\br{\va \given \vs} d_b\br{\vs}, \pi_\vphi\br{\va\given \vs} d_b\br{\vs}} \\
    = &\frac{1}{2} \max_D  \left\{\mathbb{E}_{\vs \sim d_b(\vs), \va\sim\pi_b(\cdot \given \vs)} \sbr{\log D\br{\br{\vs,\va}}} + \mathbb{E}_{\tilde{\vs} \sim  d_b(\tilde{\vs}), \tilde{\va} \sim  \pi_\vphi( \cdot \given \tilde{\vs} ) } \sbr{\log(1-D((\tilde{\vs}, \tilde{\va})))} + \log4  \right\}.
\end{aligned}
    \end{equation*}
    From \citet{gan2014}, the optimal discriminator $D^*(\vs, \va)$ for fixed $\pi_\vphi$ is 
    \begin{equation*}
        D^*(\vs, \va) = \frac{\pi_b\br{\va \given \vs} d_b\br{\vs}}{\pi_b\br{\va \given \vs} d_b\br{\vs}+ \pi_\vphi\br{\va\given \vs} d_b\br{\vs}} = \frac{\pi_b\br{\va \given \vs}}{\pi_b\br{\va \given \vs} + \pi_\vphi\br{\va\given \vs} }.
    \end{equation*}
    Moreover, we have
    \begin{equation*}
        \begin{split}
            & \E_{\vs \sim d_b\br{\vs}}\left[\mathrm{JSD}\br{\pi_b(\va \given \vs}, \pi_\vphi(\va \given \vs)) \right] \\
            = &  \frac{1}{2}\left\{\mathbb{E}_{\vs\sim d_b(\vs)} \left[ \max_{D_\vs} \left\{\mathbb{E}_{\va \sim \pi_b(\cdot \given \vs)} [\log D_\vs(\va)]  + \mathbb{E}_{\tilde{\va} \sim \pi_\vphi(\cdot \given \vs)}[\log(1-D_\vs(\tilde{\va}))]  \right\} \right]   + \log 4\right\},
        \end{split}
    \end{equation*}
    where $\forall \, \vs, D_\vs$ denote the state-dependent discriminator to distinguish the state-conditional action distributions  $\pi_b\br{\cdot \given \vs}$ and $\pi_\vphi\br{\cdot \given \vs}$.
    
    For fixed $\pi_\vphi$, the optimal state-dependent discriminator for each state, $D^*_\vs(\va)$, is 
    \begin{equation*}
       D^*_\vs(\va) =  \frac{\pi_b\br{\va \given \vs}}{\pi_b\br{\va \given \vs} + \pi_\vphi\br{\va\given \vs} } = D^*(\vs, \va), \quad \forall\, \vs .
    \end{equation*}
    
    Therefore, we have
    \begin{equation*}
     \begin{aligned}
            & \mathrm{JSD}\sbr{\pi_b\br{\va \given \vs} d_b\br{\vs}, \pi_\vphi\br{\va\given \vs} d_b\br{\vs}} \\
            = &\frac{1}{2} \cbr{\mathbb{E}_{\vs \sim d_b(\vs), \va\sim\pi_b(\cdot \given \vs)} \sbr{\log D^*\br{\br{\vs,\va}}} + \mathbb{E}_{\tilde{\vs} \sim  d_b(\tilde{\vs}), \tilde{\va} \sim  \pi_\vphi( \cdot \given \tilde{\vs} ) } \sbr{\log(1-D^*((\tilde{\vs}, \tilde{\va})))} + \log4}  \\
            = & \frac{1}{2} \cbr{
            \int\int \log D^*\br{\br{\vs,\va}} d\,\pi_b(\va \given \vs) d\, d_b(\vs) + \int\int \log(1-D^*((\tilde{\vs}, \tilde{\va}))) d\,\pi_\vphi( \tilde{\va} \given \tilde{\vs} ) d\,d_b(\tilde{\vs}) + \log 4
            }\\
            = & \frac{1}{2} \cbr{
            \int\int \log D^*_\vs\br{\va} d\,\pi_b(\va \given \vs) d\, d_b(\vs) + \int\int \log(1-D^*_\vs(\tilde{\va})) d\,\pi_\vphi( \tilde{\va} \given \vs ) d\,d_b(\vs) + \log 4 
            }\\
            =&  \frac{1}{2} \cbr{
            \int \sbr{\int \log D^*_\vs\br{\va} d\,\pi_b(\va \given \vs)  + \int \log(1-D^*_\vs(\tilde{\va})) d\,\pi_\vphi( \tilde{\va} \given \vs )} d\,d_b(\vs) + \log 4 
            }\\
            =&  \frac{1}{2}\left\{\mathbb{E}_{\vs\sim d_b(\vs)} \left[ \max_{D_\vs} \left\{\mathbb{E}_{\va \sim \pi_b(\cdot \given \vs)} [\log D_\vs(\va)]  + \mathbb{E}_{\tilde{\va} \sim \pi_\vphi(\cdot \given \vs)}[\log(1-D_\vs(\tilde{\va}))]  \right\} \right]   + \log 4\right\} \\
            =& \E_{\vs \sim d_b\br{\vs}}\left[\mathrm{JSD}\br{\pi_b(\va \given \vs}, \pi_\vphi(\va \given \vs)) \right].
        \end{aligned}
    \end{equation*}
    
    For part (2), in theory, for the state-action joint-matching scheme we have
    \begin{equation*}
     \begin{aligned}
            & \mathrm{JSD}\sbr{\pi_b\br{\va \given \vs} d_b\br{\vs}, \pi_\vphi\br{\va\given \vs} d_b\br{\vs}} \\
    = &\frac{1}{2} \max_D  \left\{\mathbb{E}_{\vs \sim d_b(\vs), \va\sim\pi_b(\cdot \given \vs)} \sbr{\log D\br{\br{\vs,\va}}} + \mathbb{E}_{\tilde{\vs} \sim  d_b(\tilde{\vs}), \tilde{\va} \sim  \pi_\vphi( \cdot \given \tilde{\vs} ) } \sbr{\log(1-D((\tilde{\vs}, \tilde{\va})))} + \log4  \right\},\\
\approx & \frac{1}{2} \max_D  \left\{\frac{1}{N}\sum_{i=1}^N [\log D((\vs_i,\va_i))] + \frac{1}{N}\sum_{j=1}^N [\log(1-D((\tilde{\vs}_j, \tilde{\va}_j))) ] + \log4  \right\}
        \end{aligned}
    \end{equation*}
where we draw 
$
(\vs_i,\va_i)\stackrel{iid}\sim d_b(\vs)\pi_b(\va \given \vs)
$ for $i=1,\ldots,N$
and 
$
(\tilde \vs_j,\tilde \va_j)\stackrel{iid}\sim d_b(\vs)\pi_\vphi(\va \given \vs)
$ for $j=1,\ldots,N$.

By contrast, for the standard policy-matching scheme, in theory we have
\begin{equation*}
    \begin{split}
        & \E_{\vs \sim d_b\br{\vs}}\left[\mathrm{JSD}\br{\pi_b(\va \given \vs}, \pi_\vphi(\va \given \vs)) \right] \\
            = &  \frac{1}{2}\left\{\mathbb{E}_{\vs\sim d_b(\vs)} \left[ \max_{D_\vs} \left\{\mathbb{E}_{\va \sim \pi_b(\cdot \given \vs)} [\log D_\vs(\va)]  + \mathbb{E}_{\tilde{\va} \sim \pi_\vphi(\cdot \given \vs)}[\log(1-D_\vs(\tilde{\va}))]  \right\} \right]   + \log 4\right\}\\
\ge &  \frac{1}{2} \max_{D_{\vtheta}}  \left\{\mathbb{E}_{\vs\sim d_b(\vs)} \left[ \mathbb{E}_{\va\sim\pi_b(\cdot \given \vs)} [\log D_{\vtheta}(\va \given \vs)]  + \mathbb{E}_{\tilde{\va} \sim \pi_\vphi(\cdot \given \vs)}[\log(1-D_{\vtheta}(\tilde{\va} \given \vs))]  \right]   + \log 4\right\}
\\
\approx& \frac{1}{2} \max_{D_{\vtheta}}  \left\{\frac{1}{N}\sum_{i=1}^N [\log D_{\vtheta}(\va_i\given \vs_i) +\log(1-D_{\vtheta}(\tilde{\va}_i \given \vs_i)) ] + \log4  \right\}
    \end{split}
\end{equation*}
where we draw $(\vs_i,\va_i,\tilde \va_i)\stackrel{iid}\sim d_b(\vs) \pi_b(\va\given \vs) \pi_\vphi(\tilde \va \given \vs)$ for $i=1,\ldots,N$. 
Note that the inequality arises due to amortizing all state-dependent optimal discriminator $D_\vs^*$ into a single parametric discriminator $D_{\vtheta}(\cdot \given \vs)$ and exchanging the orders of expectation and maximization. 
Thus, in theory, the classical policy-matching scheme is optimizing towards a lower bound of its desired objective.
\end{proof}

\begin{theorem} \label{thm:fd_lower_bound}
    For the integral probability metrics $D_{\mathcal{G}}$, under the state-action joint-matching scheme, the discriminator is optimized towards estimating the desired $D_{\mathcal{G}}$; while under the classical policy-matching scheme, the discriminator is optimized towards estimating a lower bound of the desired $D_{\mathcal{G}}$.
\end{theorem}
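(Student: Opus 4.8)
The plan is to transcribe the proof of part~(2) of Theorem~\ref{thm:jsd_lower_bound}, trading the GAN value functional for the IPM witness functional. For a fixed state $\vs$ write $\mathcal{G}_{\vs} \triangleq \cbr{g(\vs, \boldsymbol{\cdot}) : g \in \mathcal{G}}$ for the restriction of the witness class to the action coordinate; the state-conditional (classical) matching scheme Eq.~\eqref{eq:cond_true_fake_sample} uses a state-dependent witness and hence nominally targets $\E_{\vs\sim d_b(\vs)}\sbr{D_{\mathcal{G}_{\vs}}\br{\pi_b(\boldsymbol{\cdot}\given\vs), \pi_\vphi(\boldsymbol{\cdot}\given\vs)}}$, the IPM analogue of Eq.~\eqref{eq:cond_gan_objective}, whereas the state-action joint-matching scheme Eq.~\eqref{eq:true_fake_sample} targets $D_{\mathcal{G}}\br{\pi_b(\va\given\vs)d_b(\vs), \pi_\vphi(\va\given\vs)d_b(\vs)}$, the analogue of Eq.~\eqref{eq:joint_gan_objective} (by Theorem~\ref{thm:ipm_undiscount} this coincides with the IPM between the undiscounted state-action visitations). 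As in the remark after Theorem~\ref{thm:ipm_discount}, I assume the inner supremum over the witness class is attained, and I invoke the excerpt's standing regularity to exchange $\E_{\vs}$ with a measurable state-wise selection of witnesses.

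First I would treat the joint-matching half, which is immediate: by Eq.~\eqref{eq:true_fake_sample} the data sample $\vy$ and the generator sample $\vx$ are i.i.d.\ draws from $d_b(\vs)\pi_b(\va\given\vs)$ and from $d_b(\tilde\vs)\pi_\vphi(\tilde\va\given\tilde\vs)$ with $\tilde\vs$ resampled, so the adversarial inner maximization $\sup_{g\in\mathcal{G}}\abs{\tfrac1N\sum_i g(\vy_i) - \tfrac1N\sum_j g(\vx_j)}$ is, in the population limit, exactly $\sup_{g\in\mathcal{G}}\abs{\E_{(\vs,\va)\sim d_b\pi_b}\sbr{g} - \E_{(\vs,\va)\sim d_b\pi_\vphi}\sbr{g}} = D_{\mathcal{G}}\br{\pi_b d_b, \pi_\vphi d_b}$; thus the witness is optimised toward the desired IPM using only i.i.d.\ samples from $\sD$, with one action per sampled state from $\pi_\vphi$ sufficing.

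For the classical half I would isolate the amortisation gap, paralleling the inequality in the proof of Theorem~\ref{thm:jsd_lower_bound}~(2). Faithfully estimating $\E_{\vs}\sbr{D_{\mathcal{G}_{\vs}}(\pi_b(\boldsymbol{\cdot}\given\vs),\pi_\vphi(\boldsymbol{\cdot}\given\vs))} = \E_{\vs}\sbr{\sup_{h_{\vs}\in\mathcal{G}_{\vs}} \abs{\E_{\pi_b}\sbr{h_{\vs}} - \E_{\pi_\vphi}\sbr{h_{\vs}}}}$ would require, at each $\vs$, the per-state maximiser $h_{\vs}^{*}$ and hence many action draws $\va\sim\pi_b(\boldsymbol{\cdot}\given\vs)$ at that state — exactly what a continuous-state offline dataset cannot supply. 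Replacing $\cbr{h_{\vs}}_{\vs}$ by a single amortised witness $g(\vs,\va)\in\mathcal{G}$ turns the objective into $\sup_{g\in\mathcal{G}}\abs{\E_{(\vs,\va)\sim d_b\pi_b}\sbr{g} - \E_{(\vs,\va)\sim d_b\pi_\vphi}\sbr{g}}$; writing $\Delta_g(\vs)\triangleq \E_{\va\sim\pi_b(\boldsymbol{\cdot}\given\vs)}\sbr{g(\vs,\va)} - \E_{\va\sim\pi_\vphi(\boldsymbol{\cdot}\given\vs)}\sbr{g(\vs,\va)}$ and chaining Jensen's inequality $\abs{\E_{\vs}\sbr{\Delta_g(\vs)}}\le\E_{\vs}\sbr{\abs{\Delta_g(\vs)}}$ with $g(\vs,\boldsymbol{\cdot})\in\mathcal{G}_{\vs}$ and then taking $\sup_g$ yields
$$\sup_{g\in\mathcal{G}}\abs{\E_{(\vs,\va)\sim d_b\pi_b}\sbr{g} - \E_{(\vs,\va)\sim d_b\pi_\vphi}\sbr{g}} \;\le\; \E_{\vs}\sbr{\sup_{h_{\vs}\in\mathcal{G}_{\vs}}\abs{\E_{\pi_b}\sbr{h_{\vs}} - \E_{\pi_\vphi}\sbr{h_{\vs}}}} = \E_{\vs}\sbr{D_{\mathcal{G}_{\vs}}(\pi_b(\boldsymbol{\cdot}\given\vs),\pi_\vphi(\boldsymbol{\cdot}\given\vs))}.$$
Hence the classical scheme's discriminator is optimised toward (at most) a lower bound of its desired IPM, while the joint scheme's discriminator is optimised toward the desired IPM itself.

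The routine pieces are the measurable-selection argument behind the $\E_{\vs}$/$\sup$ interchange (already assumed in the excerpt's regularity) and the observation that $\mathcal{G}_{\vs}$ is the natural witness class for a state-conditional discriminator. The point I expect to be the crux, and would spell out in a short remark, is why this gap is genuine rather than a capacity artefact: unlike the JSD case there is no IPM analogue of Theorem~\ref{thm:jsd_lower_bound}~(1), so $\sup_{g\in\mathcal{G}}\E_{\vs}\sbr{\Delta_g(\vs)}$ and $\E_{\vs}\sbr{\sup_{h_{\vs}\in\mathcal{G}_{\vs}}(\cdot)}$ need not agree even in the infinite-data limit — a globally constrained witness $g\in\mathcal{G}$ cannot realise an arbitrary state-wise selection of per-state witnesses, and the outer absolute value permits cross-state cancellation; a two-state instance makes the strictness explicit, mirroring the paper's point that matching each conditional on a single data point is unreliable.
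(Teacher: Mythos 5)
Your proposal is correct and follows essentially the same route as the paper's proof: the joint-matching half is the immediate population-limit identity, and the classical half is the same two-step amortization/Jensen chain (exchanging the per-state supremum with $\E_{\vs}$ and pulling the absolute value outside the expectation), just written bottom-up rather than top-down. Your explicit restricted class $\mathcal{G}_{\vs}$ and the closing remark on when the bound is strict are slightly more careful than the paper's notation but do not change the argument.
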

\begin{proof}[Proof of Theorem~\ref{thm:fd_lower_bound}]
    Here we assume dealing with general IPM, $D_{\mathcal{G}}$.
    
    The goal of discriminator-learning under the joint-matching scheme is
    \begin{equation*}
        \begin{split}
            & D_{\mathcal{G}}\sbr{\pi_b\br{\va \given \vs} d_b\br{\vs}, \pi_\vphi\br{\va\given \vs} d_b\br{\vs}} \\
    = & \sup_{g \in \mathcal{G}}  \abs{\mathbb{E}_{\vs \sim d_b(\vs), \va\sim\pi_b(\cdot \given \vs)} \sbr{g\br{\br{\vs,\va}}} - \mathbb{E}_{\tilde{\vs} \sim  d_b(\tilde{\vs}), \tilde{\va} \sim  \pi_\vphi( \cdot \given \tilde{\vs} ) } \sbr{g((\tilde{\vs}, \tilde{\va}))}},\\
\approx & \sup_{g \in \mathcal{G}} \abs{\frac{1}{N}\sum_{i=1}^N [g((\vs_i,\va_i))] - \frac{1}{N}\sum_{j=1}^N [g((\tilde{\vs}_j, \tilde{\va}_j)) ]}
        \end{split}
    \end{equation*}
where we draw 
$
(\vs_i,\va_i)\stackrel{iid}\sim d_b(\vs)\pi_b(\va \given \vs)
$ for $i=1,\ldots,N$
and 
$
(\tilde \vs_j,\tilde \va_j)\stackrel{iid}\sim d_b(\vs)\pi_\vphi(\va \given \vs)
$ for $j=1,\ldots,N$.
Thus the joint-matching scheme optimizes the discriminator towards the desired objective.

In theory, the goal of discriminator-learning under the classical policy-matching scheme is
\begin{equation*}
    \begin{split}
        & \E_{\vs \sim d_b\br{\vs}}\left[D_{\mathcal{G}}\br{\pi_b(\va \given \vs}, \pi_\vphi(\va \given \vs)) \right] \\
            = &  \mathbb{E}_{\vs\sim d_b(\vs)} \left[ \sup_{g_\vs \in \mathcal{G}} \abs{\mathbb{E}_{\va \sim \pi_b(\cdot \given \vs)} [g_\vs(\va)]  - \mathbb{E}_{\tilde{\va} \sim \pi_\vphi(\cdot \given \vs)}[g_\vs(\tilde{\va})]} \right]   \\
\ge &  \sup_{g_\vtheta \in \mathcal{G}} \cbr{ \mathbb{E}_{\vs\sim d_b(\vs)} \abs{\mathbb{E}_{\va\sim\pi_b(\cdot \given \vs)} [g_\vtheta(\va \given \vs)]  - \mathbb{E}_{\tilde{\va} \sim \pi_\vphi(\cdot \given \vs)}[g_\vtheta(\tilde{\va} \given \vs)]} }
\\
\ge &  \sup_{g_\vtheta \in \mathcal{G}} \cbr{ \abs{\mathbb{E}_{\vs\sim d_b(\vs)} \sbr{\mathbb{E}_{\va\sim\pi_b(\cdot \given \vs)} [g_\vtheta(\va \given \vs)]  - \mathbb{E}_{\tilde{\va} \sim \pi_\vphi(\cdot \given \vs)}[g_\vtheta(\tilde{\va} \given \vs)]}} }
\\
\approx& \sup_{g_\vtheta \in \mathcal{G}} \abs{\frac{1}{N}\sum_{i=1}^N \sbr{g_\vtheta(\va_i\given \vs_i) - g_\vtheta(\tilde{\va}_i \given \vs_i)}  }
    \end{split}
\end{equation*}
where we draw $(\vs_i,\va_i,\tilde \va_i)\stackrel{iid}\sim d_b(\vs) \pi_b(\va\given \vs) \pi_\vphi(\tilde \va \given \vs)$ for $i=1,\ldots,N$ and $g_\vs$ is the state-dependent witness function for distinguishing $\pi_b\br{\cdot \given \vs}$ and $\pi_\vphi\br{\cdot \given \vs}$. 
Note that the inequality arises due to amortizing all state-dependent optimal $g_\vs$ into a single parametric witness function $g_{\vtheta}(\cdot \given \vs)$ and exchanging the orders of expectation and maximization. 
Thus, in theory, the classical policy-matching scheme optimizes the discriminator towards estimating a lower bound of the desired objective.
\end{proof}

\section{Technical Details} \label{sec:tech_detail}
\subsection{Toy Experiment} \label{sec:toy_detail}
Denote the total sample size as $N_{\mathrm{total}}$, we follow the convention to construct the eight-Gaussian dataset as in Algorithm~\ref{alg:eight_gaussian}.
Here we use $N_{\mathrm{total}}=2000$.

% \vspace{-3mm}
\begin{algorithm}[H]
\captionsetup{font=small}
\caption{\small Constructing the Eight-Gaussian Dataset}
\begin{algorithmic}
\label{alg:eight_gaussian}
    \STATE {\bfseries Input:} Total sample size $N_{\mathrm{total}}$.
    \STATE {\bfseries Output:} Generated dataset $\sD_{\mathrm{Gaussian}}$.
    \WHILE{Dataset size $< N_{\mathrm{total}}$}
    \STATE Draw a random center $\vc$ uniformly
    \begin{equation*} \textstyle
        \vc \sim \cbr{\br{\sqrt{2}, 0}, \br{-\sqrt{2}, 0}, \br{0, \sqrt{2}}, \br{0, -\sqrt{2}}, \br{1,1}, \br{1,-1}, \br{-1,1}, \br{-1,-1}}.
    \end{equation*}
    \STATE Sample datapoint $\vx = (x,y) \sim \gN\br{\vc, 2 \times 10^{-4} \cdot \mI_2}$.
    Store $\vx$ in the dataset.
    \ENDWHILE
\end{algorithmic}%}
\end{algorithm}
% \vspace{-5mm}

We are interested in the 2-D eight-Gaussian dataset because \textbf{(a)} the conditional distribution of $p(y\given x)$ is multi-modal in many $x$; and \textbf{(b)} interpolation is needed to fill-in the blanks between Gaussian-centers, where a smooth-interpolation into a circle is naturally expected.

To rephrase this dataset into offline reinforcement learning setting, we define $x$ as state and the corresponding $y$ as action.
Note that in the behavior cloning task, the information of reward, next state, and the episodic termination is not required.
Hence, the generated dataset $\sD_{\mathrm{Gaussian}}$ can serve as an offline RL dataset readily applicable to train behavior cloning policies. 

In order to compare the ability to approximate the behavior policy by the KL loss and the JSD loss, the Gaussian policy and the implicit policy, the classical policy-matching scheme and the proposed state-action joint-matching, we fit a conditional VAE (``CVAE''), a Gaussian generator conditional GAN (``G-CGAN'') and a conditional GAN (``CGAN'') using the policy-matching approach similar to \citet{brac2019}.
We fit a conditional GAN (``GAN'') using basic state-action joint-matching strategy.
As discussed in \Secref{sec:basic}, the major distinction between ``CGAN'' and ``GAN'' is that the former uses the same states in constructing the generator samples and the data samples while the later resamples states.

The network architecture of our conditional VAE is as follows.
\begin{multicols}{2}
[Conditional Variational Auto-encoder (CVAE) in Toy Experiment]
\begin{minipage}{0.45\textwidth}
Encoder
\begin{verbatim}
Linear(state_dim+action_dim, H)
BatchNorm1d(H)
ReLU
Linear(H, H//2)
BatchNorm1d(H//2)
ReLU
mean = Linear(H//2, latent_dim)
log_std = Linear(H//2, latent_dim)
\end{verbatim}
\end{minipage}

\begin{minipage}{0.45\textwidth}
Decoder
\begin{verbatim}
Linear(state_dim+latent_dim, H)
BatchNorm1d(H)
ReLU
Linear(H, H//2)
BatchNorm1d(H//2)
ReLU
Linear(H//2, action_dim)
\end{verbatim}
\end{minipage}
\end{multicols}
with hidden dimension ${\tt H} = 100$ and latent dimension ${\tt latent\_dim} = 50$.
CVAE is trained for $1200$ epochs with a mini-batch size of $100$ and random seed $0$, using the mean-squared-error as the reconstruction loss, and the Gaussian-case closed-form formula in \citet{vae2013} for the KL term.

The network architecture of our conditional GAN, used in ``CGAN'' and ``GAN,'' is as follows.
\begin{multicols}{2}
[Conditional Generative Adversarial Nets (CGAN) in Toy Experiment]
\begin{minipage}{0.45\textwidth}
Generator
\begin{verbatim}
Linear(state_dim+z_dim, H)
BatchNorm1d(H)
ReLU
Linear(H, H//2)
BatchNorm1d(H//2)
ReLU
Linear(H//2, action_dim)
\end{verbatim}
\end{minipage}

\begin{minipage}{0.45\textwidth}
Discriminator
\begin{verbatim}
Linear(state_dim+action_dim, H)
LeakyReLU(0.1)
Linear(H, H//2)
LeakyReLU(0.1)
Linear(H//2, 1)
\end{verbatim}
\end{minipage}
\end{multicols}
where the structure of {\tt BatchNorm1d, LeakyReLU} follows \citet{dcgan2016}.
Here we again use ${\tt H} = 100, {\tt z\_dim} = 50$.
Conditional GAN is trained for $2000$ epochs with a mini-batch size of 100 and random seed $0$.
We follow \cite{dcgan2016} to train CGAN using Adam optimizer with $\beta_1 = 0.5$.

The network architecture of our Gaussian-generator version of conditional GAN is as follows.
\begin{verbatim}
Generator
Linear(state_dim, H)
BatchNorm1d(H)
ReLU
Linear(H, H//2)
BatchNorm1d(H//2)
ReLU
mean = Linear(H//2, action_dim), log_std = Linear(H//2, action_dim)
\end{verbatim} 
with the discriminator and other technical details the same as CGAN.
This Gaussian-generator version of CGAN is again trained for $2000$ epochs with a mini-batch size of $100$, random seed $0$, and $\beta_1 = 0.5$ in the Adam optimizer.

Our test set is formed by a random sample of $2000$ new states ($x$) from $\sbr{-1.5, 1.5}$ together with the states in the training set.
The performance on the test set thus shows both the concentration on the eight centers and the smooth interpolation between centers, which translates into a good and smooth fit to the behavior policy.
Figure~\ref{fig:toy} shows the training set (``Truth'') and the kernel-density-estimate plot of each methods.

\subsection{Reinforcement Learning Experiments}\label{sec:rl_details}

{\bfseries Computing Facility.}
Our experiments are run on a computing server that has four Nvidia GeForce GTX 1080 Ti GPUs.

{\bfseries Datasets.}
We use the continuous control tasks provided by the D4RL dataset \citep{fu2021d4rl} to conduct algorithmic evaluations.
Due to limited computational resources, we select therein the ``medium-expert,'' ``medium-replay,'' and ``medium'' datasets for the Hopper, HalfCheetah, Walker2d tasks in the Gym-MuJoCo domain, which are commonly used % as they are popular
benchmarks in prior work \citep{bcq2019, bear2019, brac2019, cql2020}.
We follow the literature \citep{mabe2021,decisiontrans2021,iql2021} to not test on the ``random'' and ``expert'' datasets as they are known as less practical \citep{bremen2021} and can be respectively solved by directly using standard off-policy RL algorithms \citep{rem2020} and the behavior cloning algorithms.
We note that in offline RL applications, one typically know the quality of the offline datasets, \textit{e.g.}, whether it is collected by random or expert policy.
Further, a comprehensive benchmarking of prior offline-RL algorithms on the ``expert” datasets is currently unavailable in the literature, which is out of the scope of this paper.
Apart from the Gym-MuJoCo domain, we also consider the Maze2D domain\footnote{We use the tasks ``maze2d-umaze,'' ``maze2d-medium,'' and ``maze2d-large.''} for the non-Markovian data-collecting policy, and the Adroit tasks\footnote{We use the tasks ``pen-human,'' ``pen-cloned,'' ``pen-expert,'' and ``door-expert.''} \citep{adroit2018} for their sparse reward-signal and high dimensionality.

{\bfseries Evaluation Protocol.}
In all the experiments, we follow \citet{fu2021d4rl} to use the ``v0'' version of the datasets in the Gym-MuJoCo and Adroit domains.
In our preliminary study, we find that the results of some baseline algorithms can be unstable across epochs in some datasets, even towards the end of training.
To reduce the instability in evaluation, for our algorithm, we report the mean and standard deviation of the last five rollouts across five random seeds $\cbr{0,1,2,3,4}$.
For the baselines that we rerun, we follow \citet{fu2021d4rl} to rerun under three random seeds $\cbr{0,1,2}$ and under the recommended hyperparameter setting, including per-dataset tuned hyperparameters if available.
We run our method for $1000$ epochs, where each epoch consists of $1000$ mini-batch stochastic gradient descent steps.
We rollout our method and baselines for $10$ episodes after each epoch of training.

\textbf{Terminal states.} In practice, the rollouts contained in the offline dataset have finite horizon, and thus special treatment is needed per appearance of the terminal states in calculating the Bellman update target.
We follow the standard treatment \citep{dqn2013,rlintro2018} to define the update target $y$ as 
\begin{equation*}
    \widetilde{Q}\br{\vs,\va} = \begin{cases}
    r\br{\vs,\va} + \gamma \widetilde{Q}'\br{\vs', \va'} & \text{ if $\vs$ is a non-terminal state} \\
    r\br{\vs,\va} & \text{ if $\vs$ is a terminal state}
    \end{cases},
\end{equation*}
where $\widetilde{Q}'\br{\vs', \va'}$ refers to the expectation term in Eq.~\eqref{eq:critic_target} for basic algorithm (\Secref{sec:basic}) or the expectation term in Eq.~\eqref{eq:critic_smooth_target} for the enhanced versions with state-smoothing at the Bellman Backup (\Secref{sec:enhance_comp}).

\textbf{Implicit policy implementation.} For simplicity, we follow \citet{samplegenerative2016} to choose the noise distribution $p_\vz\br{\vz}$ as the multivariate standard normal distribution, where the dimension of $\vz$ is conveniently chosen as $\mathrm{dim}\br{\vz} = {\tt min(10, \, state\_dim // 2)}$.
To sample from the implicit policy, for each state $\vs$, we first sample independently $\vz \sim \gN\br{\vzero, \mI}$.
We then concatenate $\vs$ with $\vz$ and feed the resulting $\sbr{\vs, \vz}$ into the deterministic policy network to generate stochastic actions.
To sample from a small region around the next state $\vs'$ (\Secref{sec:enhance_comp}), we keep the original $\vs'$ and repeat it additionally $N_B$ times.
For each of the $N_B$ replications, we add an independent Gaussian noise $\vepsilon \sim \gN(\vzero, \sigma_B^2 \mI)$.
The original $\vs'$ and its $N_B$ noisy replications are then fed into the implicit policy to sample the corresponding action.

% For fair comparison, we use the same network architecture as the official implementation of the BCQ algorithm, which will be stated in detail in Sections \ref{sec:tech_gan}. 
Due to limited computational resources, we leave a fine-tuning of the noise distribution $p_\vz\br{\vz}$, the network architectures, and the optimization hyperparameters for future work, which also leaves room for further improving our results.

%% warm start
\textbf{Warm-start step.} For a more stable training of the policy, we adopt the warm start strategy \citep{cql2020,idac2020}.
Specifically, in the first $N_{\mathrm{warm}}$ epochs, the policy is trained to minimize $\gL_g$ only. 
The learning rate $\eta_\vphi$ in the warm-start step is the same as the following epochs that also maximize the expected Q-values.

\subsubsection{GAN Joint Matching} \label{sec:tech_gan}
In approximately matching the JSD between the current and the behavior policies via GAN, a crucial step is to stably and effectively train the GAN structure.
With training techniques developed over the years, GAN can be stably trained with satisfactory mode coverage on data with moderate dimension, \textit{e.g.}, Figure~\ref{fig:toy}.
We adopt the following tricks from literature.
\begin{itemize}[leftmargin=*]
    \item To provide stronger gradients early in training, rather than training the policy $\pi_\vphi$ to minimize
    \begin{equation*} \textstyle
        \E_{\vx} \sbr{\log\br{1-D_\vw(\vx)}}
    \end{equation*}
    we follow \citet{gan2014} to train $\pi_\vphi$ to maximize
    \begin{equation*} \textstyle
        \E_{\vx} \sbr{\log\br{D_\vw(\vx)}}
    \end{equation*}
    \item Motivated by \citet{dcgan2016}, we use LeakyReLU activation in both the generator and discriminator, with default {\tt negative\_slope=0.01}.
    \item To stabilize the training, we follow \citet{dcgan2016} to use a reduced momentum term $\beta_1 = 0.4$ in the Adam optimizer \citep{adam2014}.
    \item We follow \citet{dcgan2016} to use actor and discriminator learning rate $\eta_\vphi = \eta_\vw = 2 \times 10^{-4}$.
    \item To avoid overfitting of the discriminator, we are motivated by \citet{improvetechgan2016} and \citet{gantutorial2017} to use one-sided label smoothing with soft and noisy labels.
    Specifically, the labels for the data sample $\vy$ is replaced with a random number between $0.8$ and $1.0$, instead of the original $1$.
    No label smoothing is applied for the generator sample $\vx$, and therefore their labels are all $0$.
    \item The loss function for training the discriminator in GAN is the Binary Cross Entropy between the labels and the outputs from the discriminator.
\end{itemize}

Furthermore, motivated by TD3 \citep{td32018} and GAN (\Secref{sec:gan_prelim}), we update $\pi_\vphi(\boldsymbol{\cdot}\given \vs)$ once per $k$ updates of the critics and discriminator. 

Table \ref{table:gan_param} shows the hyperparameters for our GAN joint-matching framework.
Note that \emph{several simplifications are made to minimize hyperparameter tuning}, such as fixing $\eta_\vphi = \eta_\vw$ as in \citet{dcgan2016} and $\sigma_B = \sigma_J \triangleq \sigma$.
% We also fix $N_a = 1$ to ease computation.

We comment that many of these hyperparameters can be set based on literature, for example, we use $\eta_\vphi = \eta_\vw = 2 \times 10^{-4}$ as in \citet{dcgan2016}, $\eta_\vtheta = 3 \times 10^{-4}$ and $N_{\mathrm{warm}} = 40$ as in \citet{cql2020}, $\lambda=0.75$ as in \citet{bear2019}, and policy frequency $k=2$ as in \citet{td32018}.
Unless specified otherwise, the same hyperparameters are used across all datasets.

\begin{table}[ht]
\captionsetup{font=small}
\caption{\small Default Hyperparameters for GAN joint matching.} \label{table:gan_param}
\centering
\begin{tabular}{@{}ll@{}}
\toprule
Hyperparameter & Value \\ \midrule
Optimizer      & Adam \cite{adam2014}  \\
Learning rate $\eta_\vtheta$ &  $3 \times 10^{-4}$  \\ 
Learning rate $\eta_\vphi$, $\eta_\vw$ & $2 \times 10^{-4}$      \\ 
Log Lagrange multiplier $\log \alpha$ for non-Adroit datasets & $4.0$ \\
Log Lagrange multiplier $\log \alpha$ for Adroit datasets & $8.0$ \\
Evaluation frequency & $10^3$ \\
Training iterations & $10^6$ \\
Batch size & $512$ (as in \citet{optidice2021}) \\
Discount factor & $0.99$ \\
Target network update rate $\beta$ & $0.005$ \\
Weighting for clipped double Q-learning $\lambda$ & $0.75$ \\
Noise distribution $p_\vz(\vz)$ & $\gN\br{\vzero, \mI}$ \\
Standard deviations for state smoothing $\sigma_B = \sigma_J\triangleq \sigma$ & $3 \times 10^{-4}$ \\
Number of smoothed states in Bellman backup $N_B$ & $50$ \\
% Number of actions $\hat\va$ at each $\hat \vs$ $N_a$ & $1$ \\
Number of epochs for warm start $N_{\mathrm{warm}}$ & $40$ \\
Policy frequency $k$ & $2$ \\
Random seeds & $\cbr{0,1,2,3,4}$ \\
\bottomrule
\end{tabular}
\end{table}

Below we state the network architectures of the actor, critic, and discriminator.
Note that we use a pair of critic networks with the same architecture to perform clipped double Q-learning.

\begin{multicols}{2}% 2-column layout
  \begin{minipage}{0.45\textwidth}
    Actor
\begin{verbatim}
Linear(state_dim+noise_dim, 400)
LeakyReLU
Linear(400, 300)
LeakyReLU
Linear(300, action_dim)
max_action * tanh
\end{verbatim}
  \end{minipage}
  
\begin{minipage}{0.45\textwidth}
Critic 
\begin{verbatim}
Linear(state_dim+action_dim, 400)
LeakyReLU
Linear(400, 300)
LeakyReLU
Linear(300, 1)
\end{verbatim}
\end{minipage}
\end{multicols}

\begin{multicols}{2}
[Discriminator in GAN]
\begin{minipage}{0.45\textwidth}
\begin{verbatim}
Linear(state_dim+action_dim, 400)
LeakyReLU
Linear(400, 300)
LeakyReLU
Linear(300, 1)
Sigmoid
\end{verbatim}
\end{minipage}

\begin{minipage}{0.45\textwidth}
Note that % the network architectures follow from \citet{bcq2019} and that 
all the LeakyReLU activation uses the default {\tt negative\_slope=0.01}.
\end{minipage}
\end{multicols}

\subsubsection{Construction of the Penalty Coefficient in GAN-Joint-$\alpha$}  \label{sec:tech_alpha}
We combined Eq.~\ref{eq:policy_target} with the definition of the penalty coefficient in TD3+BC \citep{td3bc2021} as
\begin{equation*} \label{eq:policy_target_td3} 
    \arg\min_{\vphi} - \lambda \E_{\vs \sim \sD} \E_{\va \sim \pi_\vphi(\boldsymbol{\cdot}\given \vs)}\sbr{\min_{j=1,2} Q_{\vtheta_j}(\vs, \va)} + \gL_g(\vphi), \quad \lambda = \frac{\alpha}{Q_{avg}}
\end{equation*}
where we use \emph{$\alpha=10$ across all datasets}.
$Q_{avg}$ is soft-updated based on each mini-batch $\gB$ as
\begin{equation*}
    Q_{avg} = \beta \cdot \frac{1}{\abs{\gB}} \sum_{\br{\vs, \va} \in \gB} \abs{Q\br{\vs, \va}} + (1 - \beta) \cdot Q_{avg}.
\end{equation*}
Here we modify the update scheme of $Q_{avg}$ in \citet{td3bc2021} to allow for soft-update.

\subsubsection{Results of CQL} \label{sec:tech_cql}
We note that the official CQL GitHub repository does not provide hyperparameter settings for the Maze2D and Adroit domain of tasks.
For datasets in these two domains, we train a CQL agent using five hyperparameter settings: four recommended Gym-MuJoCo settings and one recommended Ant-Maze setting.
We then calculate the average normalized-return over the random seeds $\cbr{0,1,2}$ for each hyperparameter settings and per-dataset select the best results from these five settings.
We comment that this per-dataset tuning may give CQL some advantage on the Maze2D and Adroit domains, and is a compensation for the missing of recommended hyperparameter settings.
For the Gym-MuJoCo domain, we use the recommentation by \citet{cql2020}.
% follow  to use the recommended hyperparameter setting across all datasets.

\subsubsection{Ablation Study on Gaussian Policy}\label{sec:tech_gaussian_policy}
The network architecture of the Gaussian policy variant that we used in the ablation study (\Secref{sec:abaltion_study}) follows the common practice \citep{sac2018,cql2020}.
\begin{verbatim}
Gaussian Policy
Linear(state_dim, 400)
LeakyReLU
Linear(400, 300)
LeakyReLU
mean = Linear(300, action_dim)
log_std = Linear(300, action_dim)
\end{verbatim}
Critics and discriminator are the same as the implicit policy variant (Appendix \ref{sec:tech_gan}).

% action selection
For action-selection from the Gaussian policy, a given state $\vs$ is first mapped to the mean $\vmu(\vs)$ and standard deviation vector $\vsigma(\vs)$. 
A raw action is sampled as $\va_{\mathrm{raw}} \sim \gN\br{\vmu(\vs), \mathrm{diag}\br{\vsigma^2(\vs)}}$.
Finally, $\va_{\mathrm{raw}}$ is mapped into the action space as $\mathrm{max\_action} \times \tanh(\va_{\mathrm{raw}})$.

% others, same as GAN joint matching
For fair comparison, other technical details, including the training procedure and hyperparameter setting, are exactly the same as the implicit policy case (Appendix \ref{sec:tech_gan}).
%%%%%%%%%%%%%%%%%%%%%%%%%%%%%%%%%%%%%%%%%%%%%%%%%%%%%%%%%%%%%%%%%%%%%%%%%%%%%%%%%%%%%%%%%%%%%%%%
%%%%%%%%%%%%%%%%%%%%%%%%%%%%%%%%%%%%%%%%%%%%%%%%%%%%%%%%%%%%%%%%%%%%%%%%%%%%%%%
%%%%%%%%%%%%%%%%%%%%%%%%%%%%%%%%%%%%%%%%%%%%%%%%%%%%%%%%%%%%%%%%%%%%%%%%%%%%%%%

\end{document}